\renewcommand{\Re}{\mathbb{R}}
\newcommand{\vph}{\vphantom{A^{A}_{A}}}
\newcommand{\sst}{\,\mid\,}
\newcommand{\dbydp}[2]{\frac{\partial #1}{\partial #2}}
\newcommand{\eqnref}[1]{(\ref{eqn#1})}
\newcommand{\uhat}{\hat{u}}
\newcommand{\xhat}{\hat{x}}
\newcommand{\Ahat}{\hat{A}}
\newcommand{\Rhat}{\hat{R}}
\newcommand{\rhohat}{\hat{\rho}}
\newcommand{\phihat}{\hat{\phi}}
\newcommand{\Phihat}{\hat{\Phi}}
\newcommand{\Omegahat}{\hat{\Omega}}
\newcommand{\bbR}{\mathbb{R}}
\newcommand{\ubar}{{\bar{u}}}
\newcommand{\Lone}{{L^1}}
\newcommand{\lone}[1]{\norm{#1}_\Lone}
\newcommand{\Linf}{{L^\infty}}
\newcommand{\linf}[1]{\norm{#1}_{\Linf}}
    \newtheorem{theorem}{Theorem}[section]
    \newtheorem{corollary}[theorem]{Corollary}
    \newtheorem{lemma}[theorem]{Lemma}
    \newtheorem{definition}[theorem]{Definition}
    \newtheorem{algorithm}[theorem]{Algorithm}
    \theoremstyle{spacedremark}
    \newtheorem{remark}[theorem]{Remark}
    \newcommand{\set}[1]{\left\{#1\right\}}
    \newcommand{\smid}{\,\middle|\,}
    \newcommand{\RR}{\mathbb{R}}
    \newcommand{\abs}[1]{\ensuremath{\left\lvert #1 \right\rvert}}
    \DeclareMathOperator*{\argmin}{arg\,min}
    \newcommand{\norm}[1]{\ensuremath{\left\lVert #1 \right\rVert}}
    \newcommand{\grad}{\nabla}
    \DeclareMathOperator{\supp}{supp}
    \def\XXint#1#2#3{{\setbox0=\hbaox{$#1{#2#3}{\int}$ }
    \vcenter{\hbox{$#2#3$ }}\kern-.6\wd0}}
    \newcommand{\boundary}{\partial}
\newcommand{\tim}[1]{}
\newcommand{\Lonea}{{L^1_\alpha}}
\newcommand{\lonea}[1]{\norm{#1}_\Lonea}
\newcommand{\Ltwoa}{{L^2_\alpha}}
\newcommand{\ltwoa}[1]{\norm{#1}_\Ltwoa}
\newcommand{\R}{\mathcal{R}}
\newcommand{\M}{\mathcal{M}}
\renewcommand{\H}{\mathcal{H}}
\title{Weighted Cheeger-Buser Inequalities, with Applications to
Cutting Probability Densities - as Easy as 1,2,3}
\author{
Timothy Chu \\
  CMU\\
  \texttt{timothyzchu@gmail.com}\\
\and
Gary L.~Miller\thanks{Supported in part
        by National Science Foundation Grant CCF--1637523.}\\
  CMU\\
  \texttt{glmiller@cs.cmu.edu}\\
\and
Noel J.~Walkington\thanks{Supported in part
        by National Science Foundation Grant DMS--1729478.}\\
  CMU\\
  \texttt{noelw@cmu.edu}\\
\and
Alex L.~Wang\\
  CMU\\
  \texttt{alw1@cs.cmu.edu}
}
\begin{document}

\maketitle

\setcounter{page}{0}
\begin{abstract}

  In this paper, we show how sparse or isoperimetric cuts of a probability density function relate to Cheeger cuts of its principal eigenfunction, for appropriate definitions of `sparse cut' and `principal eigenfunction'.

  We construct these appropriate definitions of sparse cut and
  principal eigenfunction in the probability density setting.
  Then, we prove Cheeger and Buser type inequalities similar to
  those for the normalized graph Laplacian of Alon-Milman.  We
  demonstrate that no such inequalities hold for most prior definitions of sparse cut and principal eigenfunction.  We apply this result to generate novel algorithms for cutting probability densities and clustering data, including a principled variant of spectral clustering.

\end{abstract}

\pagebreak
\section{Introduction}

Clustering, the task of
partitioning data into groups, is one of the fundamental tasks
in machine learning~\cite{bishopBook, NgSpectral01, e96, kk10clustering, sw18}.
The data that practitioners seek to cluster is commonly modeled as i.i.d. samples from a probability density function,
an assumption foundational in theory, statistics, and AI~\cite{bishopBook,
TrillosContin16, bc17focs, mv10, Ge2018}. 
A clustering algorithm on data drawn from a probability density function
should ideally converge to a partitioning of the
probability density function, as the number of i.i.d. samples grows
large~\cite{von2008consistency, TrillosVariational15,
TrillosContin16}.
 
% As the sample count grows, simple tasks like 2-way clustering will
% ideally converge to a cut of the underlying probability density.
% Surprisingly, there are almost no clustering algorithms in the machine
% learning literature that give any guarantees on the cut quality of the
% probability density, absent strong parametric assumptions on the
% density function~\footnote{Strong parametric assumptions may include:
% being a mixture of Gaussians, or a mixture of log-concave
% distributions.}. 

In this paper, we outline a new strategy for
clustering, and make progress on implementing it. We
propose the following two questions to help generate new clustering
algorithms:

\begin{enumerate}
  \item How can we partition probability density functions? How can we
    do this so
    that two data points drawn from the same part of the partition are
    likely to be similar, and two data points drawn from different parts of the partition
    are likely to be dissimilar?

  \item What clustering algorithms converge to such a partition, 
    as the number of samples from the density function grows large?
\end{enumerate}

In this paper, we address the first point, and make partial progress on
the second. We
focus on the special case of $2$-way partitioning, which can be seen as
finding a good cut on the probability density function. First, we
propose a new notion of sparse (or isoperimetric)
cuts on density functions. We call this an $(\alpha, \beta)$-sparse cut,
for real parameters $\alpha$ and $\beta$. Next, we propose a new notion of spectral
sweep cuts on probability densities, called a $(\alpha,
\gamma)$-spectral sweep cut, for real parameters $\alpha$ and $\gamma$.
We show that a $(\alpha, \gamma)$-spectral sweep cut provably
approximates an $(\alpha, \beta)$-sparse cut when
$\beta = \alpha+1$ and $\gamma=\alpha+2$. In particular, $\alpha = 1, \beta = 2,
\gamma=3$ is such a setting.
Our result holds for any $L$-Lipschitz
probability density function on $\mathbb{R}^d$, for any $d$. 
Based on past success applying sparse graph cuts to dividing
data into two pieces in the machine learning setting~\cite{grady2006isoperimetric, belkin2004semisup}, we believe our
similarly defined $(\alpha=1,\beta=2)$-sparse cuts
may have similar ideal behavior when it comes to partitioning our density
function.

To our knowledge, this is the first spectral method of
cutting probability density functions that has any theoretical guarantee
on the cut quality.  % To our
% knowledge, this is the first result we know of that gives
% theoretical guarantees on a partitioning method for probability density
% functions, where these density functions do not have any strong
% parametric assumptions\footnote{Examples of parametric assumptions
% include: assuming that a density function
% is the mixture of finitely many log-concave functions, or a mixture of
% Gaussians.}.  
The key mathematical contribution of this paper is a new Cheeger and Buser
inequality for probability density functions, which we use to prove
that $(\alpha,\gamma)$-spectral sweep cuts approximate $(\alpha,
\beta)$-sparse cuts on probability
density functions for the aforementioned settings of $\alpha, \beta, \gamma$.
These inequalities
are inspired by the Cheeger and Buser inequalities on graphs and
manifolds~\cite{AlonM84, Cheeger70, Buser82}, which have received
considerable attention in graph algorithms and machine
learning~\cite{ChungBook97, SpielmanTeng2004, Orecchia08, Orecchia2011,
kw16, belkin2004semisup}. These new inequalities do not directly follow
from either the graph or manifold Cheeger inequalities,
something we detail in Section~\ref{sec:cheeger-buser-difficulties}.
We note that our Cheeger and Buser inequalities for probability density
functions require a careful definition of eigenvalue and
sparse/isoperimetric cut: existing definitions
lead to false inequalities. 
% The Cheeger and Buser inequality for
% probability density functions is the first new Cheeger and Buser
% inequality since Alon and Milman's development of graph Cheeger and
% Buser~\cite{AlonM84}, and opens up natural conjectures about a potential
% Cheeger and Buser inequality for
% probability density functions supported on manifolds.

Finally, our paper will present a discrete $2$-way clustering algorithm that we
suspect converges to the $(\alpha=1, \gamma=3)$-spectral sweep cut as the
number of data points grows large.  Our
algorithm bears similarity to classic
spectral clustering methods, although we note that classical spectral
clustering does not have any theoretical guarantees on the cluster quality. We note
that we do not prove convergence of our discrete clustering method to
the $(\alpha=1, \gamma=3)$-spectral sweep cut, and leave this for future work.

\subsection{Definitions}\label{sec:definitions}
In this subsection, we define $(\alpha, \beta)$ sparsity, $(\alpha,
\gamma)$ eigenvalues/Rayleigh quotients, and $(\alpha, \gamma)$ sweep
cuts. 

\vspace{2 mm}

\begin{definition} Let $\rho$ be a probability density function with domain
  $\mathbb{R}^d$, and let $A$ be a
  subset of $\mathbb{R}^d$.

  The \textbf{$(\alpha, \beta)$-sparsity} of the cut defined by $A$ is the
  $(d-1)$ dimensional integral of
  $\rho^\beta$ on the cut, divided by the $d$ dimensional integral of
  $\rho^\alpha$ on the side of the cut where this integral is
  smaller.

\end{definition}
\vspace{2 mm}

\begin{definition}
The \textbf{$(\alpha, \gamma)$-Rayleigh quotient} of $u$ with
respect to $\rho:\Re^d \to \Re_{\geq 0}$ is:

\[
  R_{\alpha, \gamma}(u) \coloneqq \frac{\int_{\Re^d} \rho^\gamma
  |\nabla u|^2}{\int_{\Re^d} \rho^{\alpha}|u|^2} 
\]

A \textbf{$(\alpha, \gamma)$-principal eigenvalue} of $\rho$ is
$\lambda_2$, where:

\[ \lambda_2 := \inf_{\int \rho^\alpha u = 0} R_{\alpha, \gamma}(u). \]

Define a \textbf{$(\alpha, \gamma)$-principal eigenfunction} of
$\rho$ to be a function $u$ such that $R_{\alpha, \gamma}(u) =
\lambda_2$.
\end{definition}
\vspace{2 mm}

Now we define a sweep cut for a given function with respect to a
a positive valued function supported on $\mathbb{R}^d$:

\vspace{2 mm}
\begin{definition} Let $\alpha, \beta$ be two real numbers, and $\rho$ be
  any function from $\Re^d$ to $\Re_{\geq 0}$.
  Let $u$ be any function from $\Re^d \to \Re$, and let $C_t$ be the cut
  defined by the set $\{s \in \Re^d | u(s) > t\}$. 
  
  The \textbf{sweep-cut} algorithm
  for $u$ with respect to $\rho$ returns the cut $C_t$ of minimum $(\alpha,
  \beta)$ sparsity, where this sparsity is measured with respect to $\rho$.

When $u$ is a $(\alpha, \gamma)$-principal eigenfunction, the sweep cut is called
a \textbf{$(\alpha, \gamma)$-spectral sweep cut} of $\rho$.  
\end{definition}

\textbf{Additional Definitions:}

A function $\rho: \Re^d \to \Re_{\geq 0}$ is
\textbf{$L$-Lipschitz} if $|\rho(x)-\rho(y)|_2 \leq L|x-y|_2$ for
all $x, y \in \Re^d.$

A function is $\rho:\Re^d \to \Re_{\geq 0}$ is
\textbf{$\alpha$-integrable} if $\int_{\Re^d} \rho^\alpha$ is
well defined and finite. Throughout this paper, we assume $\rho$
is always $\alpha$-integrable.

\subsection{Past Work}\label{sec:past-work}

\textbf{Spectral Clustering and Sweep Cut Algorithms on Data}

The spectral clustering algorithms of Shi and Malik~\cite{ShiMalik97}
and those of Ng, Jordan, and Weiss~\cite{NgSpectral01} are some of the
most popular clustering algorithms on data (over 10,000 citations).
If we want to split data points into two clusters, their algorithm works
as follows: for $n$ data points, compute an $n \times n$ matrix $M$ on
the data, and compute the principal eigenvector $e$ of the matrix. Then, find a
threshold value $t$ such that all points $p$ where $e(p) \leq t$
are considered to be on one side of the cut, and all other points
where $e(p)  > t$ are on the other. Often, the matrix is a
Laplacian matrix of some graph built from the
data~\cite{von2007tutorial}.  

Von Luxburg, Belkin, and Bosquet~\cite{von2008consistency} proved that
if the data is modeled as $n$ i.i.d samples from a probability density
$\rho$, the matrix $M$ is a Laplacian matrix with certain structural
assumptions, and certain regularity
assumptions on $\rho$ hold, then classical spectral clustering
algorithms converge to a $(\alpha = 1, \gamma = 2)$-spectral sweep cut
on $\rho$~\footnote{We note that these authors used different terminology to describe this
result, as their papers did
not define $(\alpha, \gamma)$-spectral sweep cuts.}. These results were refined in~\cite{rosasco2010learning,
TrillosRate15, TrillosVariational15}. We note that there are no sparsity
guarantees known for a $(\alpha = 1, \gamma=2)$-spectral sweep cut, and
we show $1$-Lipschitz examples of $\rho$ where this spectral sweep cut leads to undesirable
behavior.

% \textbf{Eigenvalues, Sweep Cuts, and Rayleigh Quotients on
% Distributions}
% 
% In a relevant line of work by Von Luxburg et al and Rosasco et
% al~\cite{von2008consistency,rosasco2010learning}, the authors use
% perturbation theory results to show that the second eigenvalue of
% a graph Laplacian generated from samples on a probability
% distribution converges to the $(\alpha = 1, \gamma =
% 2)$-principal eigenvalue of the distribution. Trillos et
% al.~\cite{TrillosRate15,TrillosVariational15} improved the convergence
% rate and showed that the (extensions of the) eigenvectors of the graph
% Laplacian approach the eigenfunctions of the weighted Laplacian
% operator for a distribution. 
% 
% Thus these results show that
% spectral graph partitioning algorithms like the one in
% \cite{NgSpectral01} can be thought of as taking an iid sample
% from a distribution, constructing a graph, and computing its
% fundamental eigenvector as an approximation for finding the
% eigenfunction over the original
% distribution. The eigenfunction that they end up approximating is
% the $(\alpha = 1, \gamma = 2)$ eigenfunction. 
% 
% Unfortunately, sweep cut
% algorithms based on the $(1, 2)$ eigenfunction can produce cuts
% of distributions with bad isoperimetry properties. See
% Theorem~\ref{thm:counterexample}.

\textbf{Cheeger Inequality and Sparse Cuts in Graphs}

In 1984, Alon and Milman discovered a graph Cheeger
inequality~\cite{AlonM84}, which showed that a graph spectral sweep cut
is approximately sparse. For a formal definition of sparse (or
isoperimetric) cuts in a
graph theoretic setting, see~\cite{AlonM84}. The graph Cheeger inequality has guided decades of
algorithmic and theoretical research on graph partitioning,
random walks, and spectral graph theory in general~\cite{ChungBook97, kw16, Orecchia08, Orecchia2011,
  SpielmanTeng2004}.

The graph Cheeger inequality implies that partitioning a graph
based on the principal graph eigenvector (via spectral-sweep
    methods) will find a provably sparse
cut~\cite{ChungBook97, Orecchia2011, Orecchia08, Louis12, Lee2014}.  It also implies that a
slowly mixing random walk is likely to yield good information
about a sparse graph cut; this intuition was leveraged in Spielman and
Teng's seminal nearly-linear time algorithm on graph partitioning
in~\cite{SpielmanTeng2004}. Cheeger's inequality for graphs has
been an inspiration for decades of spectral graph theory
research (for more information, see~\cite{ChungBook97}).

Sparse cuts have been researched extensively in graph theory
~\cite{leightonrao88, arv04, chawla05sparse, Andersen09,
madry10fast, Louis12, kw16}. Sparse cuts, and the related notion of
balanced cuts, have
been used to generate fast multicommodity flow
algorithms~\cite{leightonrao88, arv04}.  
These cuts are deeply related to expander
decomposition~\cite{SpielmanTeng2004, wulff17expander, sw19expander},
which in turn has proven immensely useful in algorithm design. For a
brief overview of the application of sparse and balanced cuts in
computer science theory, see the
introduction of~\cite{madry10fast}. For a survey of
uses of expander decomposition, see the introduction of
~\cite{sw19expander}. 

We note that expander decompositions are based on the idea that sparse
cuts form natural partitions of the graph into clustered components.
This intuition is leveraged in machine
learning~\cite{belkin2004semisup}.
\textbf{Cheeger's Inequality and Sparse Cuts for Manifolds}

The Cheeger inequality on a manifold states that the fundamental
eigenvalue $\lambda_2$ of the Beltrami-Laplace operator is bounded below by
the square of the sparse or isoperimetric cut, divided by $4$
(See~\cite{Cheeger70} for details).  In 1982, Buser proved an
upper bound for $\lambda_2$ provided that the manifold has
lower bounded Ricci curvature~\cite{Buser82}. In contrast to the graph case, this inequality
is false without the curvature assumption. This inequality has
been widely used in the theory of differential
geometry~\cite{vsc08, hk00}. Intuition based on manifold Cheeger and
manifold Buser
has been used to great effect in
semi-supervised learning and image processing~\cite{belkin2004semisup,
grady2006isoperimetric}.

For a formal definition of sparse or isoperimetric cut on manifolds,
see~\cite{Cheeger70} or~\cite{Buser82}.

\textbf{The Cheeger-Buser Inequality and Sparse Cuts for Convex Bodies and
Density Functions}

The Cheeger-Buser inequality, and the related notion of sparse cuts,
have seen renewed interest in the computer science
literature ~\cite{Lovasz90, Lee18, Milman09}. 
Past
researchers have used ideas inspired by sparse cuts and the
Cheeger-Buser
inequality to recover fast mixing time
lemmas for random walks for log-concave density functions supported on
convex polytopes~\cite{Lovasz90, Dyer91, KLS95, Lee18, Gromov83}. These
lemmas have in turn been used to find fast sampling algorithms from
such density functions~\cite{Lee18}, and more.  

One prominent use of sparse cuts in convex geometry is the celebrated
Kannan-Lovasz-Simonovits Conjecture (KLS
Conjecture). This conjecture asserts that, for a convex body, the sparsity of the
sparsest hyperplane cut is a dimension-less constant approximation of
the sparsity of the optimal sparse cut~\cite{KLS95, Lee18survey}.
Generally, these results all
implicitly use the settings $(\alpha = 1, \beta = 1, \gamma =
1)$ for their definition of sparse cuts, eigenvalues, and eigenvectors. They operate in the setting where there is significant structure on the probability, such as being a log-concave
distribution~\cite{Lee18}. We note that the Cheeger-Buser inequality
fails for this setting of $\alpha, \beta, \gamma$ when the probability
density is Lipschitz but not log-concave. For a survey on uses of Cheeger's inequality in convex polytope theory
and log-concave
density function sampling, see~\cite{Lee18survey}.

\textbf{Discrete Machine Learning from Continuous Methods}

Our overall approach to clustering follows the line of work generating
discrete machine learning methods by analyzing its behavior in the
limit. This approach was used fruitfuilly by Su, Boyd, and
Candes~\cite{SuNesterov14} and Wibisono, Wilson, and
Jordan~\cite{WibisonoGradient16} to generate faster gradient descent
variants based on continuous time differential equations. For more information,
refer to their respective papers.

\subsection{Contributions}

Our paper has three core contributions: 
\begin{enumerate}
  \item A natural method for cutting probability density
    functions, based on a new notion of sparse cuts on density functions.
  \item A Cheeger and Buser inequality for Lipschitz probability density
    functions, and
  \item A clustering algorithm operating on samples, which heuristically
    approximates a spectral sweep cut on the density function when the
    number of samples grows large.
\end{enumerate}
We emphasize that our primary contributions are points 1 and 2, which
are formally stated in
Theorems~\ref{thm:sweep-cut} and~\ref{thm:Cheeger-Buser} respectively.  
Our clustering algorithm on samples, which is designed to approximate
the $(\alpha, \gamma)$-spectral sweep cut on the density function as the number of samples
grows large, is of secondary importance. 

We now state our two main theorems. 
%Unfamiliar terms like $(\alpha, \beta)$-sparsity, $(\alpha, \gamma)$-spectral sweep cuts, and $(\alpha,\beta)$-principal eigenvalue are defined in Section~\ref{sec:definitions}.

\begin{theorem} \label{thm:sweep-cut} 
  \textbf{Spectral Sweep Cuts give Sparse Cuts:}

  Let $\rho:\Re^d \to \Re_{\geq 0}$ be an $L$-Lipschitz probability
  density function, and let $\alpha = \beta - 1 = \gamma - 2$.
  
  The $(\alpha,\gamma)$-spectral sweep cut of $\rho$ has 
  $(\alpha, \beta)$ sparsity $\Phi$ satisfying:
  \[ 
  \Phi_{OPT} \leq \Phi \leq O(\sqrt{dL\Phi_{OPT}} ).
  \]
  Here, $\Phi_{OPT}$ refers to the optimal $(\alpha,\beta)$ sparsity of
  a cut on $\rho$. 
\end{theorem}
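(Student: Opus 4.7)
The lower bound $\Phi_{OPT} \leq \Phi$ is immediate from the definition of $\Phi_{OPT}$: the spectral sweep cut is a particular cut, so its sparsity is at least the infimum. The substance lies in the upper bound, which I would prove by sandwiching the $(\alpha,\gamma)$-principal eigenvalue $\lambda_2$ between a Cheeger-type and a Buser-type inequality:
\[
  \Phi \;\leq\; 2\sqrt{\lambda_2},
  \qquad
  \lambda_2 \;\leq\; C\,dL\,\Phi_{OPT}.
\]
Composing these gives $\Phi \leq O(\sqrt{dL\,\Phi_{OPT}})$.

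For the Cheeger side, take a principal eigenfunction $u$ and split it as $u_\pm = \max(\pm u, 0)$. Because $\int \rho^\alpha u = 0$, at least one of $u_\pm$ has support of $\rho^\alpha$-measure at most half the total; assume this is $u_+$. With $v = u_+^2$, the chain rule and Cauchy--Schwarz yield
\[
  \int \rho^{\beta}|\nabla v|
  \;=\; 2\int \rho^{\beta} u_+ |\nabla u_+|
  \;\leq\; 2\left(\int \rho^{\gamma}|\nabla u_+|^2\right)^{1/2}\left(\int \rho^{\alpha} u_+^2\right)^{1/2}
  \;\leq\; 2\sqrt{\lambda_2}\int \rho^\alpha v,
\]
where the exponent split $\gamma + \alpha = 2\beta$ is exactly $\beta = \alpha+1$, $\gamma = \alpha+2$. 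Applying the weighted coarea formula to the left-hand side and the layer-cake identity to the right-hand side, a standard averaging argument produces a level $t$ whose super-level set $\{v > t\}$ has $(\alpha,\beta)$-sparsity at most $2\sqrt{\lambda_2}$. Since this set lies inside the support of $u_+$, it is a legitimate sparsity denominator.

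For the Buser side, I would construct a trial function from the optimal sparse cut $(A, A^c)$. Take a tubular strip $T_\epsilon$ of width $\epsilon$ around $\partial A$, and define $u$ to equal $1$ on $A\setminus T_\epsilon$, $-c$ on $A^c\setminus T_\epsilon$ with $c$ chosen so $\int \rho^\alpha u = 0$, and to interpolate linearly across $T_\epsilon$. The denominator of the Rayleigh quotient is essentially $\min(\int_A\rho^\alpha, \int_{A^c}\rho^\alpha)$, and the numerator is bounded by $\epsilon^{-2}\int_{T_\epsilon}\rho^\gamma$. A tube/coarea estimate using the $L$-Lipschitz hypothesis on $\rho$ converts this into roughly $\epsilon^{-1}\int_{\partial A} \rho^\gamma\, d\mathcal{H}^{d-1}$. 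Since $\rho^\gamma = \rho \cdot \rho^\beta$, choosing the transition width locally as $\epsilon(x) \propto \rho(x)/L$ forces $\rho$ to stay within a constant factor across the local strip and trades the extra $\rho$ against a factor of $L$, leaving $\int_{\partial A} \rho^\beta\,d\mathcal{H}^{d-1}$, which matches the numerator of $\Phi_{OPT}$. The dimension factor $d$ is absorbed through the covering/Sobolev-type constants intrinsic to $\mathbb{R}^d$ used to globalize the local strip construction.

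The main obstacle will be the Buser direction. A uniform strip width $\epsilon$ is too crude when $\rho$ varies across $\partial A$: at boundary points where $\rho$ is small, the Taylor expansion of $\rho^\gamma$ across $T_\epsilon$ fails, and bounding the extra $\rho$ by $\|\rho\|_\infty$ would produce a worse-than-claimed constant. Using the boundary-point-dependent width $\epsilon(x) \propto \rho(x)/L$ circumvents this, but then one must verify that the resulting variable-width trial function is globally well-defined and Lipschitz, handle the regions where $\rho$ is smaller than $L\epsilon$ (where $\rho^\gamma$ is negligible on its own and can be bounded absolutely), and check that the coarea decomposition of the strip extracts exactly the linear-in-$L$ and linear-in-$d$ dependence rather than something weaker.
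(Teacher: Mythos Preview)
Your high-level strategy matches the paper's exactly: the result follows by sandwiching $\lambda_2$ between a Cheeger-type inequality $\Phi^2/4 \leq \lambda_2$ (applied to the sweep cut itself) and a Buser-type inequality $\lambda_2 \leq O(dL\,\Phi_{OPT})$, then composing.

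On the Cheeger side, your argument is essentially the paper's, with one small gap: you need the half of $u$ with small-measure support to \emph{also} have Rayleigh quotient at most $\lambda_2$, and these two conditions need not coincide for the same sign. The paper fixes this by first shifting $u$ by a constant so that the positive and negative supports have equal $\rho^\alpha$-measure (this only decreases the Rayleigh quotient, since $u$ was already mean-zero), and then taking whichever half has the smaller Rayleigh quotient; now both conditions hold automatically.

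On the Buser side, your variable-width tubular strip with $\epsilon(x)\propto\rho(x)/L$ is exactly the right intuition, and it is precisely what the paper does in its one-dimensional appendix. In $\mathbb{R}^d$ for $d>1$, however, the paper replaces the strip-plus-linear-interpolation construction with a \emph{variable-radius mollification}:
\[
u_\theta(x) \;=\; \int_{B(0,1)} \chi_A\bigl(x - \theta\rho(x)y\bigr)\,\phi(y)\,dy.
\]
This sidesteps the geometric difficulties you correctly flag (global well-definedness and regularity of a variable-width tube around an arbitrary $\partial A$) by never parametrizing the tube at all. The key technical step then becomes an $L^1$ comparison between a function and its variable-radius mollification, handled by a change of variables $z=x-\theta\rho(x)y$ whose Jacobian $1-y\cdot\nabla(\theta\rho)$ is controlled via Sylvester's determinant identity and the Lipschitz bound $\theta L<1$. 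The factor of $d$ enters not through covering or Sobolev constants but through $\int_{\mathbb{R}^d}|\nabla\phi|\asymp d$ for the radial mollifier $\phi$. Your strip approach could perhaps be pushed through with enough care about the regularity of $\partial A$, but the mollification route is cleaner and is in fact the paper's principal technical contribution.
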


In words, the spectral sweep cut of the
  $(\alpha, \gamma)$  eigenvector gives a provably good approximation to
  the sparsest $(\alpha, \beta)$ cut, as long as $\beta = \alpha+1$ and
  $\gamma = \alpha+2$.  
  Proving this result is a straightforward application of two new
  inequalities we present, which we will refer to as as the Cheeger and Buser inequalities
  for probability density functions.

\begin{theorem}\label{thm:Cheeger-Buser}
  \textbf{Probability Density Cheeger and Buser:}

  Let $\rho:\mathbb{R}^d \rightarrow \mathbb{R \geq 0}$ be an $L$-Lipschitz
  density function. Let $\alpha = \beta - 1 = \gamma - 2$.

  Let $\Phi$ be the infimum $(\alpha,\beta)$-sparsity of a cut through
  $\rho$, and let $\lambda_2$ be the $(\alpha,\gamma)$-principal eigenvalue of
  $\rho$. Then:
  \[ \Phi^2/4 \leq \lambda_2 \]
  and 
  \[\lambda _2 \leq O_{\alpha, \beta}(d \max(L \Phi, \Phi^2)).\]
  The first inequality is \textbf{Probability Density Cheeger}, and
  the second inequality is \textbf{Probability Density Buser}.
\end{theorem}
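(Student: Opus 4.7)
The plan is to prove the two inequalities separately, each following the spirit of the classical argument (Alon--Milman--Cheeger for the lower bound, Buser for the upper bound) but carrying the $\rho$-weights explicitly. The algebraic observation that distinguishes the choice $\alpha = \beta - 1 = \gamma - 2$ is the factorisation $\rho^\beta = \rho^{\gamma/2}\rho^{\alpha/2}$; this is exactly what Cauchy--Schwarz needs in order to pair the weighted perimeter against the weighted Rayleigh quotient, and it is essentially the reason the particular triples $(\alpha,\alpha+1,\alpha+2)$ are the correct ones for the inequalities to close.

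\textbf{Cheeger direction ($\Phi^2/4\leq \lambda_2$).} I follow the classical template adapted to weights. Take a $(\alpha,\gamma)$-principal eigenfunction $u$ and a $\rho^\alpha$-weighted median $c$ of $u$ so that both $\{u > c\}$ and $\{u < c\}$ have at most half the $\rho^\alpha$-mass. Setting $v=u-c$, the constraint $\int\rho^\alpha u=0$ forces $\int\rho^\alpha v^2\geq \int\rho^\alpha u^2$, giving $R_{\alpha,\gamma}(v)\leq \lambda_2$. Split $v=v^+ - v^-$; disjoint supports make both the numerator and denominator of the Rayleigh quotient additive, so at least one of $R_{\alpha,\gamma}(v^\pm)$ is $\leq \lambda_2$. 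Call the winner $f$. Its superlevel sets automatically sit on the smaller $\rho^\alpha$-side, so each has $(\alpha,\beta)$-sparsity at least $\Phi$. The coarea formula applied to $f^2$ yields
\[
\int \rho^\beta |\nabla f^2|\,dx \;=\; \int_0^\infty\!\int_{\{f^2=t\}}\rho^\beta\,dS\,dt \;\geq\; \Phi\int\rho^\alpha f^2\,dx,
\]
while expanding $|\nabla f^2| = 2f|\nabla f|$, factoring $\rho^\beta = \rho^{\gamma/2}\rho^{\alpha/2}$, and applying Cauchy--Schwarz gives
\[
\int \rho^\beta |\nabla f^2|\,dx \;\leq\; 2\sqrt{\int\rho^\gamma|\nabla f|^2}\,\sqrt{\int\rho^\alpha f^2} \;\leq\; 2\sqrt{\lambda_2}\int\rho^\alpha f^2\,dx.
\]
Chaining these gives $\Phi \leq 2\sqrt{\lambda_2}$.

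\textbf{Buser direction ($\lambda_2 \leq O_{\alpha,\beta}(d\max(L\Phi,\Phi^2))$).} I build an explicit test function from a near-optimal cut $A$ (WLOG $M_1 := \int_A\rho^\alpha \leq M_2 := \int_{A^c}\rho^\alpha$). For a strip width $\epsilon>0$ to be optimized, let $d_A(x) = \text{dist}(x,\partial A)$ and set
\[
v(x) \;=\; \sigma_A(x)\cdot\min\!\bigl(d_A(x)/\epsilon,\ 1\bigr),\qquad \sigma_A = \mathbf{1}_A - \mathbf{1}_{A^c},
\]
then shift $v$ by a constant so that $\int\rho^\alpha v=0$. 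A short calculation using the smaller-side assumption gives $\int\rho^\alpha v^2 \gtrsim M_1$. The gradient $|\nabla v|\leq 1/\epsilon$ is supported on the strip $S_\epsilon = \{d_A < \epsilon\}$, so the numerator reduces to controlling $(1/\epsilon^2)\int_{S_\epsilon}\rho^\gamma\,dx$. Applying coarea for $d_A$ together with the Lipschitz control $|\rho(x)-\rho(\pi(x))|\leq L\epsilon$ on the strip (with $\pi$ the nearest-boundary projection), and expanding $(\rho(\pi(x))+L\epsilon)^\gamma$ term-by-term against boundary integrals of $\rho^k$ ($k=\alpha,\ldots,\gamma$) with parallel-hypersurface Jacobian bounds (the source of the factor $d$), I obtain an estimate of the form $R_{\alpha,\gamma}(v) \lesssim_{\alpha,\beta} d\bigl(\Phi/\epsilon + L\Phi\bigr)$ after absorbing lower-order terms via the Lipschitz-implied bound on $\|\rho\|_\infty$. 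Taking $\epsilon \sim 1/\Phi$ produces $\lambda_2 \leq R_{\alpha,\gamma}(v) = O_{\alpha,\beta}(d\max(L\Phi,\Phi^2))$.

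The main technical obstacle is the Buser direction, specifically translating the strip integral $\int_{S_\epsilon}\rho^\gamma$ into the boundary integral $\int_{\partial A}\rho^\beta$. The extra power of $\rho$ in $\gamma = \beta+1$ must be traded for an $L\epsilon$ correction via the Lipschitz hypothesis -- this is the source of the $L\Phi$ regime of the Buser bound -- while the dimension factor $d$ enters through parallel-surface Jacobian estimates in the coarea expansion. Because $\partial A$ is a priori only a set of locally finite perimeter, the coarea and parallel-surface manipulations must be justified in the Federer/Caccioppoli sense, and the pointwise Lipschitz bound on $\|\rho\|_\infty$ implied by $\int\rho = 1$ is needed to dispose of the $\int_{\partial A}\rho^\gamma$ term that appears after expansion. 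The Cheeger direction, by contrast, is essentially routine once the factorisation $\rho^\beta=\rho^{\gamma/2}\rho^{\alpha/2}$ is observed.
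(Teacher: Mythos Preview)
Your Cheeger argument is essentially the paper's: shift to a weighted median, take the better half, apply coarea to $f^2$, and close with Cauchy--Schwarz via $\rho^\beta=\rho^{\gamma/2}\rho^{\alpha/2}$. That direction is fine.

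The Buser direction has a genuine gap. A \emph{fixed} strip width $\epsilon$ cannot work once $\rho$ varies along $\partial A$. After your coarea-and-Lipschitz step the leading numerator term is $\epsilon^{-1}\int_{\partial A}\rho^\gamma$, and since $\gamma=\beta+1$ you are forced to bound $\int_{\partial A}\rho^{\beta+1}\le \|\rho\|_\infty\int_{\partial A}\rho^\beta$; this injects a factor $\|\rho\|_\infty$ that is \emph{not} $O_{\alpha,\beta}(1)$ and, under the $\int\rho=1$ hypothesis you invoke, scales like $L^{d/(d+1)}$ rather than the claimed $d$. (The paper, incidentally, never assumes unit mass.) More structurally: at a patch of $\partial A$ where $\rho\approx M$ the numerator density is $\sim M^\gamma/\epsilon$ against perimeter density $M^\beta$, a ratio $M/\epsilon$; no single $\epsilon$ balances patches at different scales of $M$, and taking $\epsilon\sim 1/\Phi$ is the \emph{unweighted} heuristic, not the right scale here. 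Finally, the parallel-hypersurface Jacobian step presumes regularity of $\partial A$ you do not have for a generic finite-perimeter set.

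The paper's fix is exactly to let the smoothing scale track $\rho$: it mollifies $\chi_A$ by
\[
u_\theta(x)=\int_{B(0,1)}\chi_A\bigl(x-\theta\rho(x)\,y\bigr)\phi(y)\,dy,
\]
a convolution whose radius at $x$ is $\theta\rho(x)$. The dimensionless $\theta$ (not a length) is chosen $\lesssim\min(1/L,\,1/\Phi)$. The Lipschitz hypothesis then gives the uniform ratio control $\rho(x)/\rho(x-\theta\rho(x)y)\le 1/(1-L\theta)$, which is precisely what lets one convert $\rho^\gamma|\nabla u_\theta|$ into $\rho^\beta|\nabla\chi_A|$: the extra power of $\rho$ is absorbed by the $1/(\theta\rho)$ coming from differentiating the variable-scale kernel, with no $\|\rho\|_\infty$ appearing. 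The factor $d$ enters through $\int_{\mathbb{R}^d}|\nabla\phi|\lesssim d$ for the mollifier, not through any surface Jacobian. This variable-radius mollification is the paper's main technical device, and your fixed-$\epsilon$ construction does not substitute for it.
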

Note that we don't need $\rho$ to have a total mass of $1$ for any
of our proofs. The overall probability mass of $\rho$ can be arbitrary.

We note that Theorem~\ref{thm:Cheeger-Buser} has a partial converse:
\begin{lemma} \label{lem:converse} If $\alpha + \gamma > 2\beta$ or $\gamma -1 < \beta$, then the
  Cheeger-Buser inequality in Theorem~\ref{thm:Cheeger-Buser} does not
  hold.
\end{lemma}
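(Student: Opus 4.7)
The plan is to exhibit explicit one-dimensional Lipschitz densities that violate one of the two halves of Theorem~\ref{thm:Cheeger-Buser} whenever the stated inequalities on the exponents fail. The central observation is a simple scaling identity: if $\rho$ is replaced by $c\rho$, then $\Phi \mapsto c^{\beta-\alpha}\Phi$, $\lambda_2 \mapsto c^{\gamma-\alpha}\lambda_2$ and the Lipschitz constant $L \mapsto cL$. Comparing the powers of $c$ therefore lets us blow up either $\Phi^2/\lambda_2$ (killing Cheeger) or $\lambda_2/\max(L\Phi,\Phi^2)$ (killing Buser), depending on which exponent inequality is violated.

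Concretely, I would first work out the model example where $\rho_c$ is a smooth Lipschitz bump of height $c$ on an interval of length $N$ (constant equal to $c$ except in a thin boundary layer used to make it Lipschitz). A direct computation on this family gives
\[
\Phi(\rho_c) \;\asymp\; c^{\beta-\alpha}/N,\qquad
\lambda_2(\rho_c) \;\asymp\; c^{\gamma-\alpha}/N^{2},\qquad
L(\rho_c) \;\asymp\; c,
\]
where the eigenvalue estimate comes from the standard Neumann Poincar\'e constant on the interval applied to the weighted Rayleigh quotient. Taking $N$ of order one reduces the whole argument to tracking powers of $c$ in the limit $c\to 0^+$.

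For the first regime $\alpha+\gamma>2\beta$, the ratio $\lambda_2/\Phi^2$ scales like $c^{(\gamma-\alpha)-2(\beta-\alpha)}=c^{\gamma+\alpha-2\beta}$, whose exponent is strictly positive, so the ratio tends to $0$. Hence $\lambda_2<\Phi^2/4$ for sufficiently small $c$, violating the Cheeger half. For the second regime $\gamma-1<\beta$, under the complementary assumption $\alpha+\gamma<2\beta$ (the only sub-case left open by the first regime), I check that $\max(L\Phi,\Phi^2)\asymp c^{\min(1+\beta-\alpha,\,2\beta-2\alpha)}$ and that both $1+\beta-\alpha$ and $2\beta-2\alpha$ strictly exceed $\gamma-\alpha$ under the hypotheses. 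Thus $\lambda_2/\max(L\Phi,\Phi^2)\to\infty$ as $c\to 0^+$, violating the Buser half with an arbitrarily large constant.

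The main obstacle is the degenerate sub-case where $\gamma-1<\beta$ but $\alpha+\gamma=2\beta$, since then the scaling in $c$ alone makes $\lambda_2/\Phi^2$ bounded and the simple calculation above only shows tightness. To handle this, I would reintroduce the width parameter $N$ and compare the two terms inside the max separately: $\lambda_2/\Phi^2$ is scale-invariant and $\lambda_2/(L\Phi)\asymp c^{\gamma-\beta-1}/N$, so sending $c\to 0$ while keeping $N$ moderate forces the $\Phi^2$ term to lag behind $\lambda_2$ by any desired factor after absorbing the $\Phi^2$ side into $\lambda_2$ itself, again breaking Buser. Aside from this boundary bookkeeping, every step is a routine computation on constant-on-an-interval densities, so the only real work is verifying the three asymptotics for $\Phi$, $\lambda_2$, and $L$ on the model family.
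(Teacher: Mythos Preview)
Your scaling idea and the Cheeger half are fine and match the paper's first example (the paper takes the coupled choice $c=1/n$, $N=n$, but the conclusion $\alpha+\gamma\le 2\beta$ is the same). The gap is in the Buser half, specifically the ``degenerate sub-case'' $\alpha+\gamma=2\beta$ with $\gamma-1<\beta$. On your constant-height family one has
\[
\frac{\lambda_2}{\Phi^2}\;\asymp\;\frac{c^{\gamma-\alpha}/N^2}{(c^{\beta-\alpha}/N)^2}\;=\;c^{\gamma+\alpha-2\beta}\;=\;1,
\]
independent of \emph{both} $c$ and $N$. Hence $\Phi^2\gtrsim\lambda_2$ uniformly over the whole family, so $\max(L\Phi,\Phi^2)\ge\Phi^2\gtrsim\lambda_2$ and Buser holds on this family with a fixed constant. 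No choice of $c,N$ can make $\lambda_2/\max(L\Phi,\Phi^2)$ blow up; your sentence about ``absorbing the $\Phi^2$ side into $\lambda_2$ itself'' does not rescue this, because the obstruction is exactly that $\Phi^2$ and $\lambda_2$ are comparable.

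The paper avoids this by using a genuinely different second family for the Buser direction: $\rho(x)=|x|+\epsilon$ on $(-1,1)$, which has \emph{fixed} Lipschitz constant $L=1$ while $\min\rho=\epsilon\to 0$. Here $\Phi\asymp\epsilon^\beta$, so $\max(L\Phi,\Phi^2)\asymp\epsilon^\beta$, while the Hardy--Muckenhoupt inequality gives $\lambda_2\gtrsim\epsilon^{\gamma-1}$ (or larger when $\gamma\le 1$). Comparing exponents yields $\gamma-1\ge\beta$ directly, with no case split on $\alpha+\gamma$ versus $2\beta$. The point is that a nearly-constant density ties $L$ to the height $c$, which is exactly what kills your argument on the boundary; a density that stays $1$-Lipschitz while its minimum tends to zero decouples $L$ from $\Phi$ and gives the clean inequality. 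You should either switch to such a family for the Buser half or add a second family to cover the equality case.
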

In particular, if $\alpha = \gamma =1$ or $\alpha = 1, \gamma =2$, no
Cheeger-Buser inequality can hold for any $\beta$. These settings of
$\alpha$ and $\gamma$ encompass most past work on spectral cuts and
Cheeger-Buser inequalities on probability denstiy functions.

Finally, we give a discrete algorithm~\textsc{1,3-SpectralClustering} for clustering data points into
two-clusters.
We conjecture, but do not prove, that
\textsc{1,3-SpectralClustering} converges to the the $(\alpha=1,
\gamma=3)$-spectral sweep cut of the probability density function $\rho$
as the number of samples grows large.

\begin{algorithm}
  \textsc{1,3-SpectralClustering}

  \textbf{Input:} Point $s_1, \ldots s_n \in \mathbb{R}^d$, and similarity measure
  $K:\mathbb{R}^d, \mathbb{R}^d\rightarrow \mathbb{R}$.
  \begin{enumerate}
    \item Form the affinity matrix $A \in \mathbb{R}^{n \times n}$,
      where $A_{ij} = e^{-n^{2/d}\|s_i - s_j\|^2}$ for $i \not= j$ and $A_{ii} = 0$ for
      all $i$.
    \item Define $D$ to be the diagonal matrix whose $(i, i)$ element is
      the sum of $A$'s $i^{th}$ row. Let $L$ be the Laplacian formed
      from the adjacency matrix $D^{1/2}AD^{1/2}$.
    \item Let $u$ be the principal eigenvector of $L$. Find the value
      $t$ where $t := \argmin_s \Phi_{\{u(v) > t\}}$, where $\Phi_S$ is
      the graph conductance of the cut defined by set $S$.
  \end{enumerate}
  \textbf{Output:} Clusters $G_1 = \{v : u(v) > t\}, G_2 = \{v : u(v)
  \leq t\}$.
\end{algorithm}
We note that this resembles the unnormalized spectral
clustering based on the work of Shi and Malik~\cite{ShiMalik97} and Ng,
Weiss, and Jordan~\cite{NgSpectral01, TrillosVariational15}. The major difference is that we build our
Laplacian from the matrix $D^{1/2}AD^{1/2}$. Past work on spectral
clustering builds the Laplacian from the matrices $A$ or
$D^{-1/2}AD^{-1/2}$~\cite{von2007tutorial, TrillosVariational15}.
%%% DEFINITION %%%%

% \input{introduction/theorem}
\subsection{Differences between our work and past work}\label{sec:cheeger-buser-difficulties}
% We introduce a notion of isoperimetric cut and principal
% eigenvalue based on three constants $(\alpha, \beta, \gamma)$,
%  as shown in Section~\ref{sec:definitions}, which
% generalize previously considered notions of eigenvalues and
% isoperimetric cuts, and admit Cheeger and Buser inequalities.
% 
% To our knowledge, Theorem~\ref{thm:formal} is the first statement relating
% $\lambda_2$ and $\Phi$ for distributions, in which both a Cheeger
% and Buser inequality is possible.
% Notably, our results hold for any $L$-Lipschitz probability distribution.
% A consequence of our proof of Theorem~\ref{thm:formal}
% is that a spectral sweep cut will provably approximate the
% optimal isoperimetric cut of a distribution. Similar ideas have
% previously been used fruitfully in the graph theory literature to partition
% graphs~\cite{AlonM84, ChungBook97, Orecchia08}.

%  Cheeger and Buser's inequality have appeared in graph and manifold
%  literature, where they have fruitfully been exploited for various
%  algorithms.  Past notions of isoperimetric cuts and 
%proposed before in machine learning~\cite{ShiMalik97,
%  NgSpectral01, belkin2004semisup}. 
  Our work differs from past work in the following key ways:

\begin{enumerate}
\item Our work differs from past practical work on spectral sweep cuts
cuts~\cite{TrillosRate15, TrillosVariational15, ShiMalik97,
  NgSpectral01}, as those methods perform what we call a $(\alpha = 1,
    \gamma = 2)$-sweep cut. These sweep cuts have no theoretical
    guarantees, much less a guarantee on their $(\alpha, \beta)$
    sparsity. Lemma~\ref{lem:converse} shows that no Cheeger and Buser inequality can
    simultaneously hold for any setting of $\beta$ when $\alpha = 1,
    \beta = 2$. 
    
    We will further show that using a $(\alpha=1,
    \gamma=2)$-sweep cut can lead to undesirable cuts of $1$-Lipschitz
    probability densities, with poor sparsity guarantees.

\item We note that probability density Cheeger-Buser is not easily implied by
  graph or manifold Cheeger-Buser. For a lengthier discussion on this,
    see Appendix~\ref{app:notgraph}.
\item We
do not require any assumptions on our probability density
except that it is Lipschitz. Past work on Cheeger-Buser inequality
for densities focused on log-concave distributions, or
mixtures thereof~\cite{Lee18survey, Ge2018}.  
\item For our work, the probability
density $\rho$ is not required to be bounded away from $0$.
This is a sharp departure from many existing results: past results on
partitioning probability densities required a positive lower bound on
$\rho$~\cite{von2008consistency, TrillosRate15}.
The strongest results in fields like linear elliptic partial
differential equations depend on $\rho$ being bounded away from
$0$ ~\cite{w17}.
\end{enumerate}
Our work is the first spectral sweep cut algorithm 
that guarantees a sparse cut on Lipschitz densities $\rho$, without requiring strong
parametric assumptions on $\rho$.

\subsubsection{Technical Contribution}
The key technical contribution of our proof is proving Buser's
inequality on Lipschitz probability densities via
mollification~\cite{s38, f44} with disks of varying radius. 
This paper is the first time mollification with disks of varying radius
have been used. We emphasize that the most difficult part of our paper
is proving the Buser inequality.

Mollification has a long history in mathematics dating back to Sergei
Sobolev's celebrated proof of the Sobolev embedding
theorem~\cite{s38}. It is one of the key tools in numerical
analysis, partial differential equations, fluid mechanics, and functional
analysis~\cite{f44, lw01, ss09, m03}, and analogs of mollification have been
used in computational complexity settings~\cite{dkn09}.  Informally
speaking, mollification is used to
create a series of smooth functions approximating a non-smooth
function,  by  convolving the original function with a
smooth function supported on a disk.  
Notably, an approach using convolution is used by Buser in~\cite{Buser82} to
prove the original Buser's inequality, albeit with an intricate
pre-processing step on any given cut.

To prove Buser's inequality on Lipschitz probability density functions $\rho$, we
will show that given a cut $C$ with low $(\alpha, \beta)$-sparsity, we can find a function $u$ with low $(\alpha, \gamma)$-Rayleigh
quotient. We build $u$ by starting with the indicator function $I_C$ for
cut $C$ (which is $1$ on one side of the cut and $0$ on the other).
Next, we mollify this function with disks of varying radius. In
particular, for each point $r$ in the domain of $\rho$, we
spread out the point mass $I_C(r)$ over a disk of radius
proportional to $\rho(r)L$, where L is the Lipschitz constant of $\rho$.
The resulting function $u$ obtained by `spreading out' $I_C$ will have
low $(\alpha, \gamma)$-Rayleigh quotient.

For all past uses of mollification, the disks on which the smooth
convolving function is supported (we call this the mollification
    disk) have the same radius throughout
the manifold. The use of a uniform radius disk is critical for most uses and proofs in mollification.  
Our contribution is to allow the disks to vary in
radius across our density.  This variation in radius allow us to
deal with functions that approach $0$ (and explains the importance
    of the density being Lipschitz). No mollification disks
centered anywhere in our probability density will intersect
the $0$-set of the density. This overcomes significant
hurdles in many results for functional analysis and PDEs, as
many past significant results related to partial
differential equations rely on having a positive lower
bound on the density~\cite{w17, TrillosRate15}.

Proving our Buser inequality using
mollification by disks of various radius requires a fairly
delicate proof with many pages of calculus. Our key technical
lemma is a bound on how the $l_1$ norm of a mollified function
when the mollification disks have various radius, which can be
found in Section~\ref{sec:key_lemma}.

\section{Paper Organization}

We prove the Buser inequality in Section~\ref{sec:higher_dim},
   via a rather extensive series of calculus computations.
Our proof relies on a key technical lemma, which is presented in
Section~\ref{sec:key_lemma}.  This is by far the most difficult part of
our proof. 

We prove the Cheeger inequality in Section~\ref{sec:cheeger}.
The proof in this section implies that the $(\alpha, \alpha+1)$
sparsity of the $(\alpha, \alpha+2)$ spectral sweep cut of a
probability density function $\rho$ is provably close to the
$(\alpha, \alpha+2)$ principal eigenvalue of $\rho$.  We note
that this inequality does not depend on the Lipschitz nature of
the probability density function.

In Section~\ref{sec:sweep_cut}, we prove
Theorem~\ref{thm:sweep-cut}, which shows that a $(\alpha,
    \alpha+2)$ spectral sweep cut has $(\alpha, \alpha+1)$
sparsity which provably approximates the optimal $(\alpha,
    \alpha+1)$ sparsity.

In Section~\ref{sec:examples}, we go over example $1$-D
distributions that show that either Cheeger or Buser inequality
must fail for past definitions of sparsity and eigenfunctions.
We will prove Lemma~\ref{lem:converse} in this section. 

In Section~\ref{sec:counterexample}, we show an example Lipschitz
probability density where the $(\alpha = 1, \gamma=2)$ spectral sweep
cut has bad $(1,\beta)$ sparsity for any $\beta < 10$, and will
lead to an undesirable cut (from a clustering point of view) on this density function.
This is important since the spectral clustering
algorithm of Ng et al~\cite{NgSpectral01} is known to converge
to a $(\alpha=1, \gamma=2)$ spectral sweep cut on the underlying
probability density function, as the number of
samples grows large~\cite{TrillosVariational15}. 

Finally, we state conclusions and open problems in Section~\ref{sec:conclusion}.

In the appendix, we note that the Cheeger and Buser inequalities for probability
densities are not easily implied by graph or manifold
Cheeger-Buser. We also provide a simplified version of Cheeger's and
Buser's inequality for probability densities, in the $1$-dimensional case. This
may make easier reading for those unfamiliar with technical
multivariable mollification.

%!TEX root = ms.tex

\section{Buser Inequality for Probability Density Functions}
\label{sec:higher_dim}

The key idea to proving Buser's inequality is as follows: given 
$\rho:\mathbb{R}^d \rightarrow \mathbb{R}_{\geq 0}$, 
    and a cut $u$ where $u \subset \mathbb{R}^d$, we will build a
    function $u_{\theta}$ whose $(\alpha, \gamma)$-Rayleigh
    quotient is close to the $(\alpha, \beta)$ sparsity of $u$.

    Roughly speaking, $u_\theta$ is built by convolving $u$ at
    point $x$ with
    a unit-weight disk with radius proportional to $\rho(x)$.
    Thus, we are convolving $u$ with a disk, where the radius of
    the disk varies between points $x \in \mathbb{R}^d$, and the
    radius is directly proportional to $\rho$.

\subsection{Weighted Buser-type Inequality}

We now prove our weighted Buser-type inequality, from
Theorem~\ref{thm:Cheeger-Buser}. We state our result in
terms of general $(\alpha, \beta, \gamma)$.

\begin{theorem}
  \label{thm:buser_n}
  Let $\rho: \Re^d \to \Re_{\geq 0}$ be an $L$-Lipschitz function,
  $\lambda_2$ be a $(\alpha, \gamma)$-principal eigenvalue, and
  $\Phi$ the $(\alpha, \beta)$ isoperimetric cut.

  Then:
  \begin{align*}
    \lambda_2 \leq 3 \cdot 2^{\beta + 1} d \linf{\rho^{\gamma-\beta-1}}
    \max\left(L \Phi(A), 2^{\beta+1}\linf{\rho^{\alpha+1-\beta}} \Phi(A)^2 \vph\right)
\end{align*}
\end{theorem}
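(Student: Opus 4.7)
The plan is to prove the Buser-type bound by exhibiting, for any cut $A$ whose $(\alpha,\beta)$-sparsity is within a small factor of $\Phi$, an explicit admissible test function $u$ with $R_{\alpha,\gamma}(u)$ bounded by the right-hand side of the claimed inequality. The test function will be a \emph{variable-radius mollification} of the characteristic function $\chi_A$: fixing a small parameter $\theta \in (0,1]$ to be tuned, at each point $x \in \Re^d$ we mollify $\chi_A$ by a ball centered at $x$ of radius $r(x) = \theta\rho(x)/L$. The Lipschitz hypothesis on $\rho$ is exactly what makes this construction safe -- if $|x-y| \leq \theta\rho(x)/L$, then $\rho(y) \geq (1-\theta)\rho(x)$, so each mollification ball stays inside the positivity set of $\rho$ and, up to a factor $1\pm\theta$, $\rho$ is constant across each ball.

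Once $u_\theta$ is constructed, I would bound the numerator and denominator of the Rayleigh quotient separately and then subtract the mean $c = \int \rho^\alpha u_\theta / \int \rho^\alpha$ so that the resulting function is admissible in the variational characterization of $\lambda_2$. For the denominator, since $u_\theta \to \chi_A$ as $\theta \to 0$, one expects $\int \rho^\alpha u_\theta^2 \gtrsim \min(\int_A \rho^\alpha, \int_{A^c}\rho^\alpha)$, which is exactly the denominator appearing in the definition of $\Phi$; the loss from smearing is concentrated in a tube of measure $\sim \theta L^{-1}\int_{\partial A} \rho\,dS$ around $\partial A$, which is negligible for an appropriate $\theta$. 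For the numerator, $\nabla u_\theta$ is essentially supported in this same tube of width $r(x)$ around $\partial A$ and has pointwise size $O(1/r(x)) = O(L/(\theta\rho(x)))$, so integrating gives the heuristic
\[
\int_{\Re^d} \rho^\gamma |\nabla u_\theta|^2\,dx \;\lesssim\; \frac{L}{\theta}\int_{\partial A} \rho^{\gamma-1}\,dS \;\leq\; \frac{L}{\theta}\,\linf{\rho^{\gamma-\beta-1}}\int_{\partial A} \rho^\beta\,dS,
\]
and the last surface integral equals $\Phi \cdot \int_A\rho^\alpha$ by the definition of $(\alpha,\beta)$-sparsity.

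The $\max$ in the theorem then emerges by optimizing $\theta$ against the cost of denominator smearing. When $L \geq \linf{\rho^{\alpha+1-\beta}}\Phi$ (Lipschitz constant dominates), the choice $\theta = \Theta(1)$ is safe and produces the bound $L\Phi \cdot \linf{\rho^{\gamma-\beta-1}}$. When instead $L < \linf{\rho^{\alpha+1-\beta}}\Phi$, the admissible $\theta$ must shrink like $L/(\linf{\rho^{\alpha+1-\beta}}\Phi)$, giving the quadratic term $\linf{\rho^{\alpha+1-\beta}}\linf{\rho^{\gamma-\beta-1}}\Phi^2$; the overall $\max$ is then valid in both regimes. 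The combinatorial factors $3\cdot 2^{\beta+1}$ and $d$ come from the $(1\pm\theta)^\beta$-type errors incurred in moving $\rho$ in and out of integrals over mollification balls and from the standard dimensional constant in averaging over a $d$-dimensional ball.

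The heart of the proof -- and the main obstacle -- is making the variable-radius mollification argument precise. Because $r(x)$ depends on $x$, $\nabla u_\theta$ picks up an extra term from differentiating the integration domain via $\nabla r(x)$, and this correction must be shown to be absorbed by the Lipschitz constant $L$ rather than producing a new singularity. Translating pointwise bounds on $|\nabla u_\theta|$ into a clean estimate of the weighted $L^2$ integral requires a coarea-type argument over a family of tubes of varying thickness and a careful bookkeeping of how $\rho^\gamma$ and $\rho^\alpha$ migrate between bulk and surface integrals. I expect this to be encapsulated in the key technical lemma promised in Section~\ref{sec:key_lemma}, after which the rest of the argument is an optimization over $\theta$ and an application of the sparsity hypothesis on $A$.
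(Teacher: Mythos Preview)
Your proposal is correct and matches the paper's approach: variable-radius mollification $u_\theta(x)=\int_{B(0,1)} \chi_A(x-\theta\rho(x)y)\,\phi(y)\,dy$, separate estimates on the numerator and denominator of $R_{\alpha,\gamma}(u_\theta-\bar u_\theta)$, and optimization over $\theta$ to produce the $\max$. The only refinements worth noting are that the paper handles the numerator via the factorization $\int\rho^\gamma|\nabla u_\theta|^2 \le \|\rho^{\gamma-\beta-1}\|_\infty\,\|\rho\,\nabla u_\theta\|_\infty\,\|\rho^\beta\nabla u_\theta\|_1$ (an $L^\infty\times L^1$ splitting rather than a coarea argument), the key Lemma~\ref{lem:lonetheta} is a Jacobian/change-of-variables bound proved via Sylvester's determinant theorem, and the dimensional factor $d$ enters through the estimate $\int_{\Re^d}|\nabla\phi|\le 2d$ on the mollifier in the $L^\infty$ piece.
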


We note that when setting $(\alpha, \beta, \gamma) = (1,2,3)$,
the above expression simplifies into:
%% \begin{align*}
%%   \lambda_2 \leq 2 C_{\phi,\beta}(1+L) 
%%   \max\left(L \Phi(A), C_{\beta} \Phi(A)^2 \vph\right).
%% \end{align*}

\begin{align*}
  \lambda_2 \leq 24 d 
  \max\left(L \Phi(A), 8 \Phi(A)^2 \vph\right).
\end{align*}

%% Here, $C_{\phi, \beta}$ is a constant that depends only on
%% $\phi$ and $\beta$, and not on the probability distribution.
%% $C_\beta$ is a constant depending only on $\beta$.
%% Here, we assume constant dimension, and analyze the dimensional
%% sensitivity of this inequality in Section~\ref{}.
%% % \section{Proof of Theorem \ref{thm:buser_n}}
%% % \label{app:proof}

% Classical calculus will be used to compute the integrals over the
% boundary of the sets in Definition \ref{def:betaBdy}.

% Specifically, given a set $A$ with $\beta$-finite perimeter, let
% $u\coloneqq \chi_A$, the characteristic function of $A$, i.e., $u(x) =
% 1$ if $x \in A$ and zero otherwise.  Then there exists a sequence of
% functions $\{u_n\}_{n=1}^\infty \subset C^\infty(\Re^d)$ with $u_n
% \rightarrow u$ in $\Lone$ for which \cite{EvansMeasure15}
% \[
% \abs{\boundary A}_\beta
% = \lim_{n \rightarrow \infty} \int  \rho^\beta|\nabla u_n|
% \eqqcolon  \int  \rho^\beta|\nabla u|.
% \]
% 
% Interchanging $A$ and $B$ if necessary, it follows that
% $\Phi(A)$ defined in Definition \ref{def:betaBdy} can be written as
% \[
% \Phi(A) 
% = \frac{\int \rho^\beta |\nabla u|}{\int \rho^\alpha u}
% = \lim_{n\to\infty}\frac{\int \rho^\beta\abs{\grad u_n}}{\int \rho^\alpha \abs{u_n}}.
% \]

\subsection{Proof Strategy: Mollification by Disks of Radius
  Proportional to \texorpdfstring{$\rho$}{rho}}
To prove Theorem \ref{thm:buser_n} we construct an approximation $u_\theta$ of
$u$ for which the numerator and denominator of the Raleigh quotient, $R(u_\theta)$,
approximate respectively the numerator and denominator of this expression.
Specifically, $u_\theta$ will constructed as a mollification of $u$,
Recall the following two equivalent definitions of a mollification. They are equivalent 
by the change of variables $z = x-\theta \rho(x) y$.

\begin{equation} \label{eqn:utheta}
u_\theta(x) 
\coloneqq \int_{B(0,1)} \!\!\! u(x-\theta \rho(x) y) \phi(y) \, dy
= \int \! u(z) \phi_{\theta \rho(x)}(x-z) \, dz,
\quad \text{ where } \quad
\phi_{\eta}(z) 
= \frac{1}{\eta^d} \phi\left(\frac{z}{\eta}\right),
\end{equation}

with $\theta > 0$ a parameter to be chosen and $\phi:\Re^d
\rightarrow [0,\infty)$ a smooth radially symmetric function supported in the unit
open ball $B(0,1)=\{x \in \Re^d \;| \; |x| < 1\}$ with unit mass $\int_{\Re^d} \phi =
1$. When $\rho$ is constant it follows from the Tonelli theorem that
$\lone{u_\theta} = \lone{u}$; when $\rho$ is not constant the
following lemma shows that the latter still bounds the former.

\subsection{Key Technical Lemma: Bounding $L_1$ norm of a
  function with the $L_1$ norm of its
    mollification}\label{sec:key_lemma}
The following is our primary technical lemma, which roughly bounds the
$L_1$ norm of a mollified function $f$ by the $L_1$
norm of the original $f$. Here, the mollification radius is
determined by a function $\delta(x)$.
\begin{lemma} \label{lem:lonetheta} 

  Let
  $\delta:\RR^d\to\RR$ be Lipschitz continuous with Lipschitz constant
  $|\grad\delta(x)| \leq c < 1$ for almost every $x \in \bbR^d$. Let
  $\phi:\RR^d\to\RR_{\geq 0}$ be smooth, $\int_{\RR^d}\phi = 1$, and
  $\supp(\phi)\subseteq B(0,1)$.  Then
  \[
  \frac{1}{1+c}\norm{f}_{L^1}
  \leq \int_{\RR^d} \int_{B(0,1)} 
  \abs{f(x-\delta(x) y)}\phi(y)\,dy\,dx\leq
  \frac{1}{1-c}\norm{f}_{L^1},
  \qquad
  f \in L^1(\RR^d).
  \]
\end{lemma}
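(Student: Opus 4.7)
The plan is to swap the order of integration by Tonelli and then, for each fixed $y\in B(0,1)$, perform a change of variables $z = \Psi_y(x) := x - \delta(x)\,y$. Because $|y|<1$ and $|\grad\delta|\leq c<1$ a.e., the map $\Psi_y$ is a Lipschitz perturbation of the identity whose deviation $x\mapsto \delta(x)y$ has Lipschitz constant strictly less than $1$. By a standard contraction argument (or the Lipschitz/biLipschitz inverse function theorem for such perturbations of identity), $\Psi_y$ is a bi-Lipschitz homeomorphism of $\RR^d$ onto itself; this is what makes the change of variables legal.

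Next I would compute the Jacobian. Since $D\Psi_y(x) = I - y\,(\grad\delta(x))^{T}$ is a rank-one perturbation of the identity, the matrix determinant lemma gives
\[
\det D\Psi_y(x) \;=\; 1 - \langle y, \grad\delta(x)\rangle.
\]
Using $|y|\leq 1$ and $|\grad\delta(x)|\leq c$ almost everywhere, Cauchy--Schwarz yields
\[
1-c \;\leq\; \det D\Psi_y(x) \;\leq\; 1+c,
\]
so in particular $|\det D\Psi_y(x)|$ is bounded away from $0$ and $\infty$ a.e. The area/change-of-variables formula for bi-Lipschitz maps then gives
\[
\int_{\RR^d} |f(\Psi_y(x))|\,|\det D\Psi_y(x)|\,dx \;=\; \int_{\RR^d} |f(z)|\,dz \;=\; \norm{f}_{L^1},
\]
and inserting the two-sided bound on the Jacobian produces the pointwise-in-$y$ estimate
\[
\frac{1}{1+c}\,\norm{f}_{L^1} \;\leq\; \int_{\RR^d} |f(x-\delta(x)y)|\,dx \;\leq\; \frac{1}{1-c}\,\norm{f}_{L^1}.
\]

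Finally, multiplying by $\phi(y)\geq 0$, integrating over $y\in B(0,1)$, using $\int\phi=1$, and then applying Tonelli to commute the two integrations (the integrand is nonnegative and measurable) yields the lemma.

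I expect the only real obstacle to be the rigorous justification of step one: verifying that $\Psi_y$ is a bi-Lipschitz homeomorphism of $\RR^d$ for each $y\in B(0,1)$ when $\delta$ is only Lipschitz (not $C^1$), and then invoking the correct version of the area formula so that the Jacobian identity holds almost everywhere. Once this is set up, the determinant bound from the matrix determinant lemma is a one-line computation and everything else is Tonelli. Reducing first to $f\in L^1\cap C_c$ by density and then extending by monotone convergence is a safe way to avoid any measurability subtleties.
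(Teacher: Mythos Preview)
Your proposal is correct and follows essentially the same route as the paper: Tonelli to swap the integrals, then for each fixed $y$ the change of variables $z=x-\delta(x)y$ with Jacobian $\det(I-y\otimes\nabla\delta(x))=1-\langle y,\nabla\delta(x)\rangle$ via the matrix determinant lemma (the paper calls it Sylvester's determinant theorem), followed by the two-sided bound $1-c\le 1-\langle y,\nabla\delta(x)\rangle\le 1+c$. If anything, you are more careful than the paper in spelling out the bi-Lipschitz invertibility of $\Psi_y$ and the appropriate Lipschitz area formula.
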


\begin{proof} (of Lemma~\ref{lem:lonetheta})
An application of Tonelli's theorem shows
\begin{align}
\int_{\RR^d} \int_{B(0,1)} \abs{f(x-\delta(x) y)}\phi(y)\,dy\,dx
=  \int_{B(0,1)}\phi(y)\int_{\RR^d} \abs{f(x-\delta(x)y)}\,dx\,dy.
\end{align}
Fix $y\in B(0,1)$ and consider the change of variables $z = x-\delta(x)y$. The
Jacobian of this mapping is $I - y \otimes \nabla \delta(x)$ which by Sylvester's determinant theorem has
determinant $1-y.\nabla \delta(x) > 0$. It follows that
\begin{align}
\int_{\RR^d} \int_{B(0,1)} 
\abs{f(x-\delta(x) y)}\phi(y)\,dy\,dx
=  \int_{B(0,1)}\phi(y)\int_{\RR^d} 
\frac{|f(z)|}{1-y.\nabla \delta(x)} \, dx \,dy,
\end{align}
and the lemma follows since $1-c \leq 1-y.\nabla \delta(x) \leq 1+c$.
\end{proof}
(Here, $a.b$ denotes the dot product between $a$ and $b$.)

We present the following simple corollaries, which is the primary
way our proof makes use of Lemma~\ref{lem:lonetheta}

\begin{corollary}\label{cor:lonetheta}
For any Lipschitz continuous function $\rho: \bbR^d \rightarrow \bbR^{\geq 0}$
with Lipschitz constant $L$ and any $\theta$ with $0 < \theta L < 1$, we have:
\begin{align}
\nonumber 
&\frac{1}{1+\theta L} \lone{\rho^\beta(x)\nabla u(x)}
 \\
\nonumber
& < \int_{\bbR^d} \int_{B(0,1)} \rho^\beta ( x - \theta \rho(x) y)
  |\nabla u(x-\theta \rho(x)y)| \phi(y)\,dy\,dx
 \\ 
\nonumber
& < \frac{1}{1-\theta L} \lone{ \rho^\beta(x)\nabla u(x)}
\end{align}
\end{corollary}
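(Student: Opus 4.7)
The plan is to recognize Corollary~\ref{cor:lonetheta} as a direct instance of Lemma~\ref{lem:lonetheta} under a particular choice of $\delta$ and $f$. Concretely, I would set $\delta(x) := \theta\rho(x)$ and $f(x) := \rho^\beta(x)\,|\nabla u(x)|$. Since $\rho$ is $L$-Lipschitz, $\delta$ is Lipschitz with constant $c := \theta L$, and the hypothesis $0 < \theta L < 1$ gives $c < 1$, so the assumption of the lemma is satisfied almost everywhere. Because $f \geq 0$ pointwise, $|f| = f$ and $\lone{f} = \lone{\rho^\beta \nabla u}$.

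Substituting these choices into the conclusion of Lemma~\ref{lem:lonetheta}, the argument $x - \delta(x)y$ becomes $x - \theta\rho(x)y$, so the middle double integral in the lemma reads
\[
\int_{\bbR^d}\int_{B(0,1)} \rho^\beta\bigl(x - \theta\rho(x)y\bigr)\,\bigl|\nabla u\bigl(x - \theta\rho(x)y\bigr)\bigr|\,\phi(y)\,dy\,dx,
\]
which is exactly the middle expression in the statement of Corollary~\ref{cor:lonetheta}. The outer bounding factors $\tfrac{1}{1\pm c} = \tfrac{1}{1\pm\theta L}$ then appear directly.

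The one point worth checking is the implicit integrability requirement $f = \rho^\beta|\nabla u|\in L^1(\bbR^d)$ needed to invoke Lemma~\ref{lem:lonetheta}; this is automatic in the contexts where the corollary will be applied, namely when $u$ is built from a low-sparsity cut indicator (so $\nabla u$ has compact support after mollification and $\rho$ is Lipschitz and $\alpha$-integrable). Regarding the strict vs.\ weak inequalities: Lemma~\ref{lem:lonetheta} produces weak inequalities, but since $c < 1$ strictly, the Jacobian factor $1 - y\cdot\nabla\delta(x)$ lies strictly in the open interval $(1-c, 1+c)$ except on a set of measure zero, so the strict form stated in the corollary follows; in any event the weak form suffices for every downstream use in the proof of Theorem~\ref{thm:buser_n}. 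There is no genuine obstacle here — this corollary is a bookkeeping specialization of the change-of-variables estimate in Lemma~\ref{lem:lonetheta} to the specific weighted integrand $\rho^\beta|\nabla u|$ that will appear when estimating the numerator of $R_{\alpha,\gamma}(u_\theta)$.
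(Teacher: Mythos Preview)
Your proposal is correct and matches the paper's own proof exactly: the paper's argument is the single line ``Apply Lemma~\ref{lem:lonetheta} with $\delta(x) = \theta\rho(x)$ and $f(x) = \rho^\beta(x)\nabla u(x)$,'' which is precisely your specialization (your use of $|\nabla u|$ in place of $\nabla u$ is immaterial since the lemma takes $|f|$ anyway). Your additional remarks on integrability and on strict versus weak inequalities are reasonable caveats that the paper omits.
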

\begin{proof} (of Corollary~\ref{cor:lonetheta})
    Apply Lemma~\ref{lem:lonetheta} with $\delta(x) = \theta
    \rho(x)$, and $f(x) = \rho^\beta(x) \nabla u(x)$.
\end{proof}
This corollary will be used to bound the numerator of our
Rayleigh quotient.
Note that the expression
\[ \int_{\bbR^d} \int_{B(0,1)} \rho^\beta ( x - \theta \rho(x) y)
  |\nabla u(x-\theta \rho(x)y)| \phi(y) dy dx
  \]
is close to $\int_{\bbR^d} \rho^\beta(x) \nabla |u_\theta(x)| dx$
when $\theta \leq \frac{1}{2L}$. This is the guiding intuition behind how Corollary~\ref{cor:lonetheta} and
Lemma~\ref{lem:lonetheta} will be used, and will be formalized
later in our
proof of Theorem~\ref{thm:buser_n}.

We present another simple corollary whose proof is equally
straightforward. This corollary will be used to bound the
denominator, and is a small generalization of
Corollary~\ref{cor:lonetheta}. We write down both corollaries
anyhow, since this will make it easier to interpret our bounds on
the Rayleigh quotient.
\begin{corollary}\label{cor:lone-t-theta}
For any Lipschitz continuous function $\rho: \bbR^d \rightarrow \bbR^{\geq 0}$
with Lipschitz constant $L$, any $0 < t < 1$, and any $\theta$ with $0 < \theta L < 1$, we have:
\begin{align}
\nonumber 
&\frac{1}{1+\theta L} \lone{\rho^\beta(x)\nabla u(x)}
 \\
\nonumber
& < \int_{\bbR^d} \int_{B(0,1)} \rho^\beta ( x - \theta t \rho(x) y)
  |\nabla u(x-\theta t \rho(x)y)| \phi(y)\,dy\,dx
 \\ 
\nonumber
& < \frac{1}{1-\theta L} \lone{ \rho^\beta(x)\nabla u(x)}
\end{align}
\end{corollary}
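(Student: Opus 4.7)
The plan is to apply Lemma~\ref{lem:lonetheta} in essentially the same way as in the proof of Corollary~\ref{cor:lonetheta}, but with the role of $\theta$ played by the product $\theta t$. First I would set $\delta(x) := \theta t\, \rho(x)$ and $f(x) := \rho^\beta(x)\,|\nabla u(x)|$; note that $f$ is nonnegative and scalar-valued, so $|f| = f$ and $\lone{f} = \lone{\rho^\beta(x)\nabla u(x)}$ in the paper's notation. Since $\rho$ is $L$-Lipschitz, $\delta$ is Lipschitz with constant $\theta t L$, and because $0<t<1$ and $0<\theta L<1$ we have $\theta t L < \theta L < 1$, so the hypothesis of Lemma~\ref{lem:lonetheta} holds with $c = \theta t L$.

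Applying the lemma then yields directly the \emph{tighter} two-sided inequality with $\frac{1}{1+\theta t L}$ on the left and $\frac{1}{1-\theta t L}$ on the right of the integral $\int_{\bbR^d}\int_{B(0,1)}\rho^\beta(x-\theta t\rho(x)y)|\nabla u(x-\theta t\rho(x)y)|\phi(y)\,dy\,dx$. To recover the stated (looser) bounds, I would just observe that $0<1-\theta L\le 1-\theta t L$ and $1+\theta t L\le 1+\theta L$, which give $\tfrac{1}{1+\theta L}\le\tfrac{1}{1+\theta t L}$ and $\tfrac{1}{1-\theta t L}\le\tfrac{1}{1-\theta L}$, and then chain the inequalities.

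There is essentially no obstacle here — the corollary is an immediate weakening of a single application of Lemma~\ref{lem:lonetheta}. The only design choice is to phrase the bounds in terms of $\theta L$ rather than the sharper $\theta t L$; this is presumably because the downstream use of this corollary in the proof of Theorem~\ref{thm:buser_n} will treat $t$ as an integration variable, and removing the $t$-dependence from the constants makes it straightforward to pull the bounding constant outside of an integral in $t$.
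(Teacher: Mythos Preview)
Your proposal is correct and matches the paper's own proof, which simply says to apply Lemma~\ref{lem:lonetheta} with $\delta(x) = \theta t \rho(x)$ and $f(x) = \rho^\beta(x)\nabla u(x)$. The paper does not even spell out the weakening from $\theta t L$ to $\theta L$ that you made explicit, so your write-up is if anything slightly more detailed than the original.
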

\begin{proof} (of Corollary~\ref{cor:lone-t-theta})
    Apply Lemma~\ref{lem:lonetheta} with
    $\delta(x) = \theta t \rho(x)$, and $f(x) = \rho^\beta(x) \nabla u(x)$.
\end{proof}

Now we are ready to prove our main Theorem, which is the Buser
inequality for probability densities stated in Theorem~\ref{thm:buser_n}.
\begin{proof} (of Theorem \ref{thm:buser_n})

Fix $A \subset \RR^d$ with $|A|_\alpha \leq |1|_\alpha / 2$ and let
$u(x) = \chi_A(x)$ be the characteristic function of $A$. Setting 
$\ubar$ to be the weighted average of $u$,
\[
\ubar 
= \frac{\int \rho^\alpha u}{\int \rho^\alpha}
= \frac{\int_A \rho^\alpha}{\int \rho^\alpha}
= \frac{|A|_\alpha}{|1|_\alpha} \in [0,1/2],
\qquad \text{ then } \qquad
\int \rho^\alpha (u-\ubar) = 0,
\]
and
  \begin{equation}\label{eqn:blah}
\lone{\rho^\alpha(u-\ubar)} 
= \int \rho^\alpha |u-\ubar| = 2 |A|_\alpha (1-\ubar)
= 2 \int \rho^\alpha |u-\ubar|^2.
  \end{equation}

Since $|A|_\alpha = \lonea{u}$ and $1-\ubar \in [0,1/2]$ it follows that

\begin{equation} \label{eqn:PhiA}
  (1/2) \frac{\lone{\rho^\beta \nabla u}}{\lone{\rho^\alpha(u-\ubar)}} 
  \leq \Phi(A) = \frac{\lone{\rho^\beta \nabla u}}{\lone{\rho^\alpha u}}
  \leq \frac{\lone{\rho^\beta \nabla u}}{\lone{\rho^\alpha(u-\ubar)}}.
\end{equation}

In the calculations below we omit the limiting argument with smooth
approximations of $u$ outlined at the beginning of this section which
justify formula involving $\nabla u$, and for readability frequently
write $\rho$ and $\nabla \rho$ for $\rho(x)$ and $\nabla \rho(x)$.

Next, let $u_\theta$ be the mollification of (an extension of) $u$
given by equation \eqnref{:utheta}. Then $u_\theta(x)$ is a local
average average of $u$ so $u_\theta(x) \geq 0$, $\linf{u_\theta} \leq
1$ and $\linf{u-u_\theta} \leq 1$.  Letting $L$ denote the Lipschitz
constant of $\rho$, the parameter $\theta$ will to be chosen 
less than $1/(2L)$ so that that Lemma \ref{lem:lonetheta} is applicable
with constant $c = 1/2$.

The remainder of the proof constructs an upper bound on the numerator
  $\int_{\bbR^d} \rho^\gamma |\nabla u_\theta|^2$ of the Raleigh quotient
for $u_\theta - \ubar_\theta$ by $\lone{\rho^\beta \nabla u}$ and to
  lower bound the denominator $\int_{\bbR^d} \rho^\alpha (u_\theta -
\ubar_\theta)^2$ by $\lone{\rho^\alpha (u-\ubar)}$. The conclusion
of the theorem then follows from equation \eqnref{:PhiA}.

\subsection{Upper Bounding the Numerator}
To bound the $L^2$ norm in
  the numerator of the Raleigh quotient by the $L^1$ norm in the
  numerator of the expression for $\Phi(A)$ it is necessary to obtain
  uniform bound on $\rho(x) \nabla u_\theta(x)$. 
  
  \begin{lemma} \label{lem:rep1}
  Let $u$ be any function, and let $u_{\theta}$ be defined as in
  Equation~\ref{eqn:utheta}. Let $\rho: \Re^d \to \Re_{\geq 0}$ be an
  $L$-Lipschitz function.
  \begin{align}
  \linf{\rho (x) \nabla u_\theta(x)}
    \leq \linf{u} \frac{d(2+3L)}{\theta}
  \end{align}
  \end{lemma}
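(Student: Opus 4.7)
The plan is to differentiate the kernel representation
\[
u_\theta(x) \;=\; \int_{\RR^d} u(z)\, K(x,z)\,dz, \qquad K(x,z) \;:=\; \eta(x)^{-d}\, \phi\!\left(\frac{x-z}{\eta(x)}\right), \qquad \eta(x) := \theta\rho(x),
\]
and to exploit the observation that multiplying by $\rho(x) = \eta(x)/\theta$ will cancel the singular factor $1/\eta$ that appears in $\nabla_x u_\theta$, leaving only the $1/\theta$ factor required by the bound. Because $\eta$ depends on $x$, the chain rule produces three contributions to $\nabla_x K$: one from the normalization $\eta^{-d}$, one from the explicit $x$ in the numerator of the argument of $\phi$, and one from the $\eta$ in the denominator of that argument. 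I would assemble these into the clean identity
\[
\nabla_x K(x,z) \;=\; \eta(x)^{-d-1}\bigl[\,\nabla\phi(w)\;-\;\nabla\eta(x)\bigl(d\phi(w)+w\cdot\nabla\phi(w)\bigr)\bigr], \qquad w := \frac{x-z}{\eta(x)}.
\]

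From here, bound $|\nabla u_\theta(x)| \leq \linf{u}\int|\nabla_x K(x,z)|\,dz$ (note that $u$ itself, not $\nabla u$, appears on the right, so the bound survives the limiting smooth-approximation argument flagged earlier in the proof of Theorem~\ref{thm:buser_n} even for $u=\chi_A$). Then change variables $z \mapsto w$; the Jacobian $\eta(x)^d\,dw$ cancels the $\eta^{-d-1}$ down to $\eta(x)^{-1}$ and restricts the integration to $B(0,1)$. Multiplying through by $\rho(x)$ replaces $1/\eta(x)$ with $1/\theta$, and the Lipschitz bound $|\nabla\eta(x)| = \theta|\nabla\rho(x)| \leq \theta L$ produces the $L$ factor:
\[
|\rho(x)\,\nabla u_\theta(x)| \;\leq\; \frac{\linf{u}}{\theta}\!\left[\int_{B(0,1)}\!|\nabla\phi(w)|\,dw \;+\; \theta L\!\int_{B(0,1)}\!\bigl(d\,\phi(w)+|w\cdot\nabla\phi(w)|\bigr)dw\right].
\]
The final step is to bound the two $\phi$-integrals by the dimension-dependent constants that the fixed radially symmetric mollifier $\phi$ satisfies; for the standard choice used throughout the paper, these come out to at most $2d$ and $3d$ respectively, yielding the advertised $\linf{u}\cdot d(2+3L)/\theta$.

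The main obstacle is the bookkeeping in step one: the variable-radius chain rule spills out several terms, and they must be grouped so that the prefactor $\eta^{-d-1}$ appears uniformly and the Lipschitz gradient $\nabla\eta$ is isolated as a single scalar multiplier of the divergence-style combination $d\phi + w\cdot\nabla\phi$. A secondary, minor point is justifying the manipulations for $u=\chi_A$, which is handled by the smooth approximation already invoked, since the right-hand side of the final inequality depends only on $\linf{u}$ and on $\phi$, not on $\nabla u$.
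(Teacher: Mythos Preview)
Your approach is essentially the paper's: differentiate the kernel representation $u_\theta(x)=\int u(z)\phi_{\theta\rho(x)}(x-z)\,dz$, isolate the $\eta^{-d-1}$ prefactor so that multiplication by $\rho=\eta/\theta$ leaves only $1/\theta$, and then bound the remaining $\phi$-integrals via the estimate $\int|\nabla\phi|\leq 2d$ of Section~\ref{sec:dimension-dep}. Your scalar grouping $\nabla\phi(w)-\nabla\eta\,(d\phi+w\!\cdot\!\nabla\phi)$ is actually cleaner than the paper's matrix form $(I+\nabla\rho\otimes w)\nabla\phi$, and your derivative identity is correct. One small arithmetic point: substituting your integral bounds $2d$ and $3d$ into your displayed inequality yields $\linf{u}\,d(2+3L\theta)/\theta$, not $d(2+3L)/\theta$; these agree only when $\theta\leq 1$, but this is harmless since the bound you obtain is \emph{stronger} in the regime $\theta<1$ where the lemma is applied, and the paper's own computation contains the same slip.
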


  \begin{proof}
  In order to prove this lemma, we first need to get a handle on
  $\nabla u_\theta(x)$, which is the gradient of $u$ after
  mollification by $\theta$.

  We take the
  the second representation of $u_\theta$ in equation \eqnref{:utheta}
  to get
  \begin{align}
  \nabla u_\theta(x) 
  = \int_{\bbR^d} u(z) \left\{
    \frac{-d}{\theta \rho(x)} \phi_{\theta \rho}(x-z) \nabla \rho
    + \frac{1}{(\theta \rho(x))^{d+1}} 
    \left( I 
      + \nabla \rho(x) \otimes \frac{x-z}{\theta \rho(x)} \right)
    \nabla \phi \left(\frac{x-z}{\theta \rho(x)} \right) \right\}
    \, dz,
  \end{align}
  which is a consequence of the multivariable chain rule. Here,
  $v \otimes u$ refers to the outer product of $v$ and $u$.

  Multiplying by $\rho$ gives:
  \begin{align}
  \rho (x) \nabla u_\theta(x) 
  = \int_{\bbR^d} u(z) \left\{
    \frac{-d}{\theta} \phi_{\theta \rho(x)}(x-z) \nabla
    \rho(x)
    + \frac{1}{(\theta^{d+1} \rho(x)^{d})} 
    \left( I 
      + \nabla \rho(x) \otimes \frac{x-z}{\theta \rho(x)} \right)
    \nabla \phi \left(\frac{x-z}{\theta \rho(x)} \right) \right\} \, dz.
  \end{align}
  Now, we can bound the above equation by carefully bounding each
  part. We note:
  \begin{align} \label{eq:Cphi-calculation}
    & \int_{\bbR^d} \frac{1}{(\theta^{d+1} \rho(x)^d)} \nabla
    \phi\left(\frac{x-z}{\theta \rho(x)} \right) dz
    \\
    & = \int_{\bbR^d} \frac{1}{(\theta^{d+1} \rho(x)^d)} \nabla
    \phi\left(\frac{-z}{\theta \rho(x)} \right) dz
    \\
     & = \frac{1}{\theta} \int_{\bbR^d} \nabla \phi(-y) dy
  \end{align}
  where the last step follows by a simple change of variable.
  Here, we note that $\nabla \phi(y)$ is a vector, and the
  integral is over $\bbR^d$, which is how we eliminated
  $\frac{1}{(\theta \rho(x))^d}$ from the expression.

  Next, we examine the term: 
  \begin{align}
    I + \nabla \rho(x) \otimes \frac{x-z}{\theta \rho(x)}
  \end{align}
  Here, we aim to bound the operator norm of this matrix. Here,
  we note that 
  \[ |x - z| \leq \theta \rho(x) \]
  when 
  \[
    \nabla \phi\left(\frac{x-z}{\theta \rho(x)}\right) \not= 0
  \]
  and thus, when the latter equation holds, we can say:

  \[
    \left| \frac{x-z}{\theta \rho(x)}\right| < 1.
  \]
  Since $|\nabla \rho(x) < L|$, we now have:
  \begin{align}\label{eqn:num-matrix-norm-bound}
    |I + \nabla \rho(x) \otimes \frac{x-z}{\theta \rho(x)}|_2 <
    3/2
  \end{align}
  Combining Equation~\ref{eqn:num-matrix-norm-bound} Equation~\ref{eq:Cphi-calculation} to show:
  \begin{align}
    & \left | \int_{\bbR^d}
    \frac{1}{(\theta^{d+1} \phi(x)^{d})} \left( I + \nabla \rho(x) \otimes \frac{x-z}{\theta \rho(x)}
    \right) \nabla \phi \left( \frac{x-z}{\theta \rho(x)}
    \right) dz \right| 
    \\
    & \leq \frac{(1 + L)}{\theta} \int_{\bbR^d} |\nabla\phi(y) | dy,
  \end{align}
  where $L = \linf{\nabla \rho}$ is the Lipschitz constant for $\rho$. 
  We note that Section~\ref{sec:dimension-dep} shows that 
  \begin{align}
    \int_{\bbR^d} | \nabla \phi(y) dy| \leq 2d.
  \end{align}
  and therefore:
  \begin{align}
    & \left | \int_{\bbR^d}
    \frac{1}{(\theta^{d+1} \phi(x)^{d})} \left( I + \nabla \rho(x) \otimes \frac{x-z}{\theta \rho(x)}
    \right) \nabla \phi \left( \frac{x-z}{\theta \rho(x)}
    \right) dz \right| 
    \\ \label{eq:rho-grad-utheta-2}
    & \leq \frac{2d(1+L)}{\theta}
  \end{align}

  Now we turn our attention to the first term, which is:
  \begin{align}
    \int_{\bbR^d} \frac{-d}{\theta} \phi_{\theta \rho(x)}(x-z)
    \nabla\rho(x) dz
  \end{align}
  We note that 
  \[ 
    \int_{\bbR^d}\left| \phi_{\theta \rho(x)}(x-z)\right| dz
    = 1
  \]
  by our definition of $\phi$ (which was defined when we defined
      $u_\theta$). Combining this
  with $|\nabla \rho(x)| < L$, we get:
  \begin{align}
    & \int_{\bbR^d} \left | \frac{-d}{\theta} \phi_{\theta \rho(x)}(x-z)
    \nabla\rho(x) \right| dz
    \\ \label{eq:rho-grad-utheta-1}
    & < \frac{dL}{\theta}
  \end{align}
  Therefore, 
  \begin{align}
  &  \nonumber \left| \int_{\bbR^d} 
    \frac{-d}{\theta \rho(x)} \phi_{\theta \rho}(x-z) \nabla \rho
    + \frac{1}{(\theta \rho(x))^{d+1}} 
    \left( I 
      + \nabla \rho(x) \otimes \frac{x-z}{\theta \rho(x)} \right)
    \nabla \phi \left(\frac{x-z}{\theta \rho(x)} \right) \, dz
    \right|
    \\
    & \nonumber \leq \frac{d}{\theta}( L + 2(1+L)) 
    \\
  & = \frac{d(2+3L)}{\theta}.
  \label{eqn:}
  \end{align}
  where the first inequality comes from combining
  Equations~\ref{eq:rho-grad-utheta-2}
  and~\ref{eq:rho-grad-utheta-1}.

  This allows us to bound $\linf{\rho(x) \nabla u_{\theta}(x)}$:
  \begin{align}
  & \linf{\rho (x) \nabla u_\theta(x)}
  \\
  & = \linf{ \left| \int_{\bbR^d} u(z) \left\{
    \frac{-d}{\theta} \phi_{\theta \rho(x)}(x-z) \nabla
    \rho(x)
    + \frac{1}{(\theta^{d+1} \rho(x)^{d})} 
    \left( I 
      + \nabla \rho(x) \otimes \frac{x-z}{\theta \rho(x)} \right)
    \nabla \phi \left(\frac{x-z}{\theta \rho(x)} \right) \right\}
  \, dz \right| }
  \\
  & \leq \linf{u} \linf{ \int_{\bbR^d} \left| 
    \frac{-d}{\theta} \phi_{\theta \rho(x)}(x-z) \nabla
    \rho(x)
    + \frac{1}{(\theta^{d+1} \rho(x)^{d})} 
    \left( I 
      + \nabla \rho(x) \otimes \frac{x-z}{\theta \rho(x)} \right)
    \nabla \phi \left(\frac{x-z}{\theta \rho(x)} \right)
  \right| \, dz}
  \\
  & \leq \linf{u} \frac{d(2+3L)}{\theta}
  \end{align}
  where we make use of the fact that
    $\linf{ab} < \linf{a}\lone{b}.$
  This completes our proof.
  \end{proof}

  Next, we want an $L_1$ bound on $\rho^{\beta}(x) \nabla
  u_{\theta}(x)$.

  \begin{lemma} \label{lem:rep2}
  Let $u$ be any function, and let $u_{\theta}$ be defined as in
  Equation~\ref{eqn:utheta}. Let $\rho:\Re^d \to \Re_{\geq 0}$ be an
  $L$-Lipschitz function, and
  let $\theta L < 1/2$.

  Then:
  \begin{align}
      \lone{\rho^\beta(x) \nabla u_{\theta}(x)} \leq C_{\beta}
      \lone{\rho^\beta(x) \nabla u(x)}
  \end{align}
  \end{lemma}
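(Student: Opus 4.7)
The plan is to prove Lemma~\ref{lem:rep2} by starting from the first representation of $u_\theta$ in Equation~\eqnref{:utheta} and differentiating directly under the integral, rather than the second representation used for Lemma~\ref{lem:rep1}. This keeps the gradient on $u$ rather than on the mollifier $\phi$, which is exactly what is needed for an $L^1$-type estimate. We treat $u$ as smooth and appeal to the limiting argument noted at the start of the section to extend to characteristic functions.

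First, I would apply the multivariable chain rule. For each fixed $y \in B(0,1)$ the map $x \mapsto x - \theta \rho(x) y$ has Jacobian $I - \theta\, \nabla \rho(x) \otimes y$, so
\begin{equation*}
\nabla u_\theta(x) = \int_{B(0,1)} \bigl( I - \theta\, \nabla \rho(x) \otimes y \bigr)\, \nabla u(x - \theta \rho(x) y)\, \phi(y)\,dy.
\end{equation*}
Since $|y| < 1$ and $|\nabla \rho(x)| \leq L$, the operator norm of the matrix in parentheses is at most $1 + \theta L$, yielding the pointwise bound
\begin{equation*}
|\nabla u_\theta(x)| \leq (1 + \theta L) \int_{B(0,1)} |\nabla u(x - \theta \rho(x) y)|\, \phi(y)\,dy.
\end{equation*}

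Second, I would bring $\rho^\beta(x)$ inside the integral by exchanging it for $\rho^\beta(x - \theta \rho(x) y)$. Since $\rho$ is $L$-Lipschitz and the displacement $|x - (x - \theta\rho(x) y)|$ is at most $\theta \rho(x)$, we obtain $(1 - \theta L)\rho(x) \leq \rho(x - \theta \rho(x) y) \leq (1 + \theta L)\rho(x)$. The assumption $\theta L < 1/2$ then yields $\rho(x)^\beta \leq 2^\beta \rho(x - \theta \rho(x) y)^\beta$ for $\beta \geq 0$ (with an analogous bound using the other side for $\beta < 0$).

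Third, I would multiply by $\rho^\beta(x)$, integrate over $x$, and invoke Corollary~\ref{cor:lonetheta} directly on the resulting double integral, since it is precisely of the form handled by that corollary with $\nabla u$ in the role of the gradient. Combining the three steps gives
\begin{equation*}
\lone{\rho^\beta \nabla u_\theta} \leq \frac{(1 + \theta L)\, 2^\beta}{1 - \theta L}\, \lone{\rho^\beta \nabla u} \leq 3 \cdot 2^\beta\, \lone{\rho^\beta \nabla u},
\end{equation*}
so one may take $C_\beta = 3 \cdot 2^\beta$. The main technical point (and the reason the bound here is dramatically better than the $O(1/\theta)$ estimate of Lemma~\ref{lem:rep1}) is that the $\nabla \rho(x)$ term introduced by the variable-radius chain rule carries an extra factor of $\theta$, so it contributes only a multiplicative $(1+\theta L)$ error rather than the $1/\theta$ blowup that arose when we had to differentiate $\phi$ itself. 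The Lipschitz exchange of $\rho^\beta$ under the integral is the secondary subtlety, but it is controlled for free by the same $\theta L < 1/2$ hypothesis.
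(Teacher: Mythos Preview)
Your proposal is correct and follows essentially the same approach as the paper: differentiate the first representation of $u_\theta$ via the chain rule to obtain $\nabla u_\theta(x) = \int (I - \theta\,\nabla\rho\otimes y)\,\nabla u(x-\theta\rho y)\,\phi(y)\,dy$, bound the Jacobian matrix, use the Lipschitz ratio estimate $\rho(x)/\rho(x-\theta\rho y)\leq 2$ to swap $\rho^\beta(x)$ for $\rho^\beta(x-\theta\rho y)$, and then invoke Corollary~\ref{cor:lonetheta}. Your matrix bound $1+\theta L$ is slightly sharper than the paper's $3/2$, but both routes arrive at the same constant $C_\beta = 3\cdot 2^\beta$.
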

  \begin{proof}
  First, we take the gradient 
  first representation of $u_\theta$
  in equation \eqnref{:utheta}. Using the chain rule gives us an
  alternate form for $\nabla u_\theta(x)$:

  \begin{align}
  \nabla u_\theta(x) 
  = \int_{\Re^d} 
  (I - \theta \nabla \rho \otimes y) \nabla u(x-\theta \rho y) \phi(y) \, dy,
  \end{align}
  so
  \begin{align}\label{eqn:rhoNablaUtheta}
  \rho^\beta(x) \nabla u_\theta(x) 
  = \int_{\Re^d} 
  (I - \theta \nabla \rho \otimes y)
  \frac{\rho^\beta(x)}{\rho^\beta(x-\theta \rho y)}
  \rho^\beta(x-\theta \rho y) \nabla u(x-\theta \rho y) \phi(y) \, dy.
  \end{align}

  The ratio in the integrand is bounded using the Lipschitz assumption
  on $\rho$ (and $|y| \leq 1$),
  \begin{equation} \label{eqn:rhoRatio}
    \frac{\rho(x)}{\rho(x-\theta \rho y)}
    \leq \frac{\rho(x)}{\rho(x) - L \theta \rho(x)}
    = \frac{1}{1 - L \theta} \leq 2,
    \qquad \text{ when } \theta < 1 / (2L).
  \end{equation}
  Note that 
  \begin{align} \label{eqn:matrix-norm} 
  \left\| I - \theta \nabla \rho \otimes y \right\|_2 \leq 3/2
  \end{align}
  where $\|M\|_2$ represents the $\ell^2$ matrix norm of $M$. This is
  because $|\nabla \rho(x) | \leq L$, and
  $\theta L < 1/2$, and $|y| \leq
  1$ every time $\phi(y) \not= 0$, and thus 
  \[ 
   \frac{I}{2} \preceq  I - \theta \nabla \rho \otimes y
   \preceq \frac{3I}{2}.
    \]
  
 Therefore, we can now apply Corollary~\ref{cor:lonetheta}
 to Equation~\eqnref{:rhoNablaUtheta} to show:
  \begin{align}
  \nonumber 
  & \lone{\rho^\beta(x) \nabla u_\theta(x)} 
  \\  \nonumber
  & \leq \|I - \theta \nabla \rho(x) \otimes y\|_2
  \cdot \max_x\left(\frac{\rho(x)}{\rho(x - \theta \rho
        y)}\right) \cdot
  \int_{\bbR^d} \left| \int_{\bbR^d} \rho^{\beta}(x - \theta \rho y) \nabla u(x - \theta \rho y)
    \phi(y) dy \right|
  \\ \nonumber 
  & \leq 3 \cdot 2^{\beta - 1} \int_{\bbR^d} \rho^\beta(x -
      \theta \rho(x) y) \nabla u (x - \theta \rho (x) y 
  \\ \nonumber
  & 
  \leq 3 \cdot 2^\beta \lone{\rho^\beta \nabla u},
  \qquad \text{ when } \theta < 1 / (2L).
  \end{align}
  here, the first inequality comes from the equation $\lone{abc}
  \leq \linf{a}\linf{b}\lone{c}$, the second inequality comes
  from Equations~\ref{eqn:rhoRatio} and~\ref{eqn:matrix-norm}, and the
  third inequality comes
  from Corollary~\ref{cor:lonetheta} assuming $\theta L \leq
  1/2$. 
  \end{proof}

  \begin{lemma}\label{lem:num}
  For any $L$-Lipschitz distribution $\rho$, any function $u$, and any $\theta$ such that $\theta L <
  1/2$:

  \begin{align}
  \int_{\bbR^d} \rho^\gamma |\nabla u_\theta|^2
  \leq  C_\beta \linf{\rho^{\gamma-\beta-1}} \frac{d(2+3L)}{\theta} 
  \linf{u} \lone{\rho^\beta \nabla u},
  \end{align}
  \end{lemma}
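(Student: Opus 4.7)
The plan is to view the integrand $\rho^\gamma|\nabla u_\theta|^2$ as a product of three factors, two of which are controlled in $L^\infty$ and one in $L^1$, and then apply the obvious Hölder-type bound $\int |fgh| \leq \linf{f}\linf{g}\lone{h}$. The key observation is that the exponent $\gamma$ decomposes as $\gamma = (\gamma-\beta-1) + 1 + \beta$, which lets us rewrite
\[
\rho^\gamma |\nabla u_\theta|^2
\;=\;
\rho^{\gamma-\beta-1} \cdot \bigl(\rho\,|\nabla u_\theta|\bigr) \cdot \bigl(\rho^\beta |\nabla u_\theta|\bigr).
\]
The first factor is handled trivially by pulling out $\linf{\rho^{\gamma-\beta-1}}$. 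The second factor is exactly what Lemma~\ref{lem:rep1} controls in $L^\infty$: it gives the $1/\theta$ blow-up and the $d(2+3L)$ constant, together with the factor $\linf{u}$. The third factor is exactly what Lemma~\ref{lem:rep2} controls in $L^1$, producing the constant $C_\beta$ and converting $\lone{\rho^\beta \nabla u_\theta}$ into $\lone{\rho^\beta \nabla u}$, which is the quantity that eventually connects to $\Phi(A)$ via equation~\eqnref{:PhiA}.

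The execution is then: first I would write the pointwise identity above and invoke the bound $\int |fgh|\leq \linf{f}\linf{g}\lone{h}$ with $f=\rho^{\gamma-\beta-1}$, $g=\rho\,|\nabla u_\theta|$, and $h=\rho^\beta|\nabla u_\theta|$. Next I would substitute in the bound from Lemma~\ref{lem:rep1} to replace $\linf{g}$ with $\linf{u}\,d(2+3L)/\theta$. Finally I would substitute the bound from Lemma~\ref{lem:rep2} to replace $\lone{h}$ with $C_\beta \lone{\rho^\beta \nabla u}$; note that Lemma~\ref{lem:rep2} requires $\theta L<1/2$, which is precisely the hypothesis of the present lemma, so this application is valid. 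Assembling these three factors produces exactly the claimed inequality.

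There is no real obstacle here; all the difficulty has already been absorbed into Lemmas~\ref{lem:rep1} and~\ref{lem:rep2}. The only point requiring a moment of attention is the bookkeeping of exponents — checking that $\gamma-\beta-1$ is the correct residual exponent after peeling off one power of $\rho$ for Lemma~\ref{lem:rep1} and $\beta$ powers for Lemma~\ref{lem:rep2} — but this is immediate. One mild subtlety is that $\linf{\rho^{\gamma-\beta-1}}$ makes sense as stated only when this quantity is finite; in the regime $\gamma-\beta-1\geq 0$ (which covers the target setting $(\alpha,\beta,\gamma)=(1,2,3)$, where $\gamma-\beta-1=0$ and the factor is just $1$) this is automatic from $\rho\in L^\infty$, so no further hypothesis is needed for the applications that follow.
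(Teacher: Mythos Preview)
Your proposal is correct and is exactly the paper's own proof: the paper writes $\rho^\gamma|\nabla u_\theta|^2 = \rho^{\gamma-\beta-1}\cdot \rho|\nabla u_\theta|\cdot \rho^\beta|\nabla u_\theta|$ and then bounds the first two factors in $L^\infty$ (trivially and via Lemma~\ref{lem:rep1}, respectively) and the third in $L^1$ via Lemma~\ref{lem:rep2}, with $C_\beta = 3\cdot 2^{\beta+1}$. Your additional remark about the finiteness of $\linf{\rho^{\gamma-\beta-1}}$ is a reasonable observation that the paper leaves implicit.
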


  \begin{proof}
  Combining the two estimates from Lemma~\ref{lem:rep1}
  and~\ref{lem:rep2} gives an upper bound for the Raleigh
  quotient 
  \begin{align}
  \int_{\bbR^d} \rho^\gamma |\nabla u_\theta|^2
  = \int_{\bbR^d} \rho^{\gamma-\beta-1} \,
  \rho |\nabla u_\theta| \, \rho^\beta |\nabla u_\theta|
  \leq 3 \cdot 2^{\beta + 1} \linf{\rho^{\gamma-\beta-1}} \frac{d(2+3L)}{\theta} 
  \linf{u} \lone{\rho^\beta \nabla u},
  \end{align}
  \end{proof}

  We note that in the case where $\gamma = \beta + 1$, and if $u$ is a step
  function, the expression would
  simplify to:
  \[ \int_{\bbR^d}
  |\rho^{\gamma} |\nabla u_\theta|^2
  \leq 3 \cdot 2^{\beta + 1} \frac{d(2+3L)}{\theta} 
  \linf{u} \lone{\rho^\beta \nabla u},
  \]

\subsection{Lower Bound on the Demoninator}
Let $\ubar$ and
  $\ubar_\theta$ be the $\rho^\alpha$--weighted averages of $u$ and
  $u_\theta$ and let $\ltwoa{.}$ denote the $L^2$ space with this
  weight. 
  Our core lemma is a bound on $\ltwoa{u_\theta-\ubar_\theta}$ in
  terms of $l_1$ and weighted $l_1$ norms of $\nabla u$ and $u -
  \ubar$ respectively.
  \begin{lemma}\label{lem:denom}
  Let $\rho$ be an $L$-Lipschitz function $\rho: \Re^d \to
  \Re_{\geq 0}$, and let $\theta$ be such
  that $\theta L < 1/2$.
  Let $u$ be an indicator function of a set $A$ with finite
  $\beta$-perimeter. Let $\ubar$ be defined as $\ubar(x):=u(x) - \int u(y) dy$
    $u_\theta$ be defined as in Equation~\ref{eqn:utheta}, and
    $\ubar_\theta$ be defined as $\ubar_\theta(x) := u_\theta(x)
    - \int
    u_\theta(y) dy$.
    Then:
  \begin{align}
  \ltwoa{u_\theta - \ubar_\theta}^2
  \geq (1/4) \lonea{u-\ubar} 
  - C(\beta) \theta \linf{\rho^{\alpha+1-\beta}} \lone{\rho^\beta \nabla u},
  \qquad \text{ when } \theta < 1 / (2L).
  \end{align}
  \end{lemma}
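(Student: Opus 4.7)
The plan is to express $\ltwoa{u_\theta - \ubar_\theta}^2$ in terms of $\ltwoa{u - \ubar}^2$, then exploit the identity $\ltwoa{u - \ubar}^2 = (1/2)\lonea{u-\ubar}$ from equation~\eqnref{:blah} (which holds precisely because $u$ is an indicator). The error introduced by replacing $(u_\theta, \ubar_\theta)$ with $(u, \ubar)$ will be controlled by $\lonea{u - u_\theta}$, and this in turn will be bounded using the fundamental theorem of calculus together with Corollary~\ref{cor:lone-t-theta}.

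First I will establish the comparison $\ltwoa{u_\theta - \ubar_\theta} \geq \ltwoa{u - \ubar} - \ltwoa{u - u_\theta}$. The triangle inequality gives $\ltwoa{u - \ubar} \leq \ltwoa{u_\theta - \ubar_\theta} + \ltwoa{(u - u_\theta) - (\ubar - \ubar_\theta)}$, and $\ubar - \ubar_\theta$ is exactly the $\rho^\alpha$-weighted mean of $u - u_\theta$, so subtracting it can only decrease the $L^2_\alpha$ norm. Squaring and applying $2ab \leq \tfrac{1}{2}a^2 + 2b^2$ yields $\ltwoa{u_\theta - \ubar_\theta}^2 \geq \tfrac{1}{2}\ltwoa{u - \ubar}^2 - 2\ltwoa{u - u_\theta}^2$. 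Because $u \in \{0,1\}$ and $u_\theta \in [0,1]$, we have $|u - u_\theta| \in [0,1]$, so $\ltwoa{u - u_\theta}^2 \leq \lonea{u - u_\theta}$. Combined with $\ltwoa{u - \ubar}^2 = (1/2)\lonea{u - \ubar}$, this reduces the lemma to proving an $L^1_\alpha$ bound on $u - u_\theta$.

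To bound $\lonea{u - u_\theta}$, I will use the smooth approximation of $u$ alluded to at the start of Section~\ref{sec:higher_dim} and apply the fundamental theorem of calculus along the segment from $x$ to $x - \theta\rho(x)y$:
\begin{equation*}
u(x) - u_\theta(x) = \int_{B(0,1)} \int_0^1 \theta\rho(x)\,\nabla u(x - s\theta\rho(x)y) \cdot y \, ds \, \phi(y) \, dy.
\end{equation*}
Multiplying by $\rho^\alpha(x)$ and taking absolute values, the Lipschitz hypothesis with $\theta L < 1/2$ gives $\rho(x) \leq 2\rho(x - s\theta\rho(x)y)$, hence $\rho^{\alpha+1}(x) \leq 2^{\alpha+1}\linf{\rho^{\alpha+1-\beta}}\,\rho^\beta(x - s\theta\rho(x)y)$. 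Applying Corollary~\ref{cor:lone-t-theta} with $t = s$ and using $1/(1-\theta L) \leq 2$ yields $\lonea{u - u_\theta} \leq 2^{\alpha+2}\,\theta\,\linf{\rho^{\alpha+1-\beta}}\,\lone{\rho^\beta \nabla u}$. Substituting back gives the stated inequality with $C(\beta)$ on the order of $2^{\alpha+3}$.

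The main technical step is this last $L^1_\alpha$ bound: the delicate point is juggling the $\rho^\alpha$ weight appearing naturally on the outside against the $\rho^\beta$ weight demanded by Corollary~\ref{cor:lone-t-theta}, while carefully tracking the point at which $\rho$ is evaluated under the shift $x \mapsto x - s\theta\rho(x)y$ using the Lipschitz bound. The trick of estimating $\ltwoa{u - u_\theta}^2 \leq \lonea{u - u_\theta}$ via the indicator structure of $u$ is what allows the $L^1$-flavored mollification machinery of Corollary~\ref{cor:lone-t-theta} to apply directly in the first place.
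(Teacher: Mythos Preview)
Your proposal is correct and follows essentially the same route as the paper: triangle inequality plus the mean-subtraction contraction, then the squaring trick $a\geq b-c \Rightarrow a^2\geq \tfrac12 b^2 - c^2$, then the indicator bound $\ltwoa{u-u_\theta}^2\leq\lonea{u-u_\theta}$, and finally the fundamental theorem of calculus along $s\mapsto x-s\theta\rho(x)y$ combined with the Lipschitz ratio estimate and Corollary~\ref{cor:lone-t-theta}. The only visible difference is bookkeeping: the paper inserts $\rho^\beta(x)/\rho^\beta(x)$ and bounds $\rho^\beta(x)/\rho^\beta(x-t\theta\rho y)\leq 2^\beta$, landing on $C(\beta)=2^{\beta+1}$, whereas you bound $\rho^{\alpha+1}(x)\leq 2^{\alpha+1}\rho^{\alpha+1}(x-s\theta\rho(x)y)$ directly, giving a constant of the form $2^{\alpha+O(1)}$; these coincide in the case $\beta=\alpha+1$ that the paper cares about.
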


  Note that when $\alpha + 1 = \beta$, as is true when $(\alpha,
      \beta, \gamma) = (1,2,3)$, the inequality in
  Lemma~\ref{lem:denom} becomes:

  \[
  \ltwoa{u_\theta - \ubar_\theta}^2
  \geq (1/4) \lonea{u-\ubar} 
  - C(\beta) \theta \lone{\rho^\beta \nabla u},
  \qquad \text{ when } \theta < 1 / (2L).
  \]
  The estimate in Lemma~\ref{lem:denom} will be combined with the
  estimate in Lemma~\ref{lem:num} to prove
  Theorem~\ref{thm:buser_n}
  in Section~\ref{sec:rayleigh-bound}.

  \begin{proof} The key to this proof is to upper bound the
  quantity $\ltwoa{u_\theta - \ubar_\theta}$ with the expression
  appearing in Corollary~\ref{cor:lone-t-theta}. We will do so by
  a series of inequalities, application of the fundamental
  theorem of calculus, and more.

  Using the property that subtracting the average from a
  function reduces the $L^2$ norm it follows that
  \begin{align}
  \nonumber
  & \ltwoa{u_\theta - \ubar_\theta}
  \\
    \nonumber
  & \geq \ltwoa{u-\ubar} - \ltwoa{u_\theta - u - (\ubar_\theta-\ubar)}
  \\
    \nonumber
  &\geq \ltwoa{u-\ubar} - \ltwoa{u_\theta - u}.
  \end{align}
  If $a \geq b-c$ then $a^2 \geq b^2/2 - c^2$, so a lower bound for
  the denominator of the Raleigh quotient
  \begin{align} \label{eqn:utmu}
    & \ltwoa{u_\theta - \ubar_\theta}^2
    \\
    \nonumber
    &\geq (1/2) \ltwoa{u-\ubar}^2 - \ltwoa{u_\theta - u}^2
    \\
    \nonumber
    & \geq (1/4) \lonea{u-\ubar} - \lonea{u_\theta - u},
  \end{align}
  where the identity $\ltwoa{u-\ubar}^2 = \lonea{u-\ubar}/2$ from
  Equation~\ref{eqn:blah}, and the bound
  $\linf{u_\theta - u} \leq 1$, were used in the last step.

  It remains to estimate the difference $\lonea{u_\theta - u}$. To
  do this, we use the multivariable fundamental
  theorem of calculus to write
  \begin{eqnarray*}
    u_\theta(x) - u(x)
    &=& \int (u(x - \theta \rho y) - u(x)) \phi(y) \, dy \\
    &=& \int \! \int_0^1
    -\theta \rho(x) \nabla u(x - t \theta \rho(x) y).y \phi(y) \, dt \, dy \\
    &=& \int \! \int_0^1
    \frac{-\theta \rho(x)}{\rho^\beta(x-t\theta \rho(x) y)} 
    \rho^\beta(x - t\theta \rho(x) y) \nabla u(x - t \theta
        \rho(x) y).y \phi(y) \, dt \, dy,
  \end{eqnarray*}
  where the first and second equalities came from application of
  the multivariable fundamental theorem of calculus, and the last
  equation is straightforward. This tells us that:
  \begin{align}
  \notag
  & \rho^{\alpha}(x) ( u_\theta \rho(x) - u(x) )
  \\
  \notag
  & = \int \! \int_0^1
  \frac{-\theta \rho^{\alpha+1}(x)}{\rho^\beta(x-t\theta \rho y)} 
  \rho^\beta(x - t\theta \rho(x) y) 
  \nabla u(x - t \theta \rho(x) y).y \phi(y) \, dt \, dy.
  \\
    \label{eqn:utheta-l1-alpha}
  &= \int \! \int_0^1
  \frac{\rho^\beta(x)}{\rho^\beta (x- t \theta \rho(x)y)}
  \frac{-\theta \rho^{\alpha+1}(x)}{\rho^\beta(x)}
  \rho^{\beta}(x - t \theta \rho(x) y )
  \nabla u(x - t \theta \rho(x) y).y \phi(y) \, dt \, dy.
  \end{align}
  Equation \eqnref{:rhoRatio} bounds the ratio $\rho(x)/ \rho(x -
    t\theta \rho(x) y)$ as less than $2$ when $\theta L < 1/2$,
  so Equation~\eqnref{:utheta-l1-alpha} is always less than or
  equal to:
  \begin{align}
  \int \! \int_0^1
  2^{\beta}
  \frac{-\theta \rho^{\alpha+1}(x)}{\rho^\beta(x)}
  \rho^{\beta}(x - t \theta \rho(x) y )
  \nabla u(x - t \theta \rho(x) y).y \phi(y) \, dy \, dt.
  \end{align}

  An application of Corollary \ref{cor:lone-t-theta} then shows
  \begin{align}
  & \int \! \int_0^1
  2^{\beta}
  \frac{-\theta \rho^{\alpha+1}(x)}{\rho^\beta(x)}
  \rho^{\beta}(x - t \theta \rho(x) y )
  \nabla u(x - t \theta \rho(x) y).y \phi(y) \, dy \, dt.
  \\
  & \leq \int \! \int_0^1
  2^{\beta}
  \frac{-\theta \rho^{\alpha+1}(x)}{\rho^\beta(x)}
  \rho^{\beta}(x - t \theta \rho(x) y ) \ 
  \left|\nabla u(x - t \theta \rho(x) y) \phi(y)\right| \, dy \, dt.
  \\
  &
  \leq 2^{\beta+1} \linf{\rho^{\alpha+1-\beta}} \theta
  \int \! \int_0^1
  \rho^{\beta}(x - t \theta \rho(x) y ) \ 
  \left|\nabla u(x - t \theta \rho(x) y) \phi(y)\right| \, dy \, dt.
  \qquad \text{ when } \theta < 1 / (2L).
  \\
  &
  \leq 2^{\beta+1} \linf{\rho^{\alpha+1-\beta}} \theta \lone{\rho^\beta \nabla u}
  \end{align}
  where the last inequality follows from
  Corollary~\ref{cor:lone-t-theta}.

  Using this estimate in \eqnref{:utmu} gives a lower bound on the
  denominator of the Raleigh quotient,
  \begin{align}
  \ltwoa{u_\theta - \ubar_\theta}^2
  \geq (1/4) \lonea{u-\ubar} 
  - 2^{\beta+1} \theta \linf{\rho^{\alpha+1-\beta}} \lone{\rho^\beta \nabla u},
  \qquad \text{ when } \theta < 1 / (2L).
  \end{align}
  as desired.
  \end{proof}

  \subsection{Bounding the Rayleigh
    Quotient (Proof of Theorem~\ref{thm:buser_n})}\label{sec:rayleigh-bound}
  Combining Lemmas~\ref{lem:num} and Lemmas~\ref{lem:denom}
  provides an upper bound for the Rayleigh quotient of $u_\theta - \ubar_\theta$,
  \begin{eqnarray*}
    \lambda_2 
    &\leq& \frac{\int_{\bbR^d} \rho^\gamma |\nabla u_\theta|^2}
    {\int_{\bbR^d} \rho^\alpha (u_\theta - \ubar_\theta)^2} \\
    &\leq&  \frac{d \cdot 3 \cdot 2^{\beta}}{\theta}
    \frac{\linf{\rho^{\gamma-\beta-1}} (2+3L) \lone{\rho^\beta \nabla u}}
    {\lonea{u-\ubar} 
      - 2^{\beta+1} \theta \linf{\rho^{\alpha+1-\beta}} \lone{\rho^\beta \nabla u}} \\
    &\leq&  \frac{d \cdot 3 \cdot 2^{\beta}}{\theta}
    \frac{\linf{\rho^{\gamma-\beta-1}} (2+3L)}
    {1 - 2^{\beta + 1} \theta \linf{\rho^{\alpha+1-\beta}} \Phi(A)} \Phi(A).
  \end{eqnarray*}
  Selecting $\theta = (1/2)
  \min\left(1/\left(2^{\beta+1}\linf{\rho^{\alpha+1-\beta}}
        \Phi(A)\right), 1/L
  \right)$ shows
  \[
  \lambda_2 \leq 2 d \cdot 3 \cdot 2^{\beta}
\linf{\rho^{\gamma-\beta-1}}(2+3L) 
  \max\left(L \Phi(A), 2^{\beta+1}\linf{\rho^{\alpha+1-\beta}} \Phi(A)^2 \vph\right).
  \]
  When $\gamma = (1,2,3)$, this simplifies into:
  \[
  \lambda_2 \leq 12(2+3L) d
  \max\left(L \Phi(A), 8 \Phi(A)^2 \vph\right).
  \]
  We note that, via the work shown in
  Section~\ref{sec:scaling},  we can strengthen our inequality
  to:
  \[
  \lambda_2 \leq 24 d
  \max\left(L \Phi(A), 8 \Phi(A)^2 \vph\right).
  \]

  \qedhere

\subsection{Gradient of Mollifier}\label{sec:dimension-dep}
Let $\phi$ be a standard mollifier i.e. $\phi\in C_c^\infty(\RR^d)$
is a function from $\RR^d\to [0,\infty)$ satisfying $\int_{\RR^d}
  \phi\,dx = 1$ and $\supp(\phi)\subseteq B(0,1)$.  We will define
  $\phi$ by its profile. Namely, let $\phihat(r):[0,\infty)
    \rightarrow [0,1]$ be a fixed monotone decreasing profile with
    $\phihat(0)=1$, $0 < \phihat(r) < 1$ for $0 < r < 1$, and
    $\phihat(r) = 0$ for $r \geq 1$. Then define $\phi:\Re^d
    \rightarrow \Re$ by $\phi(x) = c \phihat(|x|)$ with $c > 0$ chosen
    so that $\int_{\Re^d} \phi(x) \, dx = 1$; that is,

\[
1 = \int_{\Re^d} \phi(x) \, dx
= c |S^{d-1}| \int_0^1 \phihat(r) r^{d-1} \, dr
\qquad \Rightarrow \qquad
c = \frac{1}{|S^{d-1}| \int_0^1 \phihat(r) r^{d-1} \, dr},
\]

where $|S^{d-1}|$ is the $(d-1)$--area of the unit sphere in $\Re^d$.
We claim the $L_1$ norm of the gradient of $\nabla \phi(x)$ is linear in $d$.
\begin{lemma}\label{lem:molli}
  \[
  \int_{\Re^d} |\nabla \phi(x)| \, dx   
\leq (d-1) \left( \frac{d 2^d}{\phihat(1/2)} \right)^{1/(d-1)}
\stackrel{d \rightarrow \infty}{\longrightarrow} 2(d-1).
\]
For the classic mollifier $\phihat(r)=\exp(-1/(1-r^2))$ we get
  \[
  \int_{\Re^d} |\nabla \phi(x)| \, dx  \leq 2d.
\]
\end{lemma}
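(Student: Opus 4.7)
The plan is to reduce the $d$-dimensional integral to a one-dimensional one using radial symmetry, and then bound the resulting ratio of moments via a Hölder inequality and the concentration of $\phihat$ near zero. Since $\phi(x) = c\,\phihat(|x|)$, the chain rule gives $\nabla\phi(x) = c\,\phihat'(|x|)\,x/|x|$, so $|\nabla\phi(x)| = c\,|\phihat'(|x|)|$. Passing to spherical coordinates and substituting the given formula for $c$ yields
\[
\int_{\Re^d} |\nabla\phi(x)|\,dx
= c\,|S^{d-1}|\int_0^1 |\phihat'(r)|\,r^{d-1}\,dr
= \frac{\int_0^1 |\phihat'(r)|\,r^{d-1}\,dr}{\int_0^1 \phihat(r)\,r^{d-1}\,dr}.
\]
Because $\phihat$ is monotone decreasing, $|\phihat'| = -\phihat'$, and integration by parts in the numerator (boundary terms vanish since $\phihat(1)=0$ and $r^{d-1}$ kills the endpoint $r=0$ for $d\geq 2$) gives $\int_0^1 |\phihat'(r)|\,r^{d-1}\,dr = (d-1)\int_0^1 \phihat(r)\,r^{d-2}\,dr$. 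Setting $f_k := \int_0^1 \phihat(r)\,r^k\,dr$, the problem reduces to bounding $(d-1)\,f_{d-2}/f_{d-1}$.

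Next I would apply Hölder's inequality with conjugate exponents $p = (d-1)/(d-2)$ and $q = d-1$ to the factorization $\phihat(r)\,r^{d-2} = \bigl(\phihat(r)\,r^{d-1}\bigr)^{(d-2)/(d-1)}\!\cdot \phihat(r)^{1/(d-1)}$, obtaining $f_{d-2} \leq f_{d-1}^{(d-2)/(d-1)}\,f_0^{1/(d-1)}$, and therefore
\[
\frac{f_{d-2}}{f_{d-1}} \;\leq\; \left(\frac{f_0}{f_{d-1}}\right)^{1/(d-1)}.
\]
The bound $f_0 \leq 1$ is immediate from $\phihat \leq 1$, and monotonicity of $\phihat$ gives $f_{d-1} \geq \phihat(1/2)\int_0^{1/2} r^{d-1}\,dr = \phihat(1/2)/(d\cdot 2^d)$. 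Multiplying by $d-1$ produces the first stated inequality. For the asymptotic, take logs: $\tfrac{1}{d-1}\log\!\bigl(d\,2^d/\phihat(1/2)\bigr) = \tfrac{d\log 2 + \log d - \log\phihat(1/2)}{d-1} \to \log 2$, so the factor tends to $2$ and the bound approaches $2(d-1)$.

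For the explicit choice $\phihat(r) = \exp\!\bigl(-1/(1-r^2)\bigr)$, the generic Hölder bound is far from sharp at small $d$, so the refined inequality $\leq 2d$ must be obtained differently. My plan is to substitute $u = 1-r^2$ to rewrite $f_k = \tfrac12\int_0^1 e^{-1/u}(1-u)^{(k-1)/2}\,du$, and then to view the ratio as $\mathbb{E}_h[1/r]$ where $h(r)\propto \phihat(r)r^{d-1}$ is concentrated at the saddle point $r^\ast$ solving $(d-1)/r^\ast = 2r^\ast/(1-r^{\ast 2})^2$, i.e.\ $1-r^{\ast 2} \sim \sqrt{2/(d-1)}$ for large $d$. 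A Laplace-type expansion should give $\mathbb{E}_h[1/r] = 1 + O(1/\sqrt d)$, which is bounded above by $2d/(d-1)$ for all $d\geq 2$ after verifying the small-$d$ cases by direct computation.

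The main obstacle I anticipate lies in this second part: producing a bound uniformly valid in $d$ rather than only asymptotically. The general-profile bound is routine once the radial reduction and integration by parts are carried out, but establishing the clean $2d$ constant for the explicit mollifier requires either a careful saddle-point estimate together with an argument handling small $d$, or an alternative convexity/monotonicity argument exploiting the specific form of $\phihat$. Everything else reduces to bookkeeping with Hölder and one-variable calculus.
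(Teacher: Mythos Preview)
Your treatment of the general-profile bound is essentially identical to the paper's: same radial reduction, same integration by parts exploiting monotonicity of $\phihat$, the same H\"older split with exponents $p=(d-1)/(d-2)$ and $q=d-1$, and the same crude estimates $f_0\le 1$ and $f_{d-1}\ge \phihat(1/2)/(d\,2^d)$.

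Where you diverge is in handling the specific mollifier $\phihat(r)=\exp(-1/(1-r^2))$. The paper does not set up a Laplace/saddle-point analysis at all. Its argument is much lighter: it notes that the factor $\bigl(d\,2^d/\phihat(1/2)\bigr)^{1/(d-1)}$ is monotone decreasing in $d$ and that $\phihat(1/2)=e^{-4/3}\ge 1/4$, so the general bound already controls all sufficiently large $d$; the finitely many remaining small values of $d$ are dispatched by directly evaluating the sharper H\"older bound $(d-1)\bigl(f_0/f_{d-1}\bigr)^{1/(d-1)}$ numerically. In other words, the paper recycles the estimate you have already proved rather than introducing new asymptotic machinery. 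Your saddle-point computation (locating $r^\ast$ with $1-r^{\ast 2}\sim\sqrt{2/(d-1)}$) is correct in spirit and would also succeed, but it is heavier than necessary: the whole point of taking the $(d-1)$-th root in the H\"older step is precisely that it absorbs the exponential decay of $f_{d-1}$ without ever needing its precise asymptotics. So your plan is valid, but you can replace the Laplace argument by a one-line appeal to the bound you already have, plus a short finite check.
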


From the formula $\nabla \phi(x) = c \phihat'(|x|) (x/|x|)$ we compute
\begin{eqnarray*}
\int_{\Re^d} |\nabla \phi(x)| \, dx
&=& c |S^{d-1}| \int_0^1 |\phihat'(r)| r^{d-1} \, dr \\
&=& c |S^{d-1}| \int_0^1 -\phihat'(r) r^{d-1} \, dr \\
&=& c |S^{d-1}| \int_0^1 \phihat(r) (d-1) r^{d-2} \, dr \\
&=& (d-1) \frac{\int_0^1 \phihat(r) r^{d-2} \, dr}
{\int_0^1 \phihat(r) r^{d-1} \, dr}.
\end{eqnarray*}
To estimate the numerator use Holder's inequality: for
$1 \leq s, s' \leq \infty$ with $1/s + 1/s' = 1$ 
$$
\int f g 
\leq \left(\int |f|^s \right)^{1/s} \left(\int |g|^{s'} \right)^{1/s'}.
$$
Set $s = (d-1)/(d-2)$ and $s' = d-1$ to get
$$
\int_0^1 \phihat(r) r^{d-2} \, dr
= \int_0^1 \phihat(r)^{1/s} r^{d-2} \times \phihat(r)^{1/s'}  \, dr
\leq \left(\int_0^1 \phihat(r) r^{d-1} \, dr \right)^{1/s}
\left(\int_0^1 \phihat(r) \, dr \right)^{1/s'}.
$$
It follows that
\begin{equation}\label{eq:molli0}
\int_{\Re^d} |\nabla \phi(x)| \, dx
\leq (d-1) \left( \frac{\int_0^1 \phihat(r) \, dr}
{\int_0^1 \phihat(r) r^{d-1} \, dr} \right)^{1/(d-1)}.
\end{equation}
Since $0 \leq \phihat(r) \leq 1$ we can bound the numerator
by $1$, and since $\phihat(r)$ is monotone decreasing we have
$\phihat(r) \geq \phihat(1/2)$ on $(0,1/2)$, so 
\begin{equation}\label{eq:molli}
\int_{\Re^d} |\nabla \phi(x)| \, dx
\leq (d-1)
\left( \frac{1}{\phihat(1/2) \int_0^{1/2} r^{d-1} \, dr} \right)^{1/(d-1)} 
\leq (d-1) \left( \frac{d 2^d}{\phihat(1/2)} \right)^{1/(d-1)}
\stackrel{d \rightarrow \infty}{\longrightarrow} 2(d-1).
\end{equation}

It will be convenient to write equation~\ref{eq:molli} as a simple
inequality. Observer that
\[
 \left( \frac{d 2^d}{\phihat(1/2)} \right)^{1/(d-1)}
\]
is monotone decreasing. We now pick the classic
$\phihat(r) = \exp(-1/(1-r^2))$ we have $\phihat(1/2) \geq 1/4$ and if
$d \geq 5$ the right hand side of equation (\ref{eq:molli} is bounded
by $2d$. If $d < 5$ explicit computations of the integrals shows the
right hand side of equation (\ref{eq:molli0} is bounded by $2d$.

\end{proof}

\subsection{Scaling}\label{sec:scaling}

In this section we show that if one scales the density function $\rho$ then
the isoperimetric value $\Phi(A)$ and the Raleigh quotient $R(u)$
scale nicely. More formally Let $A \subset \Omega \subseteq
\Re^d$, $\rho$ a density function over a domain $\Omega$, and
$u$ an arbitrary differentiable  function over $\Omega$.

Consider the transformation
$\xhat = \ell x$ with $\ell > 0$ which maps $\Omega$ to the domain
$\Omegahat = \{\ell x \sst x \in \Omega\}$. Given $u:\Omega \rightarrow
\Re$, we define $\uhat: \Omegahat \rightarrow \Re$ by $\uhat(\xhat) =
u(x)$. We will future scale $\rho$ by $\alpha \rhohat(\xhat) = \ell \, \rho(x)$ where
$\alpha > 0$.

\begin{theorem}\label{thm:scaling}
  When scaling by $\alpha$ and $\ell$ then
  \[   \Phi(A) = \alpha \Phihat(\Ahat) \] and
  \[  R(u)  = \alpha^2 \Rhat(\uhat) \quad \text{and thus} \quad\lambda_2 = \alpha^2 \hat{\lambda}_2\].
\end{theorem}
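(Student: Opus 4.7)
The plan is to prove both identities by a direct change of variables, since the map $\hat{x} = \ell x$ is affine and every object in the definitions transforms in a predictable way. First I would record the four basic scaling rules: (i) the volume element $d\hat{x} = \ell^d\,dx$; (ii) the surface measure $d\hat{S} = \ell^{d-1}\,dS$ on $\partial\Ahat$; (iii) the gradient $\nabla_{\xhat}\uhat(\xhat) = \ell^{-1}\nabla_x u(x)$ by the chain rule; and (iv) the scaled density $\rhohat(\xhat) = (\ell/\alpha)\rho(x)$ by hypothesis. To avoid the notational collision between the sparsity/Rayleigh exponent $\alpha$ and the density-scaling constant $\alpha$ in the statement, I would rename the latter (say $a$) in the write-up so the computation is unambiguous.

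For the sparsity claim, I would substitute these rules into the definition. The boundary integral $\int_{\partial\Ahat}\rhohat^{\beta}\,d\hat{S}$ picks up a factor $(\ell/\alpha)^{\beta}\ell^{d-1}$, and the bulk integral $\int_{\Ahat}\rhohat^{\alpha}\,d\xhat$ picks up $(\ell/\alpha)^{\alpha}\ell^{d}$; the minimum in the denominator commutes with scaling by a positive constant, so it carries along with no trouble. Dividing, the relationship between $\Phihat(\Ahat)$ and $\Phi(A)$ is a pure factor of $(\ell/\alpha)^{\beta-\alpha}/\ell$. Invoking the standing hypothesis $\beta = \alpha + 1$ collapses this to $1/\alpha$, which gives $\Phi(A) = \alpha\Phihat(\Ahat)$.

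For the Rayleigh quotient the same bookkeeping gives factors $(\ell/\alpha)^{\gamma}\ell^{-2}\ell^{d}$ in the numerator and $(\ell/\alpha)^{\alpha}\ell^{d}$ in the denominator, so the ratio $\Rhat(\uhat)/R(u)$ equals $(\ell/\alpha)^{\gamma-\alpha}/\ell^{2}$. Using $\gamma = \alpha + 2$ this collapses to $1/\alpha^{2}$, giving $R(u) = \alpha^{2}\Rhat(\uhat)$. The eigenvalue identity then follows because the orthogonality constraint $\int\rho^{\alpha}u = 0$ transforms into $\int\rhohat^{\alpha}\uhat\,d\xhat = 0$ up to the positive scalar $(\ell/\alpha)^{\alpha}\ell^{d}$, so the admissible subspace in the variational definition of $\lambda_{2}$ is preserved by the transformation $u\leftrightarrow\uhat$, and the infima differ by exactly $\alpha^{2}$.

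I do not expect any real obstacle: the proof is bookkeeping, and the exponent relations $\beta=\alpha+1$, $\gamma=\alpha+2$ that are standing throughout the paper are precisely what make the powers of $\ell$ cancel to leave purely $\alpha$-dependent factors. If anything, the only care required is to confirm that $\Ahat$ still satisfies the "smaller side" role used in the definition of $\Phi$ (which is immediate since sub/super-level sets of $\rho^{\alpha}$ are preserved by a positive rescaling) and to handle the eigenvalue identity through the variational principle rather than by trying to scale a putative eigenfunction directly.
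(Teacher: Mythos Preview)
Your proposal is correct and follows essentially the same route as the paper: a direct change-of-variables computation tracking the powers of $\ell$ and of the scaling constant through numerator and denominator. The paper in fact only writes out the $(1,2,3)$ case (so the notational collision you flagged never actually arises there) and expresses the perimeter via the distributional gradient $\int\rho^{2}|\nabla f_{A}|\,dx$ rather than a surface integral, but the bookkeeping is otherwise identical; your version is slightly more general and is also more explicit than the paper about why the orthogonality constraint, and hence the variational $\lambda_{2}$, is preserved under the transformation.
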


We will use this scaling theorem to improve the bounds
of theorem~\ref{thm:buser_n}. 

That is, if we have a density function $\rho$ over
a domain $\Omega$ the isoperimetric number that the fundamental eigenvalue only
change as a function  of the scaling.  Thus the optimal cut and eigenvector are
unchanged by scaling up to the transformation.

%% Given a domain $\Omega \subset \Re^d$, consider the transformation
%% $\xhat = \ell x$ with $\ell > 0$ which maps $\Omega$ to the domain
%% $\Omegahat = \{\ell x \sst x \in \Omega\}$. If $u:\Omega \rightarrow
%% \Re$, define $\uhat: \Omegahat \rightarrow \Re$ by $\uhat(\xhat) =
%% u(x)$.

If $u$ and $l$ are as defined above then we get the simple but basic identity. 
Suppose that $u: \R \rightarrow \R$ then: \[
\dbydp{u}{x} 
= \dbydp{\uhat}{\xhat} \dbydp{\xhat}{x}
= \dbydp{\uhat}{\xhat} \ell,
\qquad \text{ in general we get } \qquad
|\nabla u(x)| = \ell |\hat{\nabla} \uhat(\xhat)|.
\]

In the case of $\rho:\Omega \rightarrow (0,\infty)$, where
$\rhohat: \Omegahat \rightarrow (0,\infty)$ is defined by
$ \alpha \rhohat(\xhat) = \ell \, \rho(x)$ we get that
\[
|\nabla \rho(x)| = \alpha |\hat{\nabla} \rhohat(\xhat)|.
\]

It follows that $L_{\rhohat}$ and $L_\rho$, the Lipschitz constants
for $\rhohat$ and $\rho$,  satisfy $L_{\rhohat} = (1/\alpha) L_\rho$.

\begin{itemize}
\item
  Since $d\xhat = \ell^{d} \, dx$ we have
  \[
  \int_\Omega \rho \, dx = \frac{\alpha}{\ell^{d+1}} \int_{\Omegahat} \rhohat \, d\xhat, 
  \]
  %% so it is always possible to choose a scaling to get $\int_{\Omegahat}
  %% \rhohat = 1$.

\item
  If $A \subset \Omega$ and $\Ahat = \ell A \subset \Omegahat$,
  let $f_A(x) = 1$ if $x \in A$ and zero otherwise, and similarly
  $f_{\Ahat} = 1$ if $\xhat \in \Ahat$ and zero otherwise. We next perform
  a set of standard integral calculations.
  \begin{align}
  \int_\Omega \rho^2 |\nabla f_A| \, dx &=
  \int_{\Omegahat} (\frac{\alpha}{\ell})^2 \rhohat^2 \ell |\hat{\nabla} f_{\Ahat}|
  \, \frac{1}{\ell^d} d\xhat \label{eq:subsitution}\\
  &= \frac{\alpha^2}{\ell^{d+1}} \int_{\Omegahat} \rhohat^2  |\hat{\nabla} f_{\Ahat}|
  \, d\xhat \label{eq:integral1}
  \end{align}

  Equation~\ref{eq:subsitution} follows by making the substitutions:
  \[ \rho(x) = (\frac{\alpha}{\ell}) \rhohat(\xhat) \quad
  |\nabla f_A| = \ell |\hat{\nabla} f_{\Ahat}| \quad
  dx =  \frac{1}{\ell^d} d\xhat
  \]
  Observing the $f_A(x) = f_{\Ahat}(\xhat)$ we get the following identity.
\begin{equation}\label{eq:integral2}
 \int_\Omega \rho f_A \, dx
 = \int_{\Omegahat} \frac{\alpha}{\ell}  \rhohat f_{\Ahat} \frac{1}{\ell^d} \, d\xhat
   = \frac{\alpha}{\ell^{d+1}} \int_{\Omegahat} \rhohat f_{\uhat} \, d\xhat
\end{equation}

Combining equation~\ref{eq:integral1} and equation~\ref{eq:integral2}
we get that:

\begin{equation}
  \Phi(A) = \alpha \Phihat(\Ahat)
\end{equation}

\item
  We next do a similar calculation for the Raleigh quotient.
  If $u:\Omega \rightarrow \Re$ and $\uhat(\xhat) = u(x)$, 
  the Raleigh quotients can be computed as follows,
  \[
  \int_\Omega \rho^3 |\nabla u|^2 \, dx
  = \int_{\Omegahat} (\frac{\alpha}{\ell})^3 \rhohat^3
  \ell^2 |\hat{\nabla} \uhat|^2 \, \frac{1}{\ell^d} d\xhat
    = \frac{\alpha^3}{\ell^{d+1}} \int_{\Omegahat} \rhohat^3 |\hat{\nabla} \uhat|^2 \, d\xhat
 \]

\[
\int_\Omega \rho u^2 \, dx
= \int_{\Omegahat} \frac{\alpha}{\ell} \rhohat \uhat^2 \frac{1}{\ell^d} \, dx
= \frac{\alpha}{\ell^{d+1}} \int_{\Omegahat} \rhohat \uhat^2 \, dx
\]
Thus \[  R(u)  = \alpha^2 \Rhat(\uhat) \].
\end{itemize}

We next use our scaling result in the $(1,2,3)$ case
to our Buser-type bound, Theorem~\ref{thm:buser_n}.
Theorem~\ref{thm:buser_n} states that the following hold:

\begin{align}\label{eq:simple123}
  \lambda_2 \leq 24 d (1+L) 
  \max\left(L \Phi(A), 12 \Phi(A)^2 \vph\right).
\end{align}

We now make substitutions into equation~\ref{eq:simple123}
from Theorem~\ref{thm:scaling} and its proof for some parameter $\alpha$ to be determined.

\begin{align*}
  \lambda_2 = \alpha^2 \hat{\lambda}_2 &\leq  \alpha^2  24 d (1+\hat{L}) 
  \max\left(\hat{L} \Phihat(\Ahat), 12 \Phihat(\Ahat)^2 \vph\right) \\
  &=  \alpha^2  24 d (1+ (L/\alpha))
  \max\left((L/\alpha) (\Phi(A)/\alpha), 12 (\Phi(A)/\alpha)^2 \vph\right) \\
  &=  24 d (1+ (L/\alpha))
  \max\left(L \Phi(A), 12 \Phi(A)^2 \vph\right) \\
  &=  24 d \max\left(L \Phi(A), 12 \Phi(A)^2 \vph\right) \quad \because{\alpha \rightarrow \infty} \\
\end{align*}

Thus we get that $\lambda_2$ only depends linear in the dimension and  the Lipschitz constant:

\begin{corollary}\label{cor:strongBuser}
  \[
  \lambda_2 \leq  24 d \max\left(L \Phi, 12 \Phi^2 \vph\right)
  \]
\end{corollary}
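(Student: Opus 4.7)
The plan is to derive Corollary~\ref{cor:strongBuser} by a one-parameter scaling argument applied to the $(\alpha,\beta,\gamma)=(1,2,3)$ case of Theorem~\ref{thm:buser_n}. The bound coming directly out of Theorem~\ref{thm:buser_n} carries a factor of $(1+L)$ out front (arising from the $(2+3L)$ in Lemma~\ref{lem:num} combined with the factor of $2$ absorbed in Section~\ref{sec:scaling}'s remark). My observation is that this extra $L$ is not really intrinsic: the triple $(\rho,\Phi,\lambda_2)$ can be rescaled freely, and under the rescaling of Theorem~\ref{thm:scaling} the Lipschitz constant shrinks like $1/\alpha$ while $L\Phi$ and $\Phi^2$ are invariant. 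So the strategy is to apply Theorem~\ref{thm:buser_n} to a rescaled density whose Lipschitz constant is arbitrarily small, and then send the rescaling parameter to infinity.

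Concretely, I would introduce the coordinate change $\hat x = \ell x$ and the density rescaling $\alpha\hat\rho(\hat x)=\ell\rho(x)$ from Theorem~\ref{thm:scaling}. That theorem records the three identities
\[
\hat L = L/\alpha, \qquad \hat\Phi(\hat A) = \Phi(A)/\alpha, \qquad \hat\lambda_2 = \lambda_2/\alpha^2.
\]
Applying the already-established $(1,2,3)$ case of Theorem~\ref{thm:buser_n} (equation~\ref{eq:simple123}) to $\hat\rho$ and then multiplying through by $\alpha^2$ gives
\[
\lambda_2 = \alpha^2 \hat\lambda_2 \leq 24\, d\, (1+L/\alpha)\max\!\left(L\Phi(A),\ 12\,\Phi(A)^2\right),
\]
since the $\alpha^2$ factor exactly cancels the homogeneity of the quantities $\hat L\hat\Phi$ and $\hat\Phi^{\,2}$ inside the $\max$. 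The bound holds for every $\alpha>0$, so passing to the infimum (equivalently letting $\alpha\to\infty$) collapses the prefactor $(1+L/\alpha)$ to $1$ and yields the stated conclusion.

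The only subtle point is checking that nothing in the chain of constants used to derive equation~\ref{eq:simple123} has a hidden dependence on $\alpha$ once one reinterprets everything on the rescaled domain. I expect this to be routine because the scaling identities in Theorem~\ref{thm:scaling} are exact, not asymptotic, and the choice of the mollification parameter $\theta$ inside the proof of Theorem~\ref{thm:buser_n} can be made for the rescaled density independently at each $\alpha$ without affecting the final bound. Thus the main ``work'' is essentially bookkeeping: substitute the three scaling identities into the Buser bound for $\hat\rho$, verify the cancellations, and take a limit. No further analytic obstruction is anticipated.
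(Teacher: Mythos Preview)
Your proposal is correct and follows essentially the same route as the paper: apply Theorem~\ref{thm:buser_n} in the $(1,2,3)$ case to the rescaled density $\hat\rho$, use the exact scaling identities of Theorem~\ref{thm:scaling} to convert $\hat\lambda_2,\hat L,\hat\Phi$ back to $\lambda_2,L,\Phi$, observe that the $\alpha^2$ prefactor exactly cancels the homogeneity inside the $\max$, and send $\alpha\to\infty$ to kill the residual $(1+L/\alpha)$ factor. The paper's argument is line-for-line the same computation, so there is nothing to add.
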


\section{Cheeger Inequality for Probability Density Functions}\label{sec:cheeger}

In this section, we prove the Cheeger inequality from
Theorem~\ref{thm:Cheeger-Buser}. That is a weighted Cheeger inequality in higher dimensions.
This is the easier to prove than Buser's inequality, which
contrasts with what happens in the graph case (the graph Buser
    inequality is trivial).

For a simplified proof of the Cheeger inequality for distributions
in one-dimension, see Appendix~\ref{sec:one_dim}.

As we will see from simple counterexamples in
Section~\ref{sec:examples}, the Cheeger-direction does not
hold for all setting of $(\alpha,\beta,\gamma)$. The proof we give is
requires fewer assumptions than the Buser inequality for
probability densities. One, the Cheeger inequality is
independent of the Lipschitz constant of $\rho$ and two, the proof
also holds when $\rho$ is supported on a set
$\Omega \subset \mathbb{R}^d$.

The proof is almost identical to the proof in one dimension and only a
slight modification of standard proofs The only change in the proof is
replacing the change of variables formula with a co-area formula.  Let
$\rho:\Omega \to \RR_>$ be an Lipschitz density function that is
$(\alpha,\beta,\gamma)$-integrable over an open set $\Omega \subseteq
\RR^d$.  Note a stronger hypothesis  on $\Omega$ is that it is the support of
$\rho$ when  $\rho:\RR^d \to \RR_\leq$. 

\begin{theorem}
\label{thm:cheeger_n}
Let $\rho:{\Omega}\to\RR_{>0}$ be a Lipschitz function. Then,
\begin{align*}
\Phi^2 \leq 4 \norm{\rho^{\beta - \frac{\alpha+\gamma}{2}}}^2_\infty \lambda_2.
\end{align*}
In particular, when $(\alpha,\beta,\gamma) = (1,2,3)$ we have
\begin{align*}
\Phi^{2} \leq 4\lambda_2.
\end{align*}
\end{theorem}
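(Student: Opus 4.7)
The plan is to follow the classical Cheeger argument, replacing the one-dimensional change of variables with the multivariable co-area formula and carefully bookkeeping the weights through a single Cauchy--Schwarz step. Because $\rho > 0$ on $\Omega$ by assumption, everything is purely classical --- no mollification is required, in contrast to the Buser direction.

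First I would reduce to a nonnegative test function whose super-level sets all have small $\rho^\alpha$-mass. For any $\varepsilon > 0$, pick $u$ with $\int \rho^\alpha u = 0$ and $R_{\alpha,\gamma}(u) \leq \lambda_2 + \varepsilon$. Let $c$ be a $\rho^\alpha$-weighted median of $u$, and set $v_\pm = (u-c)_\pm$. Subtracting the constant $c$ leaves $\nabla u$ unchanged and only enlarges $\int \rho^\alpha (u-c)^2$; combined with the disjointness of the supports of $v_+, v_-$, the standard splitting identity gives
\[
\min\!\left(R_{\alpha,\gamma}(v_+),\, R_{\alpha,\gamma}(v_-)\right) \;\leq\; R_{\alpha,\gamma}(u-c) \;\leq\; R_{\alpha,\gamma}(u) \;\leq\; \lambda_2 + \varepsilon.
\]
Choosing $v$ to be the cheaper of $v_\pm$, every $\{v > t\}$ is contained in one of the half-mass sides of $\{u = c\}$, so $\rho^\alpha(\{v > t\}) \leq \tfrac{1}{2}\int \rho^\alpha$ for all $t \geq 0$.

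Next I would sandwich $\int \rho^\beta |\nabla(v^2)|$ between a Cauchy--Schwarz upper bound and a co-area lower bound. Writing $\rho^\beta v|\nabla v| = (\rho^{\gamma/2}|\nabla v|)(\rho^{\beta - \gamma/2} v)$ and applying Cauchy--Schwarz,
\[
\int \rho^\beta |\nabla(v^2)| \;=\; 2\int \rho^\beta v|\nabla v| \;\leq\; 2\!\left(\int \rho^\gamma |\nabla v|^2\right)^{\!1/2}\!\left(\int \rho^{2\beta-\gamma} v^2\right)^{\!1/2};
\]
pulling $\rho^{2\beta - \gamma - \alpha} = \rho^{2(\beta - (\alpha+\gamma)/2)}$ out of the last integral in $L^\infty$ yields
\[
\int \rho^\beta |\nabla(v^2)| \;\leq\; 2\linf{\rho^{\beta - (\alpha+\gamma)/2}}\!\left(\int \rho^\gamma|\nabla v|^2\right)^{\!1/2}\!\left(\int \rho^\alpha v^2\right)^{\!1/2}.
\]
For the lower bound, the co-area formula applied to $v^2$ together with the definition of $\Phi$ --- each $\{v^2 > t\}$ is an admissible cut because of the half-mass property, hence its $(\alpha,\beta)$-sparsity is at least $\Phi$ --- and a layer-cake identity give
\[
\int \rho^\beta |\nabla(v^2)| \;=\; \int_0^\infty \!\int_{\{v^2 = t\}} \!\rho^\beta\, d\mathcal{H}^{d-1}\, dt \;\geq\; \Phi \int_0^\infty \!\int_{\{v^2 > t\}} \!\rho^\alpha\, dt \;=\; \Phi \int \rho^\alpha v^2.
\]
Combining the two displays, squaring, and dividing by $\int \rho^\alpha v^2$ produces $\Phi^2 \leq 4 \linf{\rho^{\beta - (\alpha+\gamma)/2}}^2 R_{\alpha,\gamma}(v) \leq 4 \linf{\rho^{\beta - (\alpha+\gamma)/2}}^2 (\lambda_2 + \varepsilon)$; sending $\varepsilon \to 0$ concludes. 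The $(1,2,3)$ case drops out because $\beta - (\alpha+\gamma)/2 = 0$.

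The only delicate point is the weight-pairing in the Cauchy--Schwarz step, which is exactly what forces the exponent $\beta - (\alpha+\gamma)/2$: the two factors must be chosen so that one reproduces the $\rho^\gamma |\nabla v|^2$ in the numerator of $R_{\alpha,\gamma}$, and the other, after absorbing an $L^\infty$ weight, reproduces the $\rho^\alpha v^2$ in the denominator. Identifying this sharp exponent is what pins down the final form of the bound and explains why the $(1,2,3)$ choice (for which the $L^\infty$ weight is identically $1$) is the natural one. The median--truncation step is standard but essential: it is the only way to guarantee the half-mass hypothesis for \emph{every} level set of $v$ simultaneously while keeping $R_{\alpha,\gamma}(v)\le \lambda_2+\varepsilon$.
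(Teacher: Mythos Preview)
Your argument is correct and follows essentially the same route as the paper: median shift to ensure the half-mass property, truncation to the positive (or negative) part, the Cauchy--Schwarz splitting $\rho^\beta v|\nabla v| = (\rho^{\gamma/2}|\nabla v|)(\rho^{\beta-\gamma/2}v)$ that forces the exponent $\beta-(\alpha+\gamma)/2$, and the co-area/layer-cake pair on $v^2$. The only cosmetic difference is that you bound each level set by $\Phi$ and integrate, whereas the paper phrases the same step as an averaging argument producing a single good threshold $t^*$; the two are equivalent.
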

Here, $\Phi$ is the optimal $(\alpha,\beta)$-sparsity of a cut
through $\rho$. We note that we can say something a little stronger:
\begin{theorem}
\label{thm:cheeger-sweep}
Let $\rho:{\Omega}\to\RR_{>0}$ be a Lipschitz function. 
Let $\Phi_{(\alpha,\beta,\gamma)}$ be the $(\alpha,\beta)$
sparsity of the $(\alpha, \gamma)$ spectral sweep cut. If $\alpha = \beta-1 = \gamma-2$,  then:
\begin{align*}
\Phi_{(\alpha, \beta, \gamma)}^2 \leq 4 \lambda_2
\end{align*}
\end{theorem}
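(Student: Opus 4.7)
The plan is to adapt the classical Cheeger-Rayleigh-quotient argument to our weighted setting and then observe that it produces a specific \emph{level set of the eigenfunction}, rather than merely \emph{some} set, whose $(\alpha,\beta)$-sparsity is at most $2\sqrt{\lambda_2}$. Since the sweep-cut algorithm selects the best cut among level sets of the principal eigenfunction $u$, the bound on $\Phi_{(\alpha,\beta,\gamma)}$ then follows. The key algebraic identity that makes the argument line up is $2\beta-\alpha=\gamma$, which holds precisely when $\alpha=\beta-1=\gamma-2$.

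First I would let $u$ be an $(\alpha,\gamma)$-principal eigenfunction (so $\int\rho^\alpha u=0$) and let $c$ be a $\rho^\alpha$-weighted median of $u$. Then both $\{u>c\}$ and $\{u<c\}$ have $\rho^\alpha$-measure at most $|1|_\alpha/2$. Setting $u_\pm=(u-c)_\pm$, these functions have disjoint supports and satisfy $|\nabla u|^2=|\nabla u_+|^2+|\nabla u_-|^2$ a.e. Since $\int\rho^\alpha(u-c)^2\geq\int\rho^\alpha u^2$, one has
\[
\lambda_2 \;\geq\; R_{\alpha,\gamma}(u-c)
\;=\; \frac{\int\rho^\gamma|\nabla u_+|^2 + \int\rho^\gamma|\nabla u_-|^2}
          {\int\rho^\alpha u_+^2 + \int\rho^\alpha u_-^2},
\]
which forces at least one of $v\in\{u_+,u_-\}$ to satisfy $R_{\alpha,\gamma}(v)\leq\lambda_2$, with $\supp(v)$ of $\rho^\alpha$-measure at most $|1|_\alpha/2$. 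Fix such a $v$.

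Next I would apply Cauchy-Schwarz to $|\nabla v^2|=2v|\nabla v|$:
\[
\int\rho^\beta|\nabla v^2|
\;\leq\; 2\left(\int\rho^\alpha v^2\right)^{1/2}
     \left(\int\rho^{2\beta-\alpha}|\nabla v|^2\right)^{1/2}
\;=\; 2\left(\int\rho^\alpha v^2\right)^{1/2}
     \left(\int\rho^\gamma|\nabla v|^2\right)^{1/2},
\]
using $2\beta-\alpha=\gamma$ in the last step. Dividing by $\int\rho^\alpha v^2$ bounds the resulting ratio by $2\sqrt{R_{\alpha,\gamma}(v)}\leq 2\sqrt{\lambda_2}$. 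Rewriting numerator and denominator via the coarea formula and the layer-cake identity applied to the nonnegative function $v^2$,
\[
\int_0^\infty P_\beta\!\left(\{v^2>t\}\right)\,dt
\;\leq\; 2\sqrt{\lambda_2}\int_0^\infty|\{v^2>t\}|_\alpha\,dt,
\]
so by the usual averaging argument there exists $t^\star>0$ with $P_\beta(\{v^2>t^\star\})/|\{v^2>t^\star\}|_\alpha\leq 2\sqrt{\lambda_2}$.

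Finally, since $\{v^2>t^\star\}\subseteq\supp(v)$ has $\rho^\alpha$-measure at most $|1|_\alpha/2$, the above ratio is exactly the $(\alpha,\beta)$-sparsity of this set. The witness set equals $\{u>c+\sqrt{t^\star}\}$ if $v=u_+$, or, up to a null set, the complement of $\{u>c-\sqrt{t^\star}\}$ if $v=u_-$; either way, it defines the same cut as some superlevel set $\{u>t\}$ of $u$ and is therefore among the candidates examined by the sweep-cut algorithm. Taking the minimum over sweep-cut candidates yields $\Phi_{(\alpha,\beta,\gamma)}\leq 2\sqrt{\lambda_2}$, as desired. I do not foresee a serious obstacle; the only delicate point is the choice of the $\rho^\alpha$-weighted median $c$, which is what simultaneously guarantees that the witness level set lies on the smaller side of the cut and is a legitimate candidate for the sweep-cut algorithm.
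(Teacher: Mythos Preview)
Your proposal is correct and follows essentially the same route as the paper: shift the eigenfunction by a $\rho^\alpha$-weighted median, pass to the positive or negative part, apply Cauchy--Schwarz to $\nabla v^2$ (using $2\beta-\alpha=\gamma$ exactly when $\alpha=\beta-1=\gamma-2$), rewrite via the coarea and layer-cake formulas, and extract a good level set by averaging. The paper carries the extra factor $\norm{\rho^{\beta-(\alpha+\gamma)/2}}_\infty$ through so as to prove the more general Theorem~\ref{thm:cheeger_n} simultaneously, but that factor is $1$ under your hypothesis; you are also slightly more explicit than the paper in checking that the witness set $\{v^2>t^\star\}$ is a superlevel set of the original eigenfunction $u$ and hence a legitimate sweep-cut candidate, which is precisely the extra content of Theorem~\ref{thm:cheeger-sweep} over Theorem~\ref{thm:cheeger_n}.
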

\begin{proof} (of both theorems): 
Let $w\in W^{1,2}$, functions whose gradient is square integrable, nonzero with $\int_\Omega \rho^\alpha w\,dx = 0$. Let $v = w+a1$ where $a$ is chosen such that $\abs{\set{v<0}}_\alpha = \abs{\set{v>0}}$. Note that
\begin{align*}
R(w) &= \frac{\int_\Omega \rho^\gamma \abs{\grad w}^2\,dx}{\int_\Omega \rho^\alpha w^2\,dx}\\
&\geq \frac{\int_\Omega \rho^\gamma \abs{\grad w}^2\,dx}{\int_\Omega \rho^\alpha w^2\,dx+ a^2\abs{\Omega}_\alpha}\\
&= R(v).
\end{align*}
Without loss of generality, the function $u = \max(v,0)$ satisfies $R(u)\leq R(v)$.

Let $\Omega_0 = \set{v>0}$. Let $g = u^2$. Noting that $\grad g = 2u\grad u$ a.e., we can apply Cauchy-Schwarz to obtain
\begin{align*}
\int_{\Omega_0}\rho^\beta \abs{\grad g}\,dx &= 2\int_{\Omega_0}\rho^\beta \abs{u}\abs{\grad u}\,dx\\
&\leq 2 \sqrt{\int_{\Omega_0} \rho^{2\beta - \alpha} \abs{\grad u}^2\,dx}\sqrt{\int_{\Omega_0} \rho^{\alpha} u^2\,dx}\\
&\leq 2 \norm{\rho^{\beta - \frac{\alpha+\gamma}{2}}}_\infty \sqrt{\int_{\Omega_0} \rho^\gamma \abs{\grad u}^2\,dx}\sqrt{\int_{\Omega_0} \rho^{\alpha} u^2\,dx}.
\end{align*}
Then, dividing by $\int_{\Omega_0} \rho^\alpha g\,dx$, we have
\begin{align*}
\frac{\int_{\Omega_0} \rho^\beta \abs{\grad g}\,dx}{\int_{\Omega_0} \rho^\alpha g\,dx} &\leq2 \norm{\rho^{\beta - \frac{\alpha+\gamma}{2}}}_\infty \sqrt{R(w)}.
\end{align*}
Let $A_t = \set{g>t}$. Then, by the weighted co-area formula,
\begin{align*}
\int_{\Omega_0} \rho^\beta \abs{\grad g}\,dx = \int_0^\infty \abs{\boundary A_t}_\beta\,dt.
\end{align*}
Writing $g(x) = \int_0^{g(x)} 1 \,dt$ and applying Tonelli's theorem, we rewrite the denominator
\begin{align*}
\int_{\Omega_0} \rho^\alpha g\,dx = \int_{0}^\infty \abs{A_t}_\alpha\,dt.
\end{align*}
Thus, by averaging, there exists some $t^*$ such that
\begin{align*}
\Phi &\leq \Phi(A_{t^*})\\
&\leq \frac{\int_{\Omega_0} \rho^\beta \abs{\grad g}\,dx}{\int_{\Omega_0} \rho^\alpha g\,dx}\\
&\leq2 \norm{\rho^{\beta - \frac{\alpha+\gamma}{2}}}_\infty \sqrt{R(w)}.
\end{align*}
Optimizing over the set $\set{w\in W^{1,2}\smid w\neq 0,\,\int_\Omega \rho^\alpha w\,dx = 0}$ completes the proof.
\end{proof}

\section{Spectral Sweep Cuts have Provably Good Sparsity (proof of
    Theorem~\ref{thm:sweep-cut})}\label{sec:sweep_cut}

Theorem~\ref{thm:cheeger-sweep} tells us that 
  \[ \Phi_{(1,2, 3)}^2/4 \leq \lambda_2^{(1,3)} \] for all $1$-Lipschitz $\rho$ whose
  domain is
  on $\mathbb{R}^d$.  Here, $\phi_{(1,2,3)}$ is the $(1,2)$ sparsity of the
  $(1,3)$-spectral sweep cut, and $\lambda_2^{(1,3)}$ is the
  $(1,3)$-principal eigenvalue. 

  Next Theorem~\ref{thm:buser_n} tells us that
 \[ \lambda_2^{(1,3)} \leq O(d \Phi_{(1,2)}), \]
 where $\Phi_{(1,2)}$ is the minimal $(1,2)$-sparsity of any cut through
 $\rho$.

 Therefore, 
 \[ \Phi_{(1,2,3)}^2 \leq  \Phi_{(1,2)} \leq \Phi_{(1,2,3)}^2,\] 
 where $\Phi_{(1,2)}$ is the minimum $(1,2)$-sparsity of a cut through
 $\rho$, proving Theorem~\ref{thm:sweep-cut}.

%!TEX root = ms.tex

\section{Cheeger-Buser inequality fails for Bad Settings of
$\alpha, \beta, \gamma$: Examples}
\label{sec:examples}

% In this section, we consider two  different density functions $\rho$ on the real line.
% One for which Buser direction fails and the other for which Cheeger direction fails. We begin with the Buser direction.

In this section, we will analyze some simple $1$-Lipschitz
density functions in $1$ dimension, and see the requirements we
need on $(\alpha, \beta, \gamma)$ in order to recover Cheeger and
Buser type inequalities. This will prove
Lemma~\ref{lem:converse}.

Specifically, we refer to an inequality of the form
\begin{align*}
  \Phi^2 <  O(\lambda_2)
\end{align*}
as a Cheeger-type inequality and an inequality of the form
\begin{align*}
  \lambda_2 < O( \max(\Phi,\Phi^2))
\end{align*}
as a Buser-type inequality. The presence of the $\Phi^2$ term in
the Buser-type inequality is necessary as $\Phi$ may be larger
than $1$; this contrasts the normalized graph case where $\Phi$
is always bounded above by $1$.

We consider two simple density functions: the first is the
function $\rho$ that's $1/n$ for a support of interval $[-n/2,
n/2]$. The distribution
is then made to be Lipschitz in the obvious way, by making a
Lipschitz drop-off at the ends of the interval, and the interval
is rescaled so that it is a probability density function. In this
analysis, we assume $\alpha, \beta, \gamma$ to be constant.
The examples are discontinuous across the boundaries, $-1,1$, but it straight forward to extend them to be
continuous on their boundary without appreciably changing the them as counter examples.

In this case, the cut size in the isoperimetric cut is
within a constant approximation of $1/n^{\beta}$, the mass term in the isoperimetric
cut is within a constant approximation of $n/n^\alpha$, so the
isoperimetric cut is approximately $n^{\alpha - \beta - 1}$. and the
Rayleigh quotient can be bounded above using the Hardy Muckenhoupt
inequality~\cite{muckenhoupt1972hardy, MillerHardy18}, which shows that it is a constant approximation of
$n^{\alpha - \gamma - 2}$. 

The Cheeger inequality would then say:

\[n^{2(\alpha - \beta - 1)} < O(n^{\alpha - \gamma - 2}) \]

or 

\begin{equation}
\label{eq:b_equals_avg_a_c}
  \frac{\alpha + \gamma }{2} \leq \beta
\end{equation} for any Cheeger
inequality to hold.

We now turn our attention to a scenario in which the Buser
direction will fail for improperly set $\alpha, \beta, \gamma$.

\subsection{Notation}
We will write $a\gtrsim b$ if $a\geq cb$ for some absolute constant $0<c<\infty$. Similarly define $a \lesssim b$. We will write $a\asymp b$ if both relations hold.

\subsection{A Lipschitz weight}
\label{subsec:lipschitz_example}
Consider the density function $\rho(x) = \abs{x} + \epsilon$ on
the domain $(-1,1)$, where $\epsilon\in(0,1/4)$, and $\rho(x) =
\max(0, 2+\epsilon-|x|)$ for all other $x$.

It is clear that
\begin{align*}
\Phi(\Omega) &=\Phi(0) \asymp \epsilon^\beta.
\end{align*}
Next, we apply the Hardy-Muckenhoupt inequality to estimate $\lambda_2$. We upper bound $\H$ as
\begin{align*}
\H &\leq\R(1)\M(0)\\
  &\asymp  \int_0^1 \frac{1}{(x+\epsilon)^{\gamma}}\,dx\\
& \lesssim \begin{cases}
  1 & \text{if } \gamma<1\\
  \ln\left(1/\epsilon\right)& \text{if } \gamma = 1\\
  O\epsilon^{1-\gamma}& \text{if } \gamma>1.
\end{cases}
\end{align*}
Thus, if we want a Buser-type inequality to hold, then $(\alpha,\beta,\gamma)$ needs to satisfy,
\begin{align*}
\begin{cases}
  1\lesssim \lambda_2 \lesssim \max(\Phi,\Phi^2)  \asymp \epsilon^\beta & \text{if } \gamma<1\\
  \frac{1}{\ln\left(1/\epsilon\right)}\lesssim \lambda_2 \lesssim \max(\Phi,\Phi^2)  \asymp \epsilon^\beta& \text{if } \gamma = 1\\
  \epsilon^{\gamma-1}\lesssim \lambda_2 \lesssim \max(\Phi,\Phi^2)  \asymp \epsilon^\beta& \text{if } \gamma>1.
\end{cases}
\end{align*}
By letting $\epsilon$ go to zero, it is clear that $\gamma-1\geq \beta$.

By combining the condition \eqref{eq:b_equals_avg_a_c} required for a Cheeger-type inequality and the requirement $\gamma-1\geq \beta$ required for a Buser-type inequality, we conclude that there does not exist a $\beta$ such that $(1,\beta,2)$ satisfies both Cheeger-\ and Buser-type inequalities.

Thus, we note that if we want a Cheeger and Buser type inequality
to hold even for $1$-dimensional $1$-Lipschitz functions, we
at least require

\[ \frac{\alpha + \gamma}{2} \leq \beta \]
and
\[ \gamma - 1 \geq \beta \]

These two inequalities have solutions if and only if $\gamma - 2 \geq \alpha$, in
which case $\beta$ can occupy any number between $\frac{\alpha +
\gamma}{2}$ and $\gamma -1$.

Note that in the special case of $(1, 1, 1)$, we have no hope of 
proving any inequality of the form $\lambda_2 < O(\Phi^p)$.
Similar estimates can be derived for the \textit{smooth} weight
function $\rho(x)=\sqrt{x^2+\epsilon^2}$. The calculations are more involved, however. One can similarly show that
\begin{align*}
\Phi \asymp \epsilon,\qquad
\lambda_2\gtrsim\frac{1}{\ln(1/\epsilon)}
\end{align*}
Thus combining these bounds, we deduce
\begin{align*}
  \frac{\lambda_2}{\Phi} > \Omega(\frac{1}{\ln(1/\epsilon))}
\end{align*}
which diverges to infinity as $\epsilon\to 0$. 
As before we conclude that, for the choice
$(\alpha,\beta,\gamma)=(1,1,1)$, there is no hope of proving an
inequality of the form $\lambda_2\leq  O(\Phi^{p})$ for any $p>0$ and any class of $\rho$ containing the smooth Lipschitz weights $\sqrt{x^2+\epsilon^2}$.

\section{Problems with Existing Spectral Cut
Methods}\label{sec:counterexample}

In this section, we introduce a simple Lipschitz distribution
where the $(\alpha=1, \gamma=2)$-spectral sweep cut fails to find a
$(1, \beta)$ sparse cut for any $0 \leq  \beta < 10$. Meanwhile, the
$(1,3)$-spectral sweep cut finds a desirable cut with good
$(1,2)$-sparsity. We note that $(\alpha=1, \beta > 10)$-sparse cuts are
likely to find cuts where one side has extremely small probability mass,
making it undesirable for machine learning. 

We note that this
section combined with Theorem~\ref{thm:sweep-cut} and
Lemma~\ref{lem:converse} shows that no Cheeger and Buser
inequality can hold when $\alpha = 1$ and $\gamma = 2$ for any
$\beta$: this section combined with Theorem~\ref{thm:sweep-cut} will
show that the Cheeger-Buser inequalities can only hold for $\beta > 10$,
while Lemma~\ref{lem:converse} shows that they can only hold for
$\beta \leq 1$. Therefore, the Cheeger-Buser inequalities cannot hold for
any $\beta$, for $\alpha = 1$ and $\gamma = 2$.

% See Section~\ref{sec:trade-offs} for more details.

\begin{theorem}\label{thm:counterexample} \textbf{$(\alpha=1,\gamma=2)$-Spectral Sweep Cut Counterexample:}

  For a $1$-Lipschitz positive valued function $\rho$, let $\Phi$ be the
  sparsity of the $(1,3)$-spectral sweep cut, and let $\Phi_{OPT}$ be
  the cut of optimal $(1, \beta)$ sparsity for any $\beta < 10$. There
  there exists a $1$-Lipschitz density function $\rho$ such that:

  \[\Phi > C \max(\Phi_{OPT}, \sqrt{\Phi_{OPT}})\]
  for any constant $C$.
\end{theorem}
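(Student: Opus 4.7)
The plan is to construct an explicit $1$-Lipschitz density $\rho_\epsilon$ (parametrised by $\epsilon\to 0$) that exhibits the failure mode through \emph{eigenfunction localization}. Because the $(\alpha=1,\gamma=2)$-Rayleigh quotient $\int\rho^{2}|\nabla u|^{2}/\int\rho u^{2}$ penalises gradients by $\rho^{2}$ but the $L^{2}$-mass only by $\rho$, variations of $u$ become vanishingly cheap in regions where $\rho$ is very small; a sufficiently long, thin, very-low-density appendage therefore captures the second eigenfunction almost entirely, so that no level set of $u$ ever crosses the genuinely low-sparsity cut that lives elsewhere in the density. This is the mechanism already hinted at by Lemma~\ref{lem:converse} and Section~\ref{sec:examples}, transported into a setting where it directly defeats the sweep-cut algorithm.

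Concretely, I would take $\rho_\epsilon$ to consist of a main mass of bounded extent where $\rho\sim 1$ that contains the good low-sparsity cut (for instance, a balanced neck where $\rho$ dips to a prescribed $\epsilon$-dependent depth), joined to a long thin low-density appendage via a Lipschitz-$1$ ramp. First I would exhibit the good cut explicitly and compute its $(1,\beta)$-sparsity to upper-bound $\Phi_{OPT}$ by a polynomial rate in $\epsilon$. Second I would upper-bound $\lambda_{2}^{(1,2)}$ by an appendage-supported test function (approximately constant on the main, linear along the appendage), giving a bound that shrinks with the appendage's density-to-length-squared ratio. Third, a weighted Poincar\'e-type inequality on the main mass (which inherits a spectral gap of order $\epsilon^{2}$ from the neck geometry) combined with the Courant--Fischer characterisation of $\lambda_{2}$ forces any true eigenfunction with comparably small Rayleigh quotient to be nearly constant on the main.

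Once this localization is established, every level set $\{u_\epsilon>t\}$ either contains all of the main or excludes it, so the boundary of every sweep cut lies entirely inside the appendage; a cut-by-cut parametrisation along the appendage lower-bounds $\Phi_\epsilon$ in terms of the appendage parameters. Tuning the density $\delta(\epsilon)$ and length $L(\epsilon)$ of the appendage to simultaneously satisfy the eigenvalue-domination condition $\delta/L^{2}\ll\epsilon^{2}$ (so the appendage really captures the eigenfunction) and to make $\Phi_\epsilon/\sqrt{\Phi_{OPT,\epsilon}}\to\infty$ then produces the conclusion. To cover the full regime $\beta<10$ with a single density may require either stacking appendages at several length scales or designing the good cut so that its $(1,\beta)$-sparsity decays strictly faster than the appendage's sweep-cut sparsity across the whole range; this is where the arithmetic of the construction has to be tight.

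The decisive obstacle is the localization step: a test function only gives an upper bound on $\lambda_{2}$, while one must rule out competing modes whose level sets traverse the good cut. The cleanest attack I see is Courant--Fischer applied to a carefully constructed two-dimensional trial subspace combined with the main-mass Poincar\'e inequality, allowing one to conclude that any function with small Rayleigh quotient agrees with its $\rho$-weighted main-mass average up to $o(1)$ in the relevant norm. A fallback that bypasses full localization is to prove directly the weaker statement that actually suffices: that every level set whose boundary intersects the main mass has sparsity bounded below by a constant, which would follow by an $L^{\infty}$-to-$L^{2}$ comparison together with the test-function upper bound on the Rayleigh quotient, contradicting the existence of a sweep cut through the good-cut region.
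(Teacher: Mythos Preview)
Your localization-via-appendage strategy is a plausible alternative, but the paper takes a quite different and cleaner route that sidesteps precisely the obstacle you flag. The paper's construction is a two-dimensional rectangle $[-X,X]\times[-Y,Y]$ (with $Y\gg X$) on which $\rho$ depends only on $|x|$, with a valley $\rho\approx\epsilon$ along the line $x=0$; the good $(1,\beta)$-sparse cut is the vertical line $x=0$. Because $\rho$ is symmetric under both $x\mapsto -x$ and $y\mapsto -y$, a Guattery--Miller style symmetrization argument produces a principal $(1,2)$-eigenfunction that is \emph{odd in $y$ and even in $x$}, and a Fiedler-type nodal-domain lemma then pins down its zero set as exactly the horizontal line $y=0$. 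Hence every sweep-cut level set is a horizontal slab and can never traverse the vertical valley; the remaining work is an elementary comparison of horizontal versus vertical cut sparsities, with the parameters $X=\sqrt{n}/10$, $Y=10\sqrt{n}$, $\epsilon=n^{-5/2}$ tuned so that the $y$-mode genuinely beats the $x$-mode.

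The contrast is that the paper replaces your quantitative localization analysis (Poincar\'e on the main mass, Courant--Fischer on a trial subspace, $L^\infty$-to-$L^2$ comparisons) by a qualitative rigidity statement: symmetry plus nodal domains determines the zero set \emph{exactly}, with no estimates needed on how close the eigenfunction is to constant anywhere. Your appendage construction could in principle be pushed through, but the step you correctly identify as decisive---ruling out competing modes whose level sets cross the good cut---is exactly what the paper's two reflection symmetries eliminate structurally. The trade-off is that your approach, if completed, would be more robust to perturbations and would not require the delicate parameter tuning that makes the $y$-mode dominate; the paper's approach is shorter but more rigid.
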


\subsection{Our density function}
We first construct our $1$-Lipschitz Density function for which a
$(1,2)$ spectral cut has poor $(1,\beta)$ sparsity. Our density function
has parameters $X, Y, \epsilon, n$ which we will set later.

\begin{definition} Let $\rho: [-X, X] \times [-Y, Y] \rightarrow \mathbb{R}$ be a density function such that:

  \[ \rho(x, y) = \min(\epsilon + x, 1/n) \] 
\end{definition}
To turn this into a $1$-Lipschitz probability density function, we simply
extend it to a function $\rho':\mathbb{R}^2 \rightarrow \mathbb{R}$
where $\rho'$ agrees with $\rho$ on $[-X, X] \times [-Y, Y]$, and the
function goes $1$-Lipschitzly to $0$ outside this range.

\begin{figure}
\centering
\includegraphics[width=4.5in]{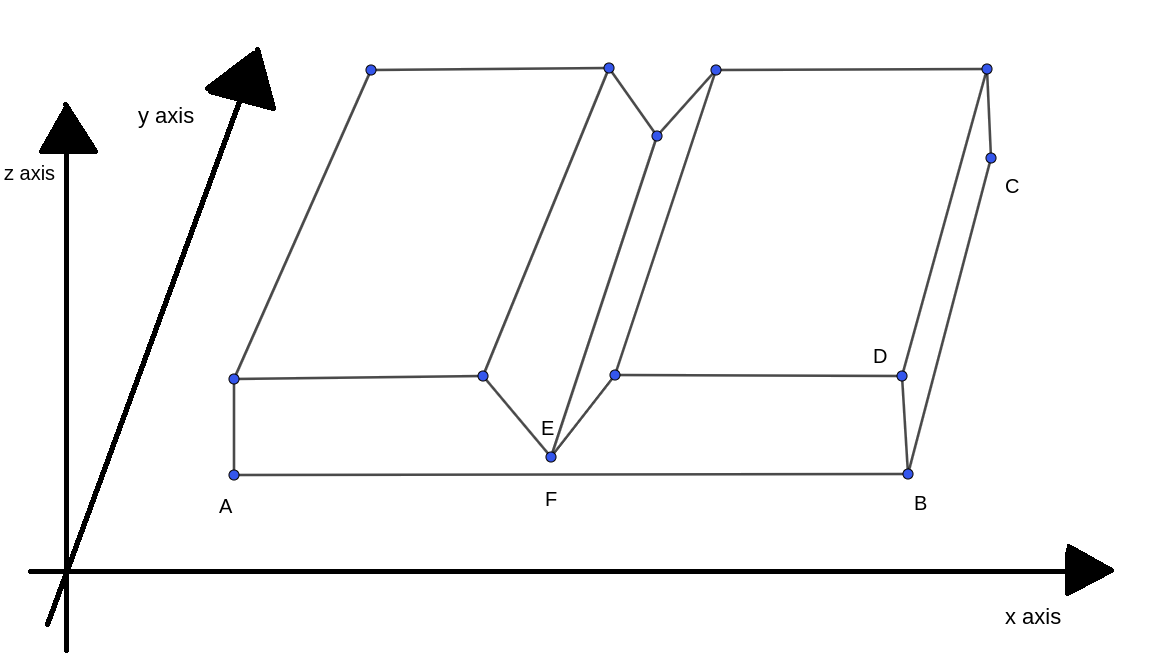}
\caption{
  The probability density function where $\rho(x, y) =
    \min(\epsilon + x, \frac{1}{n})$ for arbitrary $X, Y, n$.
      Here, $\rho(x,y)$ is plotted in the z axis, and $E$ is at point
      $(0, -Y, \epsilon)$.
 }
\label{fig:counterexample}
\end{figure}

We will set $X = \sqrt{n}/10, Y = 10\sqrt{n}$, and $n$ large, to obtain
a density function where the $(1, 2)$ spectral cut has arbitrarily bad
$(1,\beta)$ sparsity for all $\beta < 10$. 

\subsection{Proof Overview}

First, we prove theorems about the zero-set of this density's
$(\alpha=1,\gamma=2)$ eigenfunction. In particular, the zero-set of this
eigenfunction must cut from the line $x=-X$ to $x=X$. It cannot cut from
the line $y = -Y$ to the line $y = Y$.

We prove that any level-set of the eigenfunction can't cut from $y
= -Y$ to $y = Y$. We then show that any cut that doesn't cut from $y =
-Y$ to $y = Y$ has bad $(1,\beta)$ sparsity for $\beta < 10$. This
completes our proof. Moreover, any cut that doesn't cut from $y=-Y$ to
$y=Y$ is intuitively a poor cut of our density function, according to
standard machine learning intuition.

We note the natural cut of this distribution is the straight line cut $x =
0$, which the $(1,3)$-spectral sweep cut will find (this is an artifact
of our proof, though we do not explicitly prove it here).

First, we prove a few lemmas on the zero-set of the $(1,2)$
eigenfunction.
\subsection{The Zero-set of a principal $(1,2)$ eigenfunction is the
line $y = 0$} 

\begin{theorem} \label{thm:zero-set} The Zero-set of the eigenfunction for our given density
  function, is the line $y = 0$.
\end{theorem}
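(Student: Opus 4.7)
The plan is to exploit the fact that $\rho$ is independent of $y$ on the rectangle $[-X,X]\times[-Y,Y]$ in order to separate variables in the eigenvalue problem. The Euler--Lagrange equation for the $(1,2)$-Rayleigh quotient is
\[
-\nabla\cdot(\rho^2\nabla u)=\lambda\rho u,
\]
with natural (Neumann-type) boundary conditions at the edge of the support of $\rho$. Since $\rho=\rho(x)$, the operator $L=-\nabla\cdot(\rho^2\nabla\,\cdot\,)$ commutes with $\partial_{yy}$, and therefore its eigenspaces decompose into a direct sum indexed by the Neumann eigenmodes $g_k(y)$ of $-\partial_{yy}$ on $[-Y,Y]$, with eigenvalues $\mu_k=(k\pi/(2Y))^2$. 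I would pick a basis of eigenfunctions of the separated form $u(x,y)=f(x)\,g_k(y)$, where $f$ solves the reduced one-dimensional Sturm--Liouville problem
\[
-(\rho^2 f')'+\mu_k\,\rho^2 f=\lambda\rho f.
\]

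Next I would identify the principal eigenvalue by comparing the two lowest sub-problems. For $k=1$ the test function $f\equiv 1$ gives the upper bound
\[
\lambda_{0,1}\;\leq\;\Bigl(\tfrac{\pi}{2Y}\Bigr)^{2}\frac{\int\rho^{2}}{\int\rho}\;=\;O\!\bigl(1/n^{2}\bigr),
\]
using the mass estimates for our $\rho$ together with $Y=10\sqrt{n}$. For $k=0$ the one-dimensional specialization of Theorem~\ref{thm:cheeger_n} gives the lower bound $\lambda_{1,0}\geq \Phi_{1\mathrm{D}}^{2}/4$, where $\Phi_{1\mathrm{D}}$ is the $(1,2)$-sparsity of the profile $\rho(x)$. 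A direct search over cuts in $[-X,X]$ shows that the infimum is attained at $x=0$ and is of order $\epsilon^{2}n/X$; with $X=\sqrt{n}/10$ and $\epsilon$ not chosen too small, this makes $\lambda_{1,0}\gg\lambda_{0,1}$, so the smallest nontrivial eigenvalue comes from the $k=1$ sector and no other $\lambda_{m,k}$ intervenes.

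Finally I would pin down the structure of the eigenfunction. The reduced Sturm--Liouville problem at $k=1$ has simple eigenvalues, and by standard Perron--Frobenius/positivity theory its ground state $f$ may be taken strictly positive on the $x$-interval. Hence the principal eigenfunction is
\[
u(x,y)=f(x)\,\sin\!\bigl(\pi y/(2Y)\bigr),
\]
whose zero set inside the open rectangle is exactly $\{y=0\}$, which is Theorem~\ref{thm:zero-set}. The main obstacle I anticipate is making the separation-of-variables step fully rigorous at the boundary: the Lipschitz extension of $\rho$ outside $[-X,X]\times[-Y,Y]$ could in principle break strict $y$-independence near the corners. I would handle this either by choosing the extension of product form $\rho(x,y)=\rho_{x}(x)\,\rho_{y}(y)$, so that $[L,\partial_{yy}]=0$ is preserved globally, or by a perturbation argument showing that the boundary contribution changes both $\lambda_{0,1}$ and $\lambda_{1,0}$ by at most a multiplicative factor $1+o(1)$, preserving the strict inequality and the factored structure of the ground state.
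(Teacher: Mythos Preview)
Your route is genuinely different from the paper's.  The paper does not separate variables at all: it first invokes a symmetrization lemma (citing Guattery--Miller) to produce a principal eigenfunction that is odd in $y$ and even in $x$, then uses a Fiedler-type nodal-domain argument to conclude that the two nodal domains are exactly the half-planes $\{y>0\}$ and $\{y<0\}$, so the zero set is $\{y=0\}$.  Your approach is more constructive and more quantitative: you actually exhibit the eigenfunction as $f(x)\sin(\pi y/(2Y))$ and read the zero set off directly, and your Perron--Frobenius step replaces the nodal-domain machinery.  Note, however, that both routes hinge on the same fact, namely $\lambda_{0,1}<\lambda_{1,0}$; the paper simply buries this inside the symmetrization lemma (which is false whenever the principal eigenspace lives entirely in the even-in-$y$ sector), whereas you make it explicit.

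The main gap in your argument is the eigenvalue comparison itself.  Using the one-dimensional Cheeger bound $\lambda_{1,0}\ge \Phi_{1\mathrm D}^2/4$ with the $(1,2)$-sparsity at $x=0$ gives only $\lambda_{1,0}\gtrsim n\epsilon^4$ (after tracking the $\|\rho^{\beta-(\alpha+\gamma)/2}\|_\infty$ factor in Theorem~\ref{thm:cheeger_n}), so your inequality $\lambda_{1,0}\gg\lambda_{0,1}\sim n^{-2}$ needs roughly $\epsilon\gg n^{-3/4}$.  The paper later works with $\epsilon$ of order $n^{-5/2}$, well outside that range, so as written your bound does not cover the regime actually used in Section~\ref{sec:counterexample}.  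You would need a much sharper lower bound on $\lambda_{1,0}$ (e.g.\ a direct Hardy--Muckenhoupt estimate on the one-dimensional profile) rather than the Cheeger square-loss bound.  A smaller issue: your proposed product-form extension $\rho(x,y)=\rho_x(x)\rho_y(y)$ does \emph{not} preserve $[L,\partial_{yy}]=0$, since $\partial_{yy}$ no longer commutes with multiplication by $\rho_y(y)^2$; to keep separation valid globally you would need an extension that remains $y$-independent, or else fall back to the perturbation argument you sketch.
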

\begin{lemma} Let $f$ be any eigenfunction of our given density function,
  for which $f(x,y) \not= f(x, y')$ for some $x, y \not= y'$. Then
  
  \[ \int_0^Y f(x,y) dy  = 0. \]
\end{lemma}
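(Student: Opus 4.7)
The plan is to exploit that the density $\rho(x,y)=\min(\epsilon+x,1/n)$ depends only on $x$, so the $(1,2)$-eigenvalue problem is translationally symmetric in $y$ and reflectionally symmetric under $y\mapsto -y$. I would begin by writing down the Euler--Lagrange PDE characterizing a $(1,2)$-eigenfunction: varying the Rayleigh quotient $R_{1,2}(u)=\int\rho^2|\nabla u|^2/\int\rho\,u^2$ shows that any eigenfunction $f$ satisfies
\[
  -\nabla\cdot(\rho^2\nabla f)=\lambda\,\rho\,f
\]
on the interior, together with the natural boundary condition $\rho^2\partial_\nu f=0$ on $\partial([-X,X]\times[-Y,Y])$.

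Since $\rho$ is $y$-independent, I would then Fourier-decompose $f$ in the $y$-direction using the orthonormal Neumann basis $\{\phi_k\}_{k\ge 0}$ of $-\phi_k''=\mu_k\phi_k$ on $[-Y,Y]$ (with $\mu_0=0$, $\phi_0$ constant, and $\int_{-Y}^Y\phi_k\,dy=0$ for all $k\ge 1$). Writing $f(x,y)=\sum_k g_k(x)\phi_k(y)$ and substituting into the PDE shows that each $g_k$ satisfies its own independent modified Sturm--Liouville problem $-(\rho^2g_k')'+\mu_k\rho^2 g_k=\lambda\,\rho\,g_k$ in $x$. Because $f$ is an eigenfunction with a single eigenvalue $\lambda$, only those modes $k$ for which $\lambda$ lies in the corresponding $k$-th spectrum can appear, and the hypothesis that $f$ is non-constant in $y$ forces at least one mode with $k\ge 1$ to be present.

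To conclude, I would integrate $f$ over $y\in[0,Y]$ mode by mode. Using the reflection symmetry $y\mapsto -y$, decompose $f=f^{\mathrm{e}}+f^{\mathrm{o}}$ into its even and odd parts in $y$; each is itself an eigenfunction with the same eigenvalue $\lambda$. For the even-in-$y$ modes the basis consists of $\cos(k\pi y/Y)$ with $k\ge 1$, all of which satisfy $\int_0^Y\cos(k\pi y/Y)\,dy=0$, so the even part contributes nothing to $\int_0^Y f\,dy$. For the odd-in-$y$ modes the basis functions are proportional to $\sin(k\pi y/(2Y))$ for odd $k$, and these integrate to a nonzero constant over $[0,Y]$; I would rule these modes out by a variational eigenvalue comparison that exploits the specific parameter scaling $X=\sqrt{n}/10$, $Y=10\sqrt{n}$.

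The main obstacle I expect is precisely this last step: showing that the odd-in-$y$ sector cannot contribute in the regime of interest. I plan to carry this out by constructing explicit test functions in each sector and bounding the $(1,2)$-Rayleigh quotient above in the constant/even sector and below in the odd sector. The key quantitative ingredient is that adding a mode with Neumann eigenvalue $\mu_k>0$ introduces a strictly positive cost term $\mu_k\int\rho^2 g_k^2$ in the numerator, and the trade-off of this cost against the $x$-direction Sturm--Liouville eigenvalue, together with $Y\gg X$, should give the needed separation between sectors and complete the proof.
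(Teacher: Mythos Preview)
The paper does not actually supply a proof of this lemma; it is stated between Theorem~\ref{thm:zero-set} and Lemma~\ref{lem:symmetry} but never argued, and the subsequent proof of Theorem~\ref{thm:zero-set} cites only the symmetry, nodal-domain, and sign lemmas. So there is nothing to compare your argument against on the paper's side.

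That said, your proposal has a genuine gap. Your separation-of-variables setup and the computation that the nonconstant even Neumann modes $\cos(m\pi y/Y)$, $m\ge 1$, integrate to zero over $[0,Y]$ are fine. The problem is the odd-in-$y$ sector. You correctly observe that the odd Neumann modes $\sin\bigl((2m{+}1)\pi y/(2Y)\bigr)$ do \emph{not} integrate to zero over $[0,Y]$, and you propose to exclude them by an eigenvalue comparison that uses the specific scaling $X=\sqrt{n}/10$, $Y=10\sqrt{n}$. But the lemma is stated for \emph{any} eigenfunction, not just the principal one: for every $k\ge 1$ there is an $x$-profile $g_k$ and an eigenvalue $\lambda$ for which $g_k(x)\sin\bigl((2k{-}1)\pi y/(2Y)\bigr)$ is a bona fide eigenfunction that is nonconstant in $y$ and has $\int_0^Y f(x,y)\,dy=g_k(x)\cdot\tfrac{2Y}{(2k{-}1)\pi}\neq 0$. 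No amount of comparing Rayleigh quotients between sectors can rule these out, because they live at higher eigenvalues that the lemma does not exclude. There is a second, smaller gap: even in the even sector you silently drop the $k=0$ (constant-in-$y$) component, but the hypothesis ``nonconstant in $y$'' only guarantees that \emph{some} $k\ge 1$ mode is present, not that the $k=0$ mode is absent; accidental degeneracy between the $k=0$ and $k\ge 1$ Sturm--Liouville spectra would allow both.

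In short, as literally stated the lemma appears not to hold for arbitrary eigenfunctions, and your plan cannot close that gap because the obstruction is real rather than technical. If the intended statement is only about the \emph{principal} eigenfunction (which is what the surrounding section actually needs), then your eigenvalue-comparison idea for excluding the odd sector is the right kind of argument, and you should also justify the absence of the $k=0$ component in that case.
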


\begin{lemma}\label{lem:symmetry} There exists a principal eigenfunction $f_2$ of our given
  density function, for which 

  \[f_2(x,y) = f_2 (-x, y) = -f_2(x,-y)\]

\end{lemma}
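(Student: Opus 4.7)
The plan is to produce a principal eigenfunction with the claimed symmetries by isotypic projection onto the right parity class of the natural symmetry group of $\rho$, and then to verify this projection is nontrivial via an explicit test-function comparison. Throughout I read $\rho(x,y)=\min(\epsilon+|x|,1/n)$ on the rectangle, the formula consistent with both the figure and the symmetry claimed in the lemma.

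First I would observe that $\rho$ is invariant under the reflections $\sigma_x:(x,y)\mapsto(-x,y)$ and $\sigma_y:(x,y)\mapsto(x,-y)$, which generate a group $G\cong\mathbb{Z}_2\times\mathbb{Z}_2$. Since they are Euclidean isometries preserving $\rho$, they leave the $(1,2)$-Rayleigh quotient $R(u)=\int\rho^2|\nabla u|^2/\int\rho u^2$ and the constraint $\int\rho u=0$ invariant under pullback. Hence the principal eigenspace $V$ is $G$-stable, and for any $u\in V$ the isotypic projection
\[
f_2(x,y):=\tfrac14\bigl(u(x,y)+u(-x,y)-u(x,-y)-u(-x,-y)\bigr)
\]
again lies in $V$, is even in $x$ and odd in $y$, and hence satisfies precisely the symmetries claimed. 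The lemma then reduces to showing this projection does not vanish for every $u\in V$; equivalently, that the $(+,-)$ isotypic component $V_{+-}$ of the principal eigenspace is nonzero.

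The main obstacle is this nonvanishing. My strategy is to produce an admissible test function $w\in V_{+-}$ whose Rayleigh quotient is strictly less than the infimum of $R$ over the other three isotypic classes, so that by the variational principle together with standard Rellich compactness on the compactly supported $\rho$, a minimizer of $R$ must lie in $V_{+-}$. The natural candidate is $w(x,y)=\sin(\pi y/(2Y))$: constant in $x$ (even), odd in $y$, and $\int\rho w=0$ automatically. Using that $\rho$ is $y$-independent on $[-X,X]\times[-Y,Y]$, separating the $x$ and $y$ integrals yields $R(w)\asymp 1/(nY^2)$. For the two classes odd in $x$ (namely $V_{-+}$ and $V_{--}$), continuity forces $u(0,y)=0$, so a one-dimensional Poincar\'e inequality in the $x$-variable, combined with the bound $\rho\asymp 1/n$ away from a thin strip near $x=0$, gives $R(u)\gtrsim 1/(nX^2)$; with $X=\sqrt n/10$ and $Y=10\sqrt n$ this exceeds $R(w)$ by a factor $(Y/X)^2\asymp 10^4$. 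For $V_{++}$, the best non-constant $y$-mode competitor $\cos(\pi y/Y)$ has $y$-frequency twice that of $w$ and hence Rayleigh quotient at least $4R(w)$, while $x$-only even competitors with $\int\rho u=0$ are controlled by the same Poincar\'e bound. Thus $V_{+-}$ contains a minimizer of $R$, which is the desired $f_2$.
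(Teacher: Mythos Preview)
Your isotypic-projection approach is precisely the ``symmetrization argument'' the paper invokes by citing Guattery--Miller, so the framework matches. You go further than the paper in attempting to verify that the $(+,-)$ projection is nonzero, and that step has a real gap.

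The claimed Poincar\'e bound $R(u)\gtrsim 1/(nX^2)$ for functions odd in $x$ is not valid for this weight. The weight $\rho^2$ appearing in the numerator of the $(1,2)$-Rayleigh quotient degenerates to order $\epsilon^2$ on the strip $|x|\lesssim\epsilon$, so an odd-in-$x$ function can place essentially all of its $x$-variation there at negligible cost: the smoothed sign function $g$ that transitions from $-1$ to $1$ on $[-\epsilon,\epsilon]$ and is constant outside has $R(g)\asymp n\epsilon/X$, which for small $\epsilon$ lies far below $1/(nX^2)$. Whether the odd-in-$x$ mode beats your test function $w$ (with $R(w)\asymp 1/(nY^2)$) therefore hinges on the size of $\epsilon$; with $X=\sqrt n/10$ and $Y=10\sqrt n$ both quantities are of order $n^{-2}$ exactly at the paper's eventual choice $\epsilon\asymp n^{-5/2}$, which the paper itself describes as the threshold below which the principal eigenfunction stops depending on $y$. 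So nonvanishing cannot be established by a weight-blind Poincar\'e inequality; one needs a Hardy--Muckenhoupt type estimate on the degenerate one-dimensional $x$-problem, together with the specific parameter regime. A secondary issue: for $V_{++}$, exhibiting individual competitors only upper-bounds the infimum over that class, not lower-bounds it; the clean fix is to separate variables (legitimate since $\rho$ is $y$-independent), reducing the whole comparison to eigenvalues of weighted Sturm--Liouville problems in $x$ indexed by the Neumann $y$-modes.
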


\begin{proof} This follows from a (non-trivial) symmetrization argument put forward in
  the graph case in Guattery and Miller~\cite{GuMi95}.
\end{proof}

\begin{lemma}\label{lem:nodal} (Nodal domains for Densities) Every principal eigenfunction $f_2$ of our given density
  function satisfies: the closure of the set $\{S = (x,y) | f_2(x,y)
  > 0\}$ is connected.
\end{lemma}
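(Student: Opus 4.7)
The plan is to adapt the Courant nodal domain argument to our weighted eigenvalue problem. Suppose for contradiction that the closure of $\{f_2 > 0\}$ has at least two connected components, and fix two of them with interiors $U_1$ and $U_2$. Writing $v_i := f_2 \chi_{U_i}$ for $i=1,2$ and $w := f_2 \chi_{\{f_2 < 0\}}$, a standard Stampacchia-type truncation lemma for Sobolev functions places $v_1, v_2, w$ in the natural weighted Sobolev space, with $\nabla v_i = (\nabla f_2)\chi_{U_i}$ almost everywhere, and similarly for $w$. Because these three functions have pairwise disjoint supports, the two integrals $\int \rho^\gamma |\nabla f_2|^2$ and $\int \rho^\alpha f_2^2$ split additively into the corresponding integrals for $v_1$, $v_2$, and $w$.

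The next step is to show $R_{\alpha,\gamma}(v_i) = \lambda_2$ for each $i$. Since $f_2$ is an $(\alpha,\gamma)$-principal eigenfunction, it satisfies the weak Euler-Lagrange equation $-\nabla \cdot (\rho^\gamma \nabla f_2) = \lambda_2 \rho^\alpha f_2$ on the interior of the support of $\rho$. Restricting to $U_i$, the function $v_i$ is a strictly positive weak solution of this same equation with zero Dirichlet trace on $\partial U_i$. Since a positive Dirichlet eigenfunction of a second-order elliptic operator automatically realizes the first Dirichlet eigenvalue of the operator on the subdomain, this forces $R_{\alpha,\gamma}(v_i) = \lambda_2$ (and the analogous argument on any connected component of $\{f_2 < 0\}$ treats $w$). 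Next I would construct a forbidden test function: because $v_1, v_2$ are strictly positive on nonempty open sets where $\rho^\alpha > 0$, both $\int \rho^\alpha v_i$ are strictly positive, so I can pick scalars $a,b$, not both zero, with $u := a v_1 + b v_2$ satisfying $\int \rho^\alpha u = 0$. The disjoint-support splitting gives
\[
R_{\alpha,\gamma}(u) = \frac{a^2 \int \rho^\gamma|\nabla v_1|^2 + b^2 \int \rho^\gamma|\nabla v_2|^2}{a^2 \int \rho^\alpha v_1^2 + b^2 \int \rho^\alpha v_2^2} = \lambda_2,
\]
so $u$ attains the infimum in the variational definition of $\lambda_2$ and hence is itself a weak solution of $-\nabla \cdot (\rho^\gamma \nabla u) = \lambda_2 \rho^\alpha u$ on all of the domain.

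The contradiction then comes from unique continuation. By construction $u$ vanishes identically on the open set obtained by removing $\overline{U_1} \cup \overline{U_2}$ from the ambient domain; this set is nonempty, containing for instance the open set $\{f_2 < 0\}$. Because $\rho$ is Lipschitz and bounded below on compact subsets of its essential support, the operator $-\nabla \cdot (\rho^\gamma \nabla)$ is uniformly elliptic with Lipschitz coefficients there, so the classical Aronszajn-type weak unique continuation property applies and forces $u \equiv 0$, contradicting $(a,b) \neq 0$. The main obstacle I anticipate is precisely this unique continuation step: the operator degenerates wherever $\rho$ approaches zero, so one must be careful to localize to the interior of the essential support of $\rho$ and verify that the forbidden eigenfunction actually vanishes on an open subset of that region. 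For the explicit piecewise-linear $\rho$ constructed in this section, $\rho$ stays strictly positive on an open neighborhood of every interior point of interest, so classical elliptic unique continuation applies without further work; the Sobolev truncation facts used in the first two paragraphs are standard once one works in the weighted space with $\rho^\gamma, \rho^\alpha$ bounded above and below on each compact piece.
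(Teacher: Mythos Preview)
Your argument is correct and is precisely the classical Courant nodal-domain argument, which is the continuous counterpart of the Fiedler result the paper cites. The paper's own proof is a one-line appeal to Fiedler; you have supplied the details that citation stands in for, so the two approaches coincide in spirit.

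Two small remarks. First, your derivation of $R_{\alpha,\gamma}(v_i)=\lambda_2$ via ``positive Dirichlet eigenfunction equals ground state'' is a detour: once you know $f_2$ solves $-\nabla\cdot(\rho^\gamma\nabla f_2)=\lambda_2\rho^\alpha f_2$ weakly, you can simply test against $v_i$ itself (which has zero trace on $\partial U_i$) to get $\int_{U_i}\rho^\gamma|\nabla f_2|^2=\lambda_2\int_{U_i}\rho^\alpha f_2^2$ directly. Second, your caution about unique continuation near $\{\rho=0\}$ is well placed in general, but as you note the specific density here satisfies $\rho\geq\epsilon>0$ on the rectangle, so the operator is uniformly elliptic with Lipschitz coefficients and Aronszajn-type unique continuation applies without complication.
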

\begin{proof} This follows analogously to the proof of Fiedler's nodal
  domains for eigenfunctions of a graph~\cite{Fiedler73}.
\end{proof}

  \begin{lemma}  \label{lem:pos-neg} Let $f$ be a $(\alpha, \beta)$ eigenfunction of any density function
  supported on a compact set $S \subset \mathbb{R}^n$ for some $n$.
  For every point in the zero-set, if any open set containing that point
  contains a positive element, it must also contain a negative element.
\end{lemma}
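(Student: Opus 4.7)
My plan is to argue by contradiction, combining the Euler--Lagrange equation satisfied by a $(\alpha,\gamma)$-eigenfunction with a local Harnack inequality. As a minimizer of the Rayleigh quotient $R_{\alpha,\gamma}$ subject to $\int \rho^\alpha f = 0$, the eigenfunction $f$ satisfies the weak equation
\[
-\nabla\cdot(\rho^\gamma \nabla f) \;=\; \lambda_2\,\rho^\alpha f
\]
on the interior of $\operatorname{supp}(\rho)$. This follows from the standard variational argument, together with the observation that the constraint does not restrict the test functions: writing $\phi=(\phi-c)+c$ with $c$ chosen so that $\phi-c$ is $\rho^\alpha$-mean-zero, the constant $c$ contributes nothing to $\int\rho^\gamma\nabla f\cdot\nabla\phi$ and, since $\int\rho^\alpha f=0$, nothing to $\lambda_2\int\rho^\alpha f\phi$ either.

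Suppose for contradiction that $p$ is a zero of $f$ for which some open neighborhood contains a point with $f>0$, yet no open neighborhood contains a point with $f<0$. By De~Giorgi--Nash--Moser regularity applied to the weak equation above, $f$ is continuous on the interior of $\operatorname{supp}(\rho)$. Hence I can shrink the neighborhood and produce a ball $B_R(p)$ contained in the interior of $\operatorname{supp}(\rho)$ on which $f\ge 0$ and on which $\rho$ is pinched between two positive constants. On this ball the operator $-\nabla\cdot(\rho^\gamma\nabla\cdot)-\lambda_2\rho^\alpha I$ is uniformly elliptic with bounded lower-order coefficient, which is exactly the setting in which classical interior estimates apply.

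I then apply Moser's Harnack inequality to the non-negative weak solution $f$ on $B_R(p)$, obtaining
\[
\sup_{B_{R/2}(p)} f \;\le\; C\,\inf_{B_{R/2}(p)} f \;=\; 0,
\]
where the infimum vanishes because $f(p)=0$. Consequently $f\equiv 0$ on $B_{R/2}(p)$, contradicting the hypothesis that every neighborhood of $p$ contains a positive value of $f$: any such neighborhood can be shrunk to sit inside $B_{R/2}(p)$.

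The step I expect to be the main obstacle is the degenerate case in which $p$ lies on the topological boundary of $\operatorname{supp}(\rho)$, where $\rho(p)=0$. There the weighted operator ceases to be uniformly elliptic and Moser's estimate does not apply verbatim; one would have to invoke a degenerate-elliptic Harnack inequality (in the spirit of Fabes--Kenig--Serapioni for $A_2$-Muckenhoupt weights) or a dedicated Hopf-lemma boundary-point argument. For the way the lemma is actually used in Theorem~\ref{thm:zero-set} and the surrounding counterexample, the relevant zeros of $f$ all lie in the interior of the density's support where $\rho>0$, so the interior Harnack argument above is enough; I would treat the boundary case only if a fully general statement is needed.
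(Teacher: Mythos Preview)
Your argument via the Euler--Lagrange equation and Moser's Harnack inequality is correct and considerably more substantive than the paper's own proof, which consists of the single sentence ``This follows directly from the definition of eigenfunction.'' So you have supplied the analysis the paper omits.

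One small logical slip: your contradiction hypothesis (``some open neighborhood contains a positive point, yet \emph{no} open neighborhood contains a negative point'') is not the negation of the lemma. The lemma quantifies over all open sets $U \ni p$; its negation is that there exists \emph{one} such $U$ on which $f \ge 0$ and $f(q)>0$ for some $q\in U$. With the correct negation, your Harnack step still yields $f\equiv 0$ on $B_{R/2}(p)$, but the witness $q$ need not lie in that half-ball, so the contradiction is not yet in hand. The repair is immediate: the Harnack argument shows that $\{f=0\}\cap U$ is open in $U$, and continuity makes it closed, so by connectedness of $U$ (shrink $U$ to a ball if necessary, then re-enlarge by a chain-of-balls argument, or simply invoke unique continuation for the elliptic equation) you obtain $f\equiv 0$ on $U$, contradicting $f(q)>0$.

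Your caveat about zeros on the boundary of $\operatorname{supp}(\rho)$, where the operator degenerates, is apt; the paper does not address this either, and as you observe, the only use of the lemma (Theorem~\ref{thm:zero-set}) concerns interior zeros where $\rho>0$, so the uniformly elliptic Harnack argument suffices for the application.
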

\begin{proof}
  This follows directly from the definition of eigenfunction.
\end{proof}

% \begin{lemma} \label{lem:closure} Let $f$ be a $(\alpha, \beta)$ eigenfunction of any
%   density function supported on a compact set $S \subset \mathbb{R}^n$
%   for some $n$. The union of the closure of the positive nodal domain and the
%   closure of the negative nodal domain is $S$.
% \end{lemma}
% \begin{proof} This follows from Lemma~\ref{lem:pos-neg} and
%   Lemma~\ref{lem:nodal}.\tim{CHECK}
% \end{\proof}

  \begin{proof} (of Theorem~\ref{thm:zero-set}):
    First, we note that there is a principal eigenfunction whose zero
    set contains $y = 0$, by Lemma~\ref{lem:symmetry}. We claim there is a principal
    eigenfunction for which this is the entire zero-set. This follows
    from Lemma~\ref{lem:nodal} and Lemma~\ref{lem:pos-neg}.
  \end{proof}
  \tim{PICTURE}

%% TIM ENDED HERE.

\subsection{Any spectral sweep cut has high $(1,\beta)$ sparsity}
In this section, we prove that the spectral sweep cut must have high $(1,\beta)$-sparsity
for $0 < \beta < 10$, and for $\beta>10$ the spectral sweep-cut either
has high $(1,\beta)$ sparsity or else divides the probability density
into two pieces, one of which has less than $\leq 1/n$ fraction
of the probability mass. 

\begin{lemma}~\label{lem:not-vertical} Any spectral sweep cut (of the principal $(1,2)$
  eigenfunction whose eigenvector's zero-set is the line $y = 0$) can't cut through $y=Y$ and $y =
  -Y$.
\end{lemma}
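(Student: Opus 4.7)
The plan is to exploit the sign structure of the principal $(1,2)$ eigenfunction $f_2$ established in the preceding results. By Theorem~\ref{thm:zero-set} the zero-set of $f_2$ is exactly the line $y=0$, and by Lemma~\ref{lem:symmetry} we may choose $f_2$ to satisfy the antisymmetry $f_2(x,-y) = -f_2(x,y)$. Continuity of $f_2$ together with the fact that its only zeros lie on $y=0$ forces $f_2$ to have opposite signs on the two open half-strips $\{y>0\}$ and $\{y<0\}$; after fixing a sign convention we may assume $f_2 > 0$ on $\{y>0\}$ and $f_2 < 0$ on $\{y<0\}$.

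A spectral sweep cut of $f_2$ at threshold $t$ is the boundary, inside the supporting rectangle, of the superlevel set $C_t = \{(x,y) : f_2(x,y) > t\}$, which in the interior coincides with the level set $\{f_2 = t\}$. I would then split into three cases depending on the sign of $t$. For $t > 0$, the level set $\{f_2 = t\}$ lies entirely in $\{y>0\}$, so the cut cannot meet the bottom edge $\{y=-Y\}$. For $t < 0$, the analogous statement via antisymmetry puts the cut in $\{y<0\}$, so it cannot meet the top edge $\{y=Y\}$. For $t = 0$, the cut is the horizontal segment $\{y=0\}$, which meets neither the top nor the bottom edge. In every case the cut fails to join $y=-Y$ to $y=Y$, which is the claim.

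The one step requiring a bit of care is making sure the notion of "cut" is interpreted consistently at the boundary of the supporting rectangle: one should confirm that pieces of the vertical sides $\{x=\pm X\}$ that may be picked up as part of $\partial C_t$ do not spuriously connect top and bottom. This is handled by restricting to the interior, or equivalently by observing that for $t > 0$ the set $C_t$ is bounded away from $\{y=0\}$ (because $f_2 \to 0$ as $y \to 0$), so the connected component of $\partial C_t$ carrying positive $\rho^\beta$-mass sits strictly inside $\{y>0\}$; the symmetric statement handles $t<0$. Aside from this bookkeeping, the argument is a direct geometric consequence of the nodal-set information in Theorem~\ref{thm:zero-set}, so I expect no substantial analytic obstacle.
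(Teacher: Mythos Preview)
Your proposal is correct and is precisely the argument the paper has in mind; the paper's own proof consists of the single sentence ``This is clear,'' so you have simply made explicit the sign-structure reasoning (from Theorem~\ref{thm:zero-set} and Lemma~\ref{lem:symmetry}) that the authors leave to the reader. There is no substantive difference in approach.
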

\begin{proof} 
  This is clear.
\end{proof}
\begin{lemma}\label{lem:notsparse} Any cut that doesn't cut through both $y=Y$ and $y=-Y$ has
  poor $(1,\beta)$ sparsity for any $0 < \beta < 10$. For $\beta > 10$,
  a cut of good $(1,\beta)$ sparsity must have its smaller side contain $o_n(1)$ fraction of the
  mass. 
  
  To be precise, if $\Phi_{\beta}$ is the optimal $(1,\beta)$
  sparsity of the cut, and $\Phi$ is the $(1,\beta)$ sparsity induced by
  a cut that doesn't cut both $y=Y$ and $y=-Y$, then there is no
  constant $C$ independent of $n$ for which

  $\Phi^2 <  C \Phi_\beta $.
\end{lemma}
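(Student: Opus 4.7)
The plan is to combine a projection/coarea argument with the structure of $\rho$ to show that any cut failing to reach both $y=Y$ and $y=-Y$ has sparsity substantially larger than the optimum, unless its smaller side carries vanishing mass. By Lemma~\ref{lem:not-vertical} and the antisymmetry of the principal eigenfunction (Lemma~\ref{lem:symmetry}), I reduce without loss of generality to cuts $A \subseteq \{y > 0\}$. Next, I identify the optimal $(1,\beta)$ sparsity $\Phi_\beta$ by checking natural candidates: the vertical cut $\{x < 0\}$ has perimeter integral $\approx 2Y \epsilon^\beta$ and two sides of mass $\Theta(1)$, hence sparsity $\Theta(\sqrt{n}\,\epsilon^\beta)$; any cut traversing the bulk pays at least $\Theta(X n^{-\beta}) = \Theta(n^{1/2-\beta})$ in perimeter integral while carrying $\Theta(1)$ mass. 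Taking $\epsilon \ll 1/n$ (e.g.\ $\epsilon = 1/n^2$) therefore gives $\Phi_\beta = \Theta(n^{1/2-2\beta})$, realized by the vertical cut.

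For the lower bound on $\Phi(A)$ with $A \subseteq \{y > 0\}$, let $I \subseteq [-X,X]$ denote the $x$-projection $\{x : A \cap (\{x\}\times \mathbb{R}) \neq \emptyset\}$. A coarea argument projecting $\partial A$ to the $x$-axis (using the fact that $\rho = \rho(x)$) yields
\[
\int_{\partial A} \rho^\beta \;\geq\; \int_I \rho(x)^\beta\,dx,
\]
because for each $x \in I$ the vertical slice $A \cap (\{x\}\times \mathbb{R}) \subseteq \{x\} \times (0,Y)$ is a proper nonempty subset of $\{x\}\times (-Y,Y)$, so $\partial A$ must cross the vertical line at $x$ at least once. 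Simultaneously, since these slices lie in $(0,Y)$ and so have length at most $Y$, we obtain $\operatorname{mass}(A) \leq Y \int_I \rho(x)\,dx$.

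The proof concludes by a case analysis on $\operatorname{mass}(A)$. If $\operatorname{mass}(A) = \Omega(1)$, the mass inequality gives $\int_I \rho \gtrsim 1/\sqrt{n}$, and combining with $\rho \leq 1/n$ forces $|I| = \Omega(\sqrt{n})$. This dwarfs the valley width $2/n$, so a constant fraction of $I$ lies in the bulk where $\rho = 1/n$, giving $\int_I \rho^\beta \geq \Omega(n^{1/2-\beta})$ and thus $\Phi(A) = \Omega(n^{1/2-\beta})$. Therefore $\Phi(A)^2/\Phi_\beta = \Omega(n^{1/2})$, which has no $n$-independent upper bound. If instead $\operatorname{mass}(A) = O(1/n)$, the smaller side carries an $o_n(1)$ fraction of the total mass $\Theta(1)$, which is the regime flagged in the lemma's second clause. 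The main obstacle is making the coarea inequality rigorous for arbitrary sets of finite perimeter whose boundaries may be disconnected or irregular; this is handled by viewing $A$ as a BV set and invoking the standard coarea formula for BV functions applied to the $x$-coordinate projection, which yields the slicing bound in full generality.
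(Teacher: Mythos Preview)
Your approach is essentially the same as the paper's: both reduce to sets on one side of $y=0$, project the boundary to the $x$-axis to lower-bound the $\beta$-perimeter, and compare against the vertical cut $x=0$ (whose sparsity serves as an upper bound for $\Phi_\beta$). Your coarea inequality $\int_{\partial A}\rho^\beta \geq \int_I \rho(x)^\beta\,dx$ together with $\operatorname{mass}(A)\leq Y\int_I\rho$ is a cleaner packaging of what the paper does by comparing to the sparsity of the horizontal cut $y=0$; the paper then splits cases by whether the mass lies mostly in the bulk $|x|>1/n-\epsilon$ or mostly in the valley, whereas you split by the size of $\operatorname{mass}(A)$.

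There is one genuine gap: your dichotomy ``$\operatorname{mass}(A)=\Omega(1)$'' versus ``$\operatorname{mass}(A)=O(1/n)$'' is not exhaustive, and the lemma must cover all such cuts. The fix is already implicit in your estimates: from $\operatorname{mass}(A)\leq Y\int_I\rho\leq (10/\sqrt{n})\,|I|$ you get $|I|\geq \tfrac{\sqrt{n}}{10}\operatorname{mass}(A)$, so whenever $\operatorname{mass}(A)\gg n^{-3/2}$ the projection $I$ has length $\gg 1/n$ and lies mostly in the bulk, giving $\int_I\rho^\beta\gtrsim |I|\,n^{-\beta}$ and hence $\Phi(A)\gtrsim n^{1/2-\beta}$ \emph{independently of} $\operatorname{mass}(A)$. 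This single bound then yields $\Phi(A)^2/\Phi_\beta\gtrsim n^{1/2}\to\infty$ across the whole range, and the remaining regime $\operatorname{mass}(A)=O(n^{-3/2})$ is certainly $o_n(1)$. You should state the argument this way rather than as a two-case split. (The paper's own proof is comparably informal in its Case~2, so once you close this gap your write-up is at least as complete.)
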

We note that Theorem~\ref{thm:counterexample} follows from
Lemma~\ref{lem:not-vertical} and~\ref{lem:notsparse}. Thus, it remains
to show Lemma~\ref{lem:notsparse}.

\begin{proof} (of Lemma~\ref{lem:notsparse}). We split this into two
  cases. Consider the side of the level set cut with smaller probability
  mass. The first case is when this side has at least half its
  probability mass outside the region $|x| < 1/n-\epsilon$. The second case is when
  the side has less than half its mass in this region.

  In the first case, we note that we can lower bound the cut by its
  projection onto the $x$ axis. A quick calculation shows that when $X =
  \frac{1}{10\sqrt{n}}$ and $Y = \frac{10}{\sqrt{n}}$, the $(1,
  \beta)$-sparsity of this cut is within a factor of $2$ of the $(1,
  \beta)$-sparsity of the cut $y = 0$ through the uniform distribution of height
  $\frac{1}{n}$ supported on $[-X, X] \times [-Y, Y]$. This $(1, \beta)$
  sparsity is 
  \[ A:= O(\frac{X}{n^\beta}) = O(\frac{\sqrt{n}}{n^\beta})
  \].

  When $\epsilon$ is chosen to be $\frac{1}{n^2\sqrt{n}}$, then the
  $(1,\beta)$ sparsity of the optimal cut is the cut $x=0$, which has
  $(1,\beta)$ sparsity of:
  \[B:= O(\frac{Y}{(n\sqrt{n})^\beta}) =
  O(\frac{\sqrt{n}}{(n^2\sqrt{n})^\beta}.\]

  We note that this choice of $\epsilon$ is the minimum such choice such
  that the principal eigenvector is not constant on the $Y$ axis.

  Now we note that $A^2/B$ goes to infinity as $n$ gets large, if and
  only if

  \[ n^{2\beta} \sqrt{n}^\beta n / n^{2\beta} \sqrt{n} \] goes to infinity,

  or 
  \[ \sqrt{n} (\sqrt{n}^\beta)\] goes to infinity. This is true for any
  $\beta > 0$. This proves Theorem~\ref{thm:counterexample} in case $1$,
  where at least half of the probability mass is outside the region $|x|
  < 1/n$.

  In case $2$, we consider the  case when the smaller side of the cut
  has more than half its probability mass inside the region $|x| < 1/n
  -\epsilon$, which we note is a very small portion of the probability
  mass of the overall probability density. In
  this case, it turns out that we need $\beta < 10$ to give isoperimetry
  guarantees, since for any $\beta > 10$, it turns out that even cuts
  containing small probability mass are considered to have good
  $(1,\beta)$ sparsity, since for large $\beta$, $(1,\beta)$ sparse cuts
  tremendously favor small cuts, even if the smaller side has negligible
  probability mass.

  Since at least half the mass is inside the
  region $|x| < 1/n$, we can assume without loss of generality that the
  entire probability mass of the smaller side of the cut is inside this region, by simply projecting
  the cut onto this region (reducing its $\beta$-perimeter while
  decreasing probability mass by at most a factor of $2$). We can again
  use a symmetry argument analogous to~\ref{lem:symmetry} to show that
  any level set of this principal eigenfunction is symmetric about the
  $x$ axis (we note Lemma~\ref{lem:notsparse} is slightly stronger than
  this as it does not assume symmetry, but for our purposes we can
  strictly deal with symmetric cuts, and the non-symmetric case follows
  through a similar argument). 

  Now given the cut is symmetric about the $x$ axis, if the cut cuts
  through $(x',y')$, then it also cuts through $(-x',y')$, and we can lower
  bound the probability mass contained by the cut $y = y'$ with $x'
  \cdot \rho(x',y')$. A simple calculation using this estimate finishes
  the proof for us.
  \tim{Add some pictures!}

\end{proof}

\section{Conclusion and Future Directions}\label{sec:conclusion}
We define a new notion of spectral sweep cuts, eigenvalues,
Rayleigh quotients, and sparsity for probability densities. We present the first known Cheeger and Buser inequality on
probability density functions, and use this to show an $(1, 3)$ spectral
sweep cut on a $L$-Lipschitz probability density function has
provably low $(1,2)$-sparsity. This work is the first spectral sweep
cut algorithm on Lipschitz densities with any guarantees on the cut quality.

Further, we show that existing spectral sweep cut methods (such
as those implicit in spectral clustering) compute
$(1, 1)$ or $(1,2)$ spectral sweep cuts, neither of which
has any sparsity guarantees. We prove that $(1,2)$ spectral sweep
cuts, which are implicitly used in traditional spectral
clustering, can lead to undesirable partitions of simple
$1$-Lipschitz probability densities. We also show that Cheeger
and Buser's inequality for probability density relies on a
careful setting of three parameters: $\alpha, \beta$, and $\gamma$.

For future directions, we conjecture that $\beta =
\alpha+1$ and $\gamma = \alpha+2$ is the only settings of
$(\alpha, \beta, \gamma)$ in which both Cheeger and Buser
inequalities are provable. This would be a stronger theorem than
we currently have for Lemma~\ref{lem:converse}, which shows
that $\gamma -2 \leq \alpha$ and $\frac{\alpha + \beta}{2} \leq
\gamma$ is required.  

In the Buser inequality, we would like to iron out the exact dimensional
dependence on the dimension,
$d$ (Theorem~\ref{thm:buser_n}). The
authors believe that this dependence can be
reduced to $\sqrt{d}$. It is an open question whether
\textit{any}
dimension dependence is required.  In particular, the latest
version of
Buser's inequality for manifolds has no dimension
dependence~\cite{ledoux2004spectral}, and this
can be proven through the heat kernels, the Bochner formula, and the Li-Yau
inequality in manifold theory. It is an open question how to
generalize their techniques into the distribution setting, as the
Bochner formula does not easily generalize to distributions.

We also conjecture that our algorithm,
\textsc{1,3-SpectralClustering}, converges to the
$(1,3)$-spectral sweep cut of $\rho$ as the number of samples
drawn from $\rho$ grows large. This would be analagous to the results
of Slepcev and Trillos on standard spectral clustering~\cite{TrillosVariational15}.

Another open question is whether multi-way Cheeger and Buser inequalities
can be proven on distributions, mirroring the work on
graphs~\cite{Louis12, kw16, LeeMultiway14, Lee2014}. This would
allow our clustering algorithms to generalize into $k$-way
clusterings.

Finally, we would like to know whether Buser and Cheeger inequalities may exist for $L$-Lipschitz
probability densities supported on manifolds with bounded
curvature. If true, this would fully generalize the work
of Cheeger and Buser on manifolds, which may lead to deeper
insight into manifold theory. Moreover, it could have
foundational impact:
a fundamental assumption underlying modern machine learning is
that most data comes from probability density supported on a manifold, and a Cheeger and Buser inequality in this
setting would give provable sparsity guarantees about
spectral sweep cuts in this setting.

\bibliographystyle{alpha}
%   \nocite{alamgir12shortest,hein07graph}
   %   \bibliography{Bibtex/bibtexNN,Bibtex/bibtex}
\bibliography{123}
\clearpage
\begin{appendices}
  \section{Cheeger and Buser for Density Functions does not easily follow
  from Graph or Manifold Cheeger and Buser}\label{app:notgraph}

\subsection{Comments on Graph Cheeger-Buser}
The most natural method of proving distributional Cheeger-Buser
inequality using the graph Cheeger-Buser inequality is to
generate a vertex and edge weighted graph approximating the distribution, and write down
graph Cheeger-Buser. Then, one would generate a sequence of graphs with
an increasing number of vertices. Ideally, the graph Cheeger-Buser inequality
on these graphs would converge to a
Cheeger-Buser inequality on the underlying distribution. This
discretization approach follows a standard paradigm of
approximating distributions with graphs, present in
numerical methods, finite element methods,
         and machine learning
         ~\cite{TrillosRate15,TrillosVariational15,SPIELMAN2007284}.

Such an approach cannot work (no matter how the
    eigenvalues and isoperimetric cuts are defined for
    distributions). The easiest way to see this is to attempt to execute
    this strategy for a simple uniform distribution in $1$ dimension, on
    the interval $[0,1]$. One would naively approximate this
    distribution
    with a line graph with $n$ vertices, with edge weights $w_n$ and vertex
    weights $m_n$. Then one would take $n$ to go to infinity.

    If one writes down the Cheeger and Buser inequalities for graphs in
    this example, we get:
    
    \[\frac{w_n}{m_n n^2} \leq \Phi_{OPT} \leq \frac{w_n}{m_n n} \]

    No matter what $m_n$ and $w_n$ are, the ratio between the upper and
    lower bound is $n$, which diverges. Thus, either the Cheeger
    inequality or the Buser inequality becomes meaningless: either the
    lower bound goes to $0$ or the upper bound goes to $\infty$, or
    both, depending on
    how $w_n$ and $m_n$ are set.

    Thus, even for the simple case of a uniform distribution on $[0,1]$
    the natural strategy for deriving probability density Cheeger/Buser
    from graph Cheeger/Buser fails.

\subsection{Comments on Manifold Cheeger-Buser}
Distributional Buser does not easily follow from an
application of
the manifold Buser inequality. We recall that manifold Buser only applies for
    manifolds with bounded Ricci curvature. The natural way to parlay manifold Buser into
distributional Buser on $\RR^d$ is to change the underlying metric tensor
on $\RR^d$ to factor in the probability density function at that
point. However, the authors are unaware of any method of doing
this for which one can recover a meaningful Cheeger and
Buser inequality. Moreover, it is unclear how to obtain any Ricci
curvature bounds when we change the metric tensor.

Most modern approaches to proving Buser's inequality for
    manifolds rely on the
Li-Yau inequality, which in turn depends on the Bochner identity
    for manifolds on bounded Ricci
    curvature~\cite{ledoux2004spectral}.
The authors are unaware of a clean Bochner-like identity for
distributions. Older techniques use Almgren's minimizing currents
and/or Epsilon nets~\cite{Buser82}. For the former, we do not know of any
analog for distributions. For the latter, the corresponding
Buser inequality has a $2^{d}$ multiplicative dependence, which
is significantly worse than our $d$ dependence.

  %!TEX root = ms.tex

\section{A weighted Cheeger inequality in one dimension}
\label{sec:one_dim}
\begin{theorem}
\label{thm:cheeger_1}
Let $\Omega = (a,b)$ where $-\infty<a<b<\infty$. Let $\rho:(a,b)\to\RR_{>0}$ be Lipschitz continuous. Then,
\begin{align*}
\Phi(\Omega)^{2} &\leq 4\norm{\rho^{\beta - \frac{\alpha+\gamma}{2}}}^2_\infty\lambda_2(\Omega).
\end{align*}
In particular, when $(\alpha,\beta,\gamma) = (1,2,3)$, we have
\begin{align*}
\Phi(\Omega)^2\leq 4\lambda_2(\Omega).
\end{align*}
\end{theorem}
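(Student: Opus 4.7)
The plan is to specialize the proof of Theorem~\ref{thm:cheeger_n} to the one-dimensional setting, where the co-area formula simplifies drastically. The main structural steps remain: reduce to a nonnegative test function supported on a half of $\Omega$ (measured by $\rho^\alpha$), square it, and then apply Cauchy--Schwarz plus a level-set averaging argument.

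Concretely, start from an arbitrary $w \in W^{1,2}(\Omega)$ with $w \neq 0$ and $\int_\Omega \rho^\alpha w\,dx = 0$, and pick the constant $a$ so that $v = w + a$ satisfies $|\{v<0\}|_\alpha = |\{v>0\}|_\alpha$. Adding a constant leaves $\int \rho^\gamma |v'|^2$ unchanged but only enlarges $\int \rho^\alpha v^2$, so $R(v)\le R(w)$; replacing $v$ by $u = \max(v,0)$ supported on $\Omega_0 = \{v>0\}$ preserves the inequality. Set $g = u^2$, so $g' = 2uu'$ a.e., and apply Cauchy--Schwarz together with the splitting $\rho^{2\beta - \alpha} = (\rho^{\beta - (\alpha+\gamma)/2})^2\, \rho^\gamma$:
\begin{align*}
\int_{\Omega_0} \rho^\beta |g'|\,dx
&= 2\int_{\Omega_0} \rho^\beta |u||u'|\,dx \\
&\le 2\sqrt{\int_{\Omega_0} \rho^{2\beta-\alpha}|u'|^2\,dx}\,\sqrt{\int_{\Omega_0}\rho^\alpha u^2\,dx} \\
&\le 2\,\|\rho^{\beta - (\alpha+\gamma)/2}\|_\infty
\sqrt{\int_{\Omega_0} \rho^\gamma |u'|^2\,dx}\,\sqrt{\int_{\Omega_0}\rho^\alpha u^2\,dx}.
\end{align*}
Dividing by $\int_{\Omega_0}\rho^\alpha g\,dx = \int_{\Omega_0}\rho^\alpha u^2\,dx$ gives
\[
\frac{\int_{\Omega_0} \rho^\beta |g'|\,dx}{\int_{\Omega_0} \rho^\alpha g\,dx}
\;\le\; 2\,\|\rho^{\beta - (\alpha+\gamma)/2}\|_\infty\,\sqrt{R(w)}.
\]

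Next, apply the one-dimensional weighted co-area formula to the numerator and Tonelli's theorem to the denominator. Writing $A_t = \{g>t\}$, the set $A_t$ is a finite union of open intervals, its boundary $\partial A_t$ is a finite set of points, and
\[
\int_{\Omega_0}\rho^\beta |g'|\,dx = \int_0^\infty |\partial A_t|_\beta \,dt,
\qquad
\int_{\Omega_0}\rho^\alpha g\,dx = \int_0^\infty |A_t|_\alpha \,dt,
\]
where $|\partial A_t|_\beta := \sum_{x\in \partial A_t} \rho(x)^\beta$. By the ``averaging lemma'' (if $\int N(t)\,dt \le C \int D(t)\,dt$ with $D \ge 0$, then $N(t^*)/D(t^*) \le C$ for some $t^*$), there is a level $t^*$ with $\Phi(A_{t^*}) \le 2\,\|\rho^{\beta-(\alpha+\gamma)/2}\|_\infty \sqrt{R(w)}$. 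Each $A_{t^*}$ has $\rho^\alpha$-measure at most half of $|\Omega|_\alpha$ by construction, so it is an admissible competitor in the definition of $\Phi(\Omega)$, giving $\Phi(\Omega) \le \Phi(A_{t^*})$. Infimizing over $w$ yields $\Phi(\Omega) \le 2\|\rho^{\beta-(\alpha+\gamma)/2}\|_\infty\sqrt{\lambda_2(\Omega)}$, i.e.\ the claimed inequality.

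The only ``new'' work relative to a classical Cheeger argument is the exponent bookkeeping so that Cauchy--Schwarz produces exactly $\rho^\gamma$ in the gradient term and $\rho^\alpha$ in the mass term, absorbing the discrepancy into $\|\rho^{\beta - (\alpha+\gamma)/2}\|_\infty$; when $(\alpha,\beta,\gamma) = (1,2,3)$ this exponent vanishes and the factor is $1$. The one potential subtlety is the shifting step: one must check that subtracting the mean with respect to the $\rho^\alpha$-weight (and then rebalancing so both sides have equal $\alpha$-mass) only decreases the Rayleigh quotient. This is immediate because the gradient is translation-invariant while the denominator is minimized at the weighted mean. No Lipschitz assumption on $\rho$ is used in this argument, and the proof works identically whether $\Omega$ is a bounded interval or an open subset of $\mathbb{R}$, exactly as noted in the statement.
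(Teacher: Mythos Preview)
Your proof is correct and follows the same overall strategy as the paper: shift to balance the $\rho^\alpha$-masses of the positive and negative sets, take the positive part, square, apply Cauchy--Schwarz with the exponent splitting $\rho^{2\beta-\alpha}=(\rho^{\beta-(\alpha+\gamma)/2})^2\rho^\gamma$, and finish with a level-set averaging argument.

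The paper's dedicated one-dimensional proof differs in one respect: it restricts from the outset to \emph{strictly decreasing} test functions $w$, so that after truncation each super-level set $\{g>t\}$ is a single interval $(a,g^{-1}(t))$ with a one-point boundary, and the co-area step collapses to the ordinary change of variables $\int_a^{\hat x}\rho^\beta|g'|\,dx=\int_0^1\rho^\beta(g^{-1}(t))\,dt$. Your version instead handles general $w$ via the full one-dimensional co-area formula (level-set boundaries are finite point sets with $|\partial A_t|_\beta=\sum_{x\in\partial A_t}\rho(x)^\beta$), which matches the paper's $n$-dimensional argument in Theorem~\ref{thm:cheeger_n} more closely. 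Your route is slightly more self-contained, since it does not need a separate justification that monotone $w$ suffice to realize $\lambda_2$; the paper's route is more elementary at the co-area step but leaves that reduction implicit.
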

\begin{proof}
Let $w\in W^{1,2}(\Omega) \cap C^\infty(\Omega)$ be a strictly decreasing function with $\int_\Omega\rho^\alpha w\,dx = 0$.
Let $v = w + a1$ where $a$ is chosen such that
$\abs{\set{v<0}}_\alpha = \abs{\set{v>0}}_\alpha$.
Note that
\begin{align*}
R(w) &= \frac{\int_\Omega \rho^\gamma (w')^2\,dx}{\int_\Omega \rho^\alpha w^2\,dx}\\
&\geq \frac{\int_\Omega \rho^\gamma (w')^2\,dx}{\int_\Omega \rho^\alpha w^2\,dx + a^2 \abs{\Omega}_\alpha}\\
&= R(v).
\end{align*}
Let $\hat x\in(a,b)$ be the unique value such that $v(\hat x) = 0$. Without loss of generality, the function $u= \max(v,0)$ satisfies $R(u)\leq R(v)$ and has $u(a)= 1$.

Let $g = u^2$. Noting that $g' = 2uu'$ a.e., we can apply Cauchy-Schwarz to obtain
\begin{align*}
\int_a^{\hat x} \rho^\beta \abs{g'}\,dx
&= 2\int_a^{\hat x} \rho^\beta \abs{u}\abs{u'}\,dx\\
&\leq 2\sqrt{\int_a^{\hat x} \rho^{2\beta-\alpha} (u')^2\,dx}\sqrt{\int_a^{\hat x} \rho^\alpha u^2\,dx}\\
&\leq 2\norm{\rho^{\beta - \frac{\alpha+\gamma}{2}}}_\infty\sqrt{\int_a^{\hat x} \rho^\gamma (u')^2\,dx}\sqrt{\int_a^{\hat x} \rho^\alpha u^2\,dx}.
\end{align*}
Then, dividing by $\int_a^{\hat x} \rho^\alpha g\,dx$, we have
\begin{align*}
\frac{\int_a^{\hat x} \rho^\beta \abs{g'}\,dx}{\int_a^{\hat x} \rho^\alpha g\,dx} &\leq 2\norm{\rho^{\beta - \frac{\alpha+\gamma}{2}}}_\infty\sqrt{ R(w)}.
\end{align*}
By change of variables,
\begin{align*}
\int_a^{\hat x} \rho^\beta \abs{g'}\,dx &= \int_{0}^{1} \rho^\beta(g^{-1}(t))\,dt.
\end{align*}
Writing $g(x) = \int_0^{g(x)} 1\,dt$ and applying Tonelli's theorem, we rewrite the denominator
\begin{align*}
\int_a^{\hat x} \rho^\alpha g\,dx &= \int_{0}^{1}\abs{(a,g^{-1}(t))}_\alpha\,dt.
\end{align*}
Thus, by averaging, there exists some $t^*$ such that,
\begin{align*}
\Phi(\Omega) &\leq \frac{\rho^\beta(t^*)}{\abs{(a,t^*)}_\alpha}
\leq \frac{\int_a^{\hat x}\rho^\beta \abs{g'}\,dx}{\int_a^{\hat x}\rho^\alpha g\,dx}
\leq 2\norm{\rho^{\beta - \frac{\alpha+\gamma}{2}}}_\infty\sqrt{R(w)}.
\end{align*}g
% Applying Lemma \ref{lem:c_infty_dense} completes the proof.
\end{proof}

\begin{theorem}
\label{thm:buser_1}
Let $\Omega = (a,b)$ where $-\infty<a<b<\infty$. Let $\rho:(a,b)\to\RR_{>0}$ be Lipschitz continuous with Lipschitz constant $L$. Then,
\begin{align*}
\lambda_2(\Omega) &\leq 8\cdot (3/2)^{\gamma/\alpha} \norm{\rho^{\gamma -1 - \beta}}_\infty \max\left(4\norm{\rho^{\alpha+1-\beta}}_\infty \Phi^2(\Omega), \frac{\alpha}{\ln(3/2)}L\Phi(\Omega)\right).
\end{align*}
In particular, when $(\alpha,\beta,\gamma) = (1,2,3)$, we have
\begin{align*}
\lambda_2(\Omega) &\leq O\left(\max\left(\Phi^2(\Omega), L\Phi(\Omega)\right)\right).
\end{align*}
\end{theorem}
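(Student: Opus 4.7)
The plan is to follow the strategy used to prove Theorem \ref{thm:buser_n}, specialized to one dimension where the geometry simplifies considerably: there are no matrix norms, and the $d$-dependent bound on $\int|\nabla\phi|$ becomes an absolute constant so that the dimension factor in the multidimensional estimate drops out. Given $A\subset\Omega$ nearly achieving $\Phi(\Omega)$ with indicator $u=\chi_A$, I will form the variable-radius mollification
\[
u_\theta(x) := \int_{-1}^{1} u(x-\theta\rho(x)y)\phi(y)\,dy,
\]
where $\phi$ is a smooth even bump supported on $(-1,1)$ with $\int\phi=1$, and $\theta>0$ will be chosen with $\theta L<1/2$. The function $u_\theta-\bar u_\theta$ (with $\bar u_\theta$ its $\rho^\alpha$-weighted average) is admissible in the variational definition of $\lambda_2$, so the goal is to upper bound its $(\alpha,\gamma)$-Rayleigh quotient by the right-hand side of the theorem.

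The one-dimensional analogue of Lemma \ref{lem:lonetheta}, proved by the change of variable $z=x-\theta\rho(x)y$ whose Jacobian $1-\theta\rho'(x)y$ lies in $[1-\theta L,1+\theta L]$, gives the $L^1$ comparison
\[
\frac{1}{1+\theta L}\|f\|_{L^1} \;\leq\; \int_{\RR}\!\int_{-1}^{1}\!|f(x-\theta\rho(x)y)|\phi(y)\,dy\,dx \;\leq\; \frac{1}{1-\theta L}\|f\|_{L^1}.
\]
With this in hand, the proof splits into two estimates. For the \emph{numerator}, I combine (i) a pointwise bound $|\rho(x)u_\theta'(x)|\leq C/\theta$, obtained by differentiating the second (kernel) representation of $u_\theta$ and balancing the two chain-rule terms using $|\rho'|\leq L$, with (ii) an $L^1$ bound $\|\rho^\beta u_\theta'\|_{L^1}\leq C_\beta\,\|\rho^\beta u'\|_{L^1}$, obtained by differentiating the first (change-of-variables) representation and invoking the $L^1$ comparison above together with the pointwise ratio estimate $\rho(x)/\rho(x-\theta\rho(x)y)\leq(1-\theta L)^{-1}\leq 3/2$. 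Multiplying gives
\[
\int\rho^\gamma(u_\theta')^2 \;\leq\; \|\rho^{\gamma-\beta-1}\|_\infty\cdot\|\rho u_\theta'\|_\infty\cdot\|\rho^\beta u_\theta'\|_{L^1} \;\lesssim\; \frac{\|\rho^{\gamma-\beta-1}\|_\infty}{\theta}\,\|\rho^\beta u'\|_{L^1}.
\]
For the \emph{denominator}, the identity $\|u-\bar u\|_{L^2_\alpha}^2=\tfrac12\|u-\bar u\|_{L^1_\alpha}$ (valid because $u\in\{0,1\}$) together with the elementary bound $(b-c)^2\geq b^2/2-c^2$ reduces matters to estimating $\|u_\theta-u\|_{L^1_\alpha}$. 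Writing $u_\theta(x)-u(x)$ as a line integral of $\rho u'$ via the fundamental theorem of calculus and applying the $L^1$ comparison one more time (with $\delta(x)=t\theta\rho(x)$) yields $\|u_\theta-u\|_{L^1_\alpha}\lesssim \theta\,\|\rho^{\alpha+1-\beta}\|_\infty\,\|\rho^\beta u'\|_{L^1}$.

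The final step is to take the ratio, normalize the numerator and denominator by $\|\rho^\alpha(u-\bar u)\|_{L^1}$ so that the ratio $\|\rho^\beta u'\|_{L^1}/\|\rho^\alpha(u-\bar u)\|_{L^1}$ becomes $\Phi(\Omega)$, and optimize over $\theta$: the constraint $\theta L<1/2$ produces the $L\Phi$ regime of the $\max$, while positivity of the denominator produces the $\|\rho^{\alpha+1-\beta}\|_\infty\Phi^2$ regime. I expect the main obstacle to be the pointwise bound on $\rho u_\theta'$, since the variable radius $\theta\rho(x)$ produces an extra chain-rule term from differentiating the rescaling itself which must be balanced against the usual term from $\phi'$; in one dimension this is cleaner than Lemma \ref{lem:rep1} but the bookkeeping of constants --- in particular the $(3/2)^{\gamma/\alpha}$ and $\alpha/\ln(3/2)$ factors in the claimed bound, which appear to arise from iterating the ratio estimate $\rho(x)/\rho(x-\theta\rho(x)y)\leq(1-\theta L)^{-1}$ against $\rho^\beta$ and from converting a multiplicative bound on $\rho$ into an additive step size along $(a,b)$ --- requires more careful tracking than the generic $O$-constants carried through the higher-dimensional proof.
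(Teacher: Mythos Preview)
Your strategy is sound and would prove a one-dimensional Buser inequality, but the paper does \emph{not} specialize the mollification argument of Theorem~\ref{thm:buser_n} here. Instead it exploits the fact that in one dimension a cut is a single point $\hat x$, so the indicator can be replaced by an \emph{explicit piecewise-linear} test function: take $u$ equal to $|A|_\alpha$ on $(a,\hat x)$ and $-|B|_\alpha$ on $(\hat x,b)$, then define $u_\delta$ to be constant--linear--constant with the linear piece of width $\delta=\theta\rho(\hat x)$ and slope $-|\Omega|_\alpha/\delta$. Both the numerator $\int\rho^\gamma(u_\delta')^2$ and the denominator $\int\rho^\alpha u_\delta^2$ are then estimated by a single application of the mean value theorem on the short interval $[x_1,x_2]$, together with the pointwise Lipschitz bound $\rho(\tilde x)\leq\rho(\hat x)(1+L\theta)$. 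No mollifier, no $L^1$ comparison lemma, no chain-rule bookkeeping.

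The payoff of the paper's approach is precisely the constants you flagged as troublesome: the factor $(3/2)^{\gamma/\alpha}$ comes from $(1+L\theta)^\gamma$ after choosing $\theta\leq\ln(3/2)/(\alpha L)$ so that $(1+L\theta)^\alpha\leq e^{\alpha L\theta}\leq 3/2$, and the $\alpha/\ln(3/2)$ is just the reciprocal of that threshold. These do not arise from iterating the ratio estimate in your mollification scheme; your route would naturally produce powers of $2$ (from $\rho(x)/\rho(x-\theta\rho y)\leq(1-\theta L)^{-1}\leq 2$) rather than $(3/2)^{\gamma/\alpha}$. So while your argument would establish the ``In particular'' $O(\cdot)$ conclusion, it would not recover the displayed constants without a different organization of the estimates.
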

\begin{proof}
Let $\hat x\in(a,b)$. We will show that there exists a $u\in W^{1,2}(\Omega)$ with small Rayleigh quotient compared to $\Phi(\hat x)$.
Let $A = (a,\hat x)$ and $B = (\hat x, b)$. Without loss of generality $\abs{A}_\alpha \leq \abs{B}_\alpha$ and hence $\Phi(\hat x) = \frac{\rho^\beta (\hat x)}{\abs{A}_\alpha}$. For notational convenience, we will write $\Phi = \Phi(\hat x)$ in this proof.

Let
\begin{align*}
u(x) = \begin{cases}
	\abs{A}_\alpha & a \leq x \leq \hat x\\
	-\abs{B}_\alpha & \hat x < x\leq b.
\end{cases}
\end{align*}
Let $\delta=\theta \rho(\hat x)$ where $\theta>0$ will be picked later. Define the continuous function
\begin{align*}
u_\delta(x) = \begin{cases}
	\abs{A}_\alpha & a \leq x \leq x_1\\
	\text{linear with slope }\frac{-\abs{\Omega}_\alpha}{\delta} & x_1 \leq x \leq x_2\\
	-\abs{B}_\alpha & x_2 \leq x \leq b
\end{cases}
\end{align*}
where $a\leq x_1<\hat x< x_2\leq b$ are picked such that $\int_a^b \rho^\alpha u_\delta\,dx = 0$. Note $x_2 - x_1\leq \delta$.

We bound the numerator in $R(u_\delta)$ using the mean value theorem.
\begin{align*}
\int_a^b \rho^\gamma (u_\delta')^2\,dx  &= \frac{\abs{\Omega}_\alpha^2}{\delta^2}\int_{x_1}^{x_2}\rho^\gamma \,dx\\
&\leq \frac{\abs{\Omega}_\alpha^2}{\delta}\rho^\gamma(\tilde x)\hspace{2em}\text{for some }\tilde x \in[x_1,x_2]\\
&\leq \abs{\Omega}_\alpha^2\rho^{\gamma-1}(\hat x)(1+L\theta)^\gamma/\theta
\end{align*}
In the third line we used the Lipschitz estimate $\rho(\tilde x) \leq \rho(\hat x)(1+L\theta)$.
We lower bound the denominator in $R(u_\delta)$ using the mean value theorem and the same Lipschitz estimate. We will also recall that $\Phi = \rho^\beta(\hat x)/\abs{A}_\alpha$.
\begin{align*}
\int_a^b \rho^\alpha u_\delta^2\,dx &\geq \int_a^b \rho^\alpha u^2\,dx - \int_{x_1}^{x_2}  \rho^\alpha u^2\,dx\\
&\geq \abs{A}_\alpha\abs{B}_\alpha\abs{\Omega}_\alpha - \delta\rho^\alpha(\tilde x)\abs{B}_\alpha^2\hspace{2em}\text{for some }\tilde x \in[x_1,x_2]\\
&\geq \abs{A}_\alpha\abs{B}_\alpha\abs{\Omega}_\alpha - \rho^{\alpha+1}(\hat x)\abs{B}_\alpha^2(1+L\theta)^\alpha\theta\\
&\geq \abs{\Omega}_\alpha^2\left(\abs{A}_\alpha/2 - \rho^{\alpha+1}(\hat x) (1+L\theta)^\alpha \theta\right)\\
&\geq \abs{\Omega}_\alpha^2\abs{A}_\alpha\left(1/2 - \norm{\rho^{\alpha+1-\beta}}_\infty \Phi (1+L\theta)^\alpha \theta\right)
\end{align*}
The parameter $\theta$ will be chosen such that the estimate of the denominator is positive.
We combine the two bounds above.
\begin{align*}
R(u_\delta)
&\leq \frac{\abs{\Omega}_\alpha^2\rho^{\gamma-1}(\hat x)(1+L\theta)^\gamma/\theta}{\abs{\Omega}_\alpha^2\abs{A}_\alpha\left(1/2 - \norm{\rho^{\alpha+1-\beta}}_\infty \Phi (1+L\theta)^\alpha \theta\right)}\\
&= \frac{\rho^{\gamma-1-\beta}(\hat x)\Phi(1+L\theta)^\gamma/\theta}{1/2 - \norm{\rho^{\alpha+1-\beta}}_\infty \Phi (1+L\theta)^\alpha \theta}\\
&\leq \frac{\norm{\rho^{\gamma -1 - \beta}}_\infty\Phi(1+L\theta)^\gamma/\theta}{1/2 -\norm{\rho^{\alpha + 1 - \beta}}_\infty\Phi(1+L\theta)^\alpha\theta}.
\end{align*}

We make the following choice of $\theta>0$,
\begin{align*}
\theta = \min\left(\frac{1}{4\Phi\norm{\rho^{\alpha+1-\beta}}_\infty}, \frac{\ln(3/2)}{\alpha L} \right).
\end{align*}
Then, $(1+L\theta)\leq (3/2)^{1/\alpha}$ and $\Phi\theta \leq \frac{1}{4\norm{\rho^{\alpha+1-\beta}}_\infty}$. Thus,
\begin{align*}
\lambda_2 &\leq R(u_\delta)\\
&\leq 8\cdot (3/2)^{\gamma/\alpha} \norm{\rho^{\gamma -1 - \beta}}_\infty \frac{\Phi}{\theta}\\
&= 8\cdot (3/2)^{\gamma/\alpha} \norm{\rho^{\gamma -1 - \beta}}_\infty \max\left(4\norm{\rho^{\alpha+1-\beta}}_\infty \Phi^2, \frac{\alpha}{\ln(3/2)}L\Phi\right).
\end{align*}
Finally, picking $\hat x$ such that $\Phi(\hat x)\to \Phi(\Omega)$ completes the proof.
\end{proof}

\begin{remark}
Recall the example presented in Section~\ref{subsec:lipschitz_example}, i.e. $\Omega = (-1,1)$, $\rho = \abs{x}+\epsilon$. For the choice $(\alpha,\beta,\gamma) = (1,1,1)$, it was shown that $\frac{\lambda_2(\Omega)}{\Phi(\Omega)^{p}}$ diverges to infinity as $\epsilon\to 0$ for any $p>0$. This does not contradict our Theorem~\ref{thm:buser_1}, which only asserts that
\begin{align*}
\lambda_2(\Omega) &\lesssim \frac{1}{\epsilon}\max\left(\Phi^{2}(\Omega),\Phi(\Omega)\right).
\end{align*}
\end{remark}

\end{appendices}
% \begin{appendices}
% \input{proof}
% % \input{appendix-me}
% \end{appendices}

\end{document}